\DeclareMathOperator*{\argmax}{argmax}
\DeclareMathOperator*{\argmin}{argmin}
\DeclareMathOperator*{\diag}{diag}
\DeclareMathOperator*{\F}{F}
\begin{document}

\title{Learning Data-adaptive Non-parametric Kernels}

\author{\name Fanghui Liu~\thanks{The bulk of this work was performed while Fanghui was a PhD student at Shanghai Jiao Tong University.}
	\email fanghui.liu@kuleuven.be \\
	\addr Department of Electrical Engineering, ESAT-STADIUS, KU Leuven, B-3001, Belgium
	\AND
	\name Xiaolin Huang \email xiaolinhuang@sjtu.edu.cn\\
	\addr Institute of Image Processing and Pattern Recognition\\
	 Institute of Medical Robotics, Shanghai Jiao Tong University, Shanghai, 200240, China
	\AND
	\name Chen Gong \email chen.gong@njust.edu.cn \\
	\addr PCA Lab, Key Laboratory of Intelligent Perception and Systems for High-Dimensional Information of Ministry of Education\\
	 School of Computer Science and Engineering, Nanjing University of Science and Technology, 210094, China \\
	Department of Computing, Hong Kong Polytechnic University, Hong Kong SAR, China.
	\AND
	\name Jie Yang~
	\email jieyang@sjtu.edu.cn \\
	\addr Institute of Image Processing and Pattern Recognition\\
	 Institute of Medical Robotics, Shanghai Jiao Tong University, Shanghai, 200240, China
	\AND
	\name Li Li \email li-li@tsinghua.edu.cn \\
	\addr Department of Automation, BNRist, Tsinghua University, 100084, China
}

\editor{John Shawe-Taylor}

\maketitle

\begin{abstract}
In this paper, we propose a data-adaptive non-parametric kernel learning framework in margin based kernel methods.
In model formulation, given an initial kernel matrix, a data-adaptive matrix with two constraints is imposed in an entry-wise scheme.
Learning this data-adaptive matrix in a formulation-free strategy enlarges the margin between classes and thus improves the model flexibility.
The introduced two constraints are imposed either exactly (on small data sets) or approximately (on large data sets) in our model, which provides a controllable trade-off between model flexibility and complexity with theoretical demonstration.
In algorithm optimization, the objective function of our learning framework is proven to be gradient-Lipschitz continuous.
Thereby, kernel and classifier/regressor learning can be efficiently optimized in a unified framework via Nesterov's acceleration.
For the scalability issue, we study a decomposition-based approach to our model in the large sample case.
The effectiveness of this approximation is illustrated by both empirical studies and theoretical guarantees.
Experimental results on various classification and regression benchmark data sets demonstrate that our non-parametric kernel learning framework achieves good performance when compared with other representative kernel learning based algorithms.
\end{abstract}

\begin{keywords}
support vector machines, non-parametric kernel learning, gradient-Lipschitz continuous
\end{keywords}

\section{Introduction}
\label{sec:intro}
Kernel methods \citep{shawe2000support,Sch2003Learning,Steinwart2008SVM} have proven to be powerful in a variety of machine learning tasks, e.g., Support Vector Machines (SVM) for classification \citep{Vapnik2000The,suykens2002least}, Support Vector Regression (SVR) for regression \citep{drucker1997support}, and kernel mean embedding for casual inference \citep{mitrovic2018causal}.
They employ a so-called \emph{kernel} function $k: \mathbb{R}^d \times \mathbb{R}^d \rightarrow \mathbb{R}$ to compute the similarity between any two samples $\bm x_i$, $\bm x_j \in \mathbb{R}^d$ such that $ k(\bm x_i, \bm x_j) = \langle \phi(\bm x_i), \phi(\bm x_j) \rangle_{\mathcal{H}} $, where $\phi: \mathcal{X} \rightarrow \mathcal{H}$ is a non-linear feature map transforming elements of input spaces $\mathcal{X}$ into a reproducing kernel Hilbert space (RKHS) $\mathcal{H}$.
Specifically, for a given kernel, the ``kernel trick" allows for optimization in the kernel-associated hypothesis space without explicit representation of such mapping.

Generally, the performance of kernel methods largely depends on the choice of the kernel.
Traditional kernel methods often adopt a classical kernel, e.g., Gaussian kernel or sigmoid kernel for characterizing the relationship among data points.
Empirical studies suggest that these traditional kernels are not sufficiently flexible to depict the domain-specific characteristics of data affinities or relationships.
To address such limitation, several routes have been explored.
One way is to design sophisticated kernels on specific tasks such as applying optimal assignment kernels \citep{Kriege2016On} to graph classification, developing a kernel based on triplet comparisons \citep{kleindessner2017kernel}, designing the class of tessellated kernels \citep{colbert2020convex}, or even breaking the restriction of positive definiteness on kernels \citep{Cheng2004Learning,Schleif2015Indefinite,Ga2016Learning}.
Apart from these well-designed kernels, a series of research studies aim to automatically learn effective and flexible kernels from data, known as \emph{learning kernels}.
Algorithms for learning kernels can be roughly grouped into two categories: parametric kernel learning and non-parametric kernel learning.

\subsection{Review of Kernel Learning}
\label{sec:review}

In parametric kernel learning, the (learned) kernel function $k(\cdot,\cdot)$ or the kernel matrix $\bm K = [k(\bm x_i, \bm x_j)]_{n \times n}$ on the training data is assumed to admit a specific parametric form, and then the relevant parameters are learned according to the given data.
The earliest work is proposed by \cite{Lanckriet2004Learning}, in which they consider training SVM along with optimizing a linear combination of several pre-given positive semi-definite (PSD) matrices $\{ \underline{\bm K}_t \}_{t=1}^s$ subject to a bounded trace constraint, i.e., $\bm K = \sum_{t=1}^{s}\mu_t \underline{\bm K}_t$ with $\mathrm{tr}(\bm K) \leq c$.
Specifically, to ensure the learned kernel matrix to be PSD, one can directly use the constraint $\bm K \in \mathcal{S}^n_+$ (the cone of $n \times n$ positive semi-definite matrices); or consider a nonnegaitve linear combination of $\{ \underline{\bm K}_t \}_{t=1}^s$, i.e., $\mu_t \geq 0$.
Based on the above two schemes, the kernel (matrix) learning is transformed to learn the combination weights. 
Accordingly, the parameters in SVM and the associated weights can be learned by solving a semi-definite programming optimization problem in a unified framework.

The above parametric kernel learning framework spawns the new field of \emph{multiple kernel learning} (MKL) \citep{bach2004multiple,Varma2009More,liu2019absent}.
It aims to learn a good combination of some predefined kernels (or kernel matrices) for rich representations.
For example, the weight vector $\bm \mu =[\mu_1, \mu_2, \dots, \mu_s]^{\!\top}$ can be restricted by the conic sum (i.e., $\mu_t \geq 0$), the convex sum (i.e., $\mu_t \geq 0$ and $\sum_{t=1}^{s}\mu_t = 1$), or various regularizers such as $\ell_1$ norm, mixed norm, and entropy-based formulations, see a survey \citep{G2011Multiple}.
By doing so, MKL would generate a ``broader" kernel to enhance the representation ability for data.
Based on the idea of MKL, there are several representative approaches to learn effective kernels by exploring the data information, including:
i) hierarchical kernel learning (HKL) \citep{bach2008exploring,jawanpuria2015generalized} learns from a set of base kernels assumed to be embedded on a directed acyclic graph;
ii) spectral mixture models \citep{argyriou2005learning,Wilson2013Gaussian,jean2018semi} aim to learn the spectral density of a kernel in a parametric scheme for discovering flexible statistical representations in data;
iii) kernel target alignment \citep{cristianini2002kernel,Cortes2012Algorithms} seeks for the ``best" kernel matrix by maximizing the similarity between $\bm K$ and the ideal kernel $\bm y \bm y^{\!\top}$ with the label vector $\bm y$.
Here the used ideal kernel can directly recognize the training data with 100\% accuracy, and thus can be used to guide the kernel learning task.
Current works on this direction often assume that the learned kernel matrix  $\bm K$ is in a parametric way, e.g., an MKL or mixtures of spectral density form, see \citep{Lanckriet2004Learning,AmanNIPS2016,li2019implicit} and references therein.

Instead of assuming specific forms for the learned kernel in parametric kernel learning, nonparametric kernel learning is another way to acquire a positive definite kernel (matrix) in a data-specific manner.
Typical examples include:
\cite{Lanckriet2004Learning} directly consider $\bm K \in \mathcal{S}_+^n$ without extra parametric forms in their optimization problem, which results in a nonparametric kernel learning framework.
Such nonparametric kernel learning model is further explored by learning a low-rank kernel matrix \citep{kulis2009low}, imposing the pairwise constraints with side/prior information \citep{Hoi2007Learning}, or local geometry information \citep{Lu2009Geometry}.
These non-parametric kernel learning based problems are usually solved by the standard semi-definite programming or an efficient saddle-point optimization algorithm \citep{Zhuang2011Anonk}.
Besides, \cite{Jain2012Metric} investigate the equivalence between non-parametric kernel learning and Mahalanobis metric learning, and accordingly propose a non-parametric model seeking for a PSD matrix $\bm W$ in a learned kernel $\phi(\bm x)^{\!\top} \bm W  \phi(\bm x')$ via the LogDet divergence.

\subsection{Contributions}


In this paper, we propose a \textbf{D}ata-\textbf{A}daptive \textbf{N}onparametric \textbf{K}ernel (DANK) learning framework that can be seamlessly embedded to support vector machines (SVM) and support
vector regression (SVR).
A low-rank data-adaptive matrix in a suitable solving space is learned to enlarge the margin between classes and effectively control the model complexity.
Further, by virtue of the gradient-Lipschitz continuous property of the considered objective function, the learning task can be efficiently solved by a projected gradient method with Nesterov’s acceleration.
Different from previous non-parametric kernel learning models optimized by semi-definite programming, the employed optimization algorithm in our DANK model is efficient in large scale cases.

To be specific, in our DANK model, a low-rank matrix $\bm F \in \mathbb{R}^{n \times n}$ in a bounded feasible region is imposed on a pre-given kernel matrix $\bm K$ in a point-wise strategy, i.e., $\bm F \odot \bm K$, where $\odot$ denotes the Hadamard product between two matrices.
This \emph{formulation-free} strategy contributes to adequate model flexibility as a result.
The used low-rank constraint and the bounded constraint restrict the degree of freedom of $\bm F$, of which the design scheme is independent of the pre-given kernel matrix.
	Thereby we can restrict the flexibility of $\bm F$ so as to control the model complexity in a clear way.
Here we take a synthetic classification data set \emph{clowns} to illustrate this controllable trade-off between the model complexity and flexibility.
The initial kernel matrix $\bm K$ is given by the classical Gaussian kernel $k(\bm x_i, \bm x_j) = \exp(-{\| \bm x_i - \bm x_j \|_2^2}/{2\sigma^2})$ with the width $\sigma$.
Figure~\ref{flexi} shows that, in the left panel, the baseline SVM with an inappropriate $\sigma=1$ lacks model flexibility and cannot precisely adapt to the data.
However, based on the same $\bm K$, by optimizing the adaptive matrix $\bm F$, our DANK model shows good flexibility to fit the complex data distribution, leading to a desirable decision surface.
Comparably, in the right panel, even under the condition that $\sigma$ is well tuned, the baseline SVM fails to capture the local property of the data (see the brown ellipses).
Instead, by learning $\bm F$, our DANK model still yields a more accurate boundary than SVM to fit the data points.
Specifically, the model complexity is controlled by the introduced low-rank and bounded constraints.
We find that, the generated classification boundary in Figure~\ref{flexi} prohibits locally linearly separable, which avoids over-fitting by an extremely flexible decision surface \citep{torr2011locally}.
\begin{figure}
	\centering
	\subfigure[$\sigma=1$ (pre-given)]{\label{sigma01}
		\includegraphics[width=0.352\textwidth]{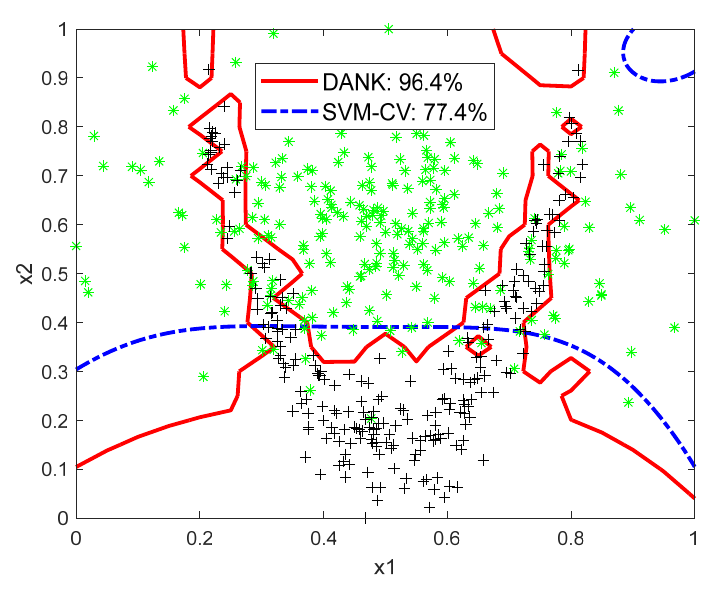}}
	\hspace{0.5cm}
	\subfigure[$\sigma=0.07$ (cross validation)]{\label{sigma1}
		\includegraphics[width=0.352\textwidth]{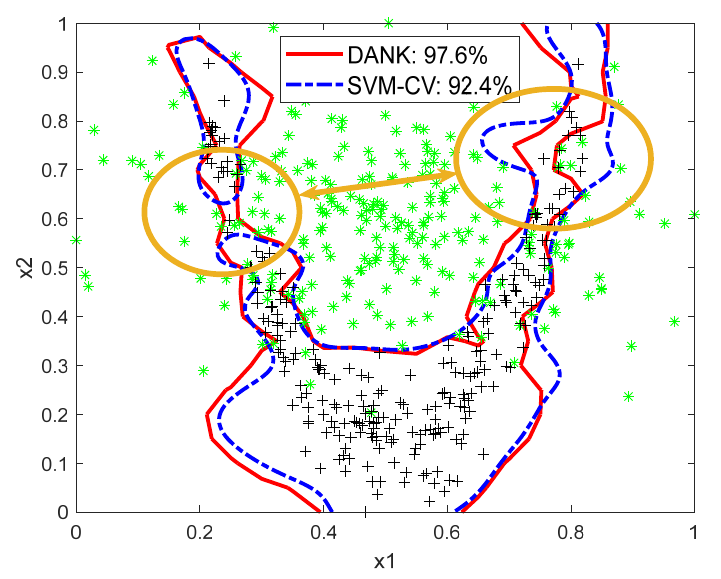}}
	\caption{Classification boundaries of SVM (in blue dash line) and our DANK model (in red solid line) with the Gaussian kernel on the \emph{clowns} data set under different kernel width values $\sigma$.}\label{flexi}
\end{figure}

The main contributions of this paper lie in the following three folds:
\begin{itemize}
	\item In model formulation, we propose a data-adaptive nonparametric kernel learning framework termed ``DANK" to enhance the model flexibility and data adaptivity, which is then seamlessly embedded to margin based kernel methods (SVM and SVR) for classification and regression tasks. 
	Specifically, the introduced constraints are demonstrated to be effective on a controllable trade-off between the model flexibility and complexity;
	\item In algorithm optimization, the DANK model and the induced classification/regression model can be formulated as a max-min optimization problem in a unified framework.
	The related objective function is proven to be gradient-Lipschitz continuous, and thus can be directly solved by a projected gradient method with Nesterov's acceleration;
	\item In scalability issue, we propose a decomposition based approach to our model by omitting the non-separable low-rank constraint in large sample case. 
	The effectiveness of our decomposition-based scalable approach is demonstrated by both theoretical and empirical studies.
\end{itemize}
Besides, the experimental results on several classification and regression benchmark data sets demonstrate the effectiveness of the proposed DANK framework over other representative kernel learning based methods.

This paper shares the basic ideas with our previous conference work \citep{Liu2018AAAIme} but is totally different in model formulation, algorithm optimization, and scaling in large sample cases.
First, in model formulation, we impose an additional low-rank constraint and a bounded constraint on $\bm F$, which effectively control the model flexibility with theoretical demonstration. 
Apart from SVM to which our DANK model is embedded for classification, the proposed DANK model is also extended to SVR for regression tasks.
Second, we develop a Nesterov's smooth method to solve the designed optimization problem, which requires the discussion on its gradient-Lipschitz continuous property.
Third, we conduct a decomposition-based scalable approach on DANK with theoretical guarantees and experimental validation for large scale situations.
Lastly, we provide more experimental results on popular benchmarks.

\subsection{Notation}
\label{sec:notation}
We start with notations this paper.

{\bf Matrices, vectors and elements:} We take $\bm A$, $\bm a$ to be a matrix and a vector, of which the entries are $A_{ij}$ and $a_i$, respectively. Denote $\bm I_n$ as the $n \times n$ identity matrix, $\bm 0$ as a zero matrix or vector with the appropriate size, and $\bm 1_n$ as the $n$-dimensional vector of all ones.

{\bf Sets:} The set $\{1,2,\cdots,n\}$ is written as $[n]$. We call $\{\mathcal{V}_1, \mathcal{V}_2, \cdots, \mathcal{V}_s\}$ an $s$-partition of $[n]$ if $\mathcal{V}_1 \cup \cdots \cup \mathcal{V}_s = [n]$ and $\mathcal{V}_p \cap \mathcal{V}_q = \emptyset $ for $p \neq q$. Let $|\mathcal{V}|$ denote the cardinality of the set $\mathcal{V}$.
We take the notation $\mathcal{S}^n$ as the set of $n \times n$ symmetric matrices and $\mathcal{S}^n_+$ as the $n \times n$ PSD cone.

{\bf Singular value decomposition (SVD):} Given a matrix $\bm A \in \mathbb{R}^{n \times d}$ and its rank $r = \text{rank}(\bm A)$, a (compact) singular value decomposition (SVD) is defined as
$\bm{A} = {\bm U}{\bm \Sigma}{\bm V}^{\!\top} = \sum_{i=1}^{r} \sigma_{i}(\bm A) \bm u_i \bm v^{\!\top}_i$, 
where $\bm U$, $\bm \Sigma$, $\bm V$ are an $n \times r$ column-orthogonal matrix, an $r \times  r$ diagonal matrix with its diagonal element $\sigma_{i}(\bm A)$, and a $d \times r$ column-orthogonal matrix, respectively. If $\bm A$ is PSD, then $\bm U = \bm V$. 
Accordingly, the singular value soft-thresholding operator is defined as 
$\mathcal{J}_{\tau}(\bm A)=\bm U_{\!\bm A} \mathcal{S}_{\tau}(\bm \Sigma_{\bm A}){\bm V}^{\!\top}_{\bm A}$ with the SVD: $\bm{A=U}{\bm \Sigma}{\bm V}^{\top}$ and the soft-thresholding operator is $\mathcal{S}_{\tau}(A_{ij})=\text{sign}(A_{ij})\max(0, |A_{ij} |-\tau)$.

{\bf Matrix norms:} We use four matrix norms in this paper

{\centering Frobenius norm: $\| \bm A \|_{\F} = \sqrt{\sum_{i,j} A_{ij}^2 } = \sqrt{\sum_{i}\sigma_i^2(\bm A)}$}\,.\\
Spectral norm: $\| \bm A \|_2 = \max \limits_{\| \bm x \|_2 = 1} \| \bm A \bm x \|_2 = \sigma_{\max}(\bm A) \leq \| \bm A \|_{\F}$. \\
Nuclear norm: $\| \bm A \|_* = \sum_{i} \sigma_i(\bm A)$.\\
Any square matrix satisfies $\mathrm{tr}(\bm A ) \leq \| \bm A \|_*$. If $\bm A$ is PSD, then we have $\mathrm{tr}(\bm A ) = \| \bm A \|_*$ and $\| \bm A \|_2 = \lambda_{\max}(\bm A)$, where $\lambda_{\max}(\bm A)$ denotes the largest eigenvalue of $\bm A$.

\subsection{Paper Organization}
The paper is organized as follows. In Section \ref{sec:knplsvm}, we introduce the proposed DANK model embedded in SVM, mainly on the model formulation in Section~\ref{sec:modelfor}, benefits of the introduced constraints in Section~\ref{sec:benefits}, and out-of-sample extensions in Section~\ref{sec:ose}.
The model optimization is presented in Section \ref{sec:knplopt}: Section~\ref{sec:gradl} studies the gradient-Lipschitz continuous property and Section~\ref{sec:nest} applies Nesterov's smooth optimization method to solve our model.
Scalability of our nonparametric kernel model is addressed in Section \ref{sec:knplapp}.
Besides, in Section \ref{sec:knplsvr}, we extend our DANK model to SVR for regression.
The experimental results on popular benchmark data sets are presented in Section \ref{sec:experiment}.
Section \ref{sec:conclusion} concludes the entire paper.
Proofs are provided in the Appendices.

\section{The DANK Model in SVM}
\label{sec:knplsvm}
In this section, we incorporate the proposed DANK model into SVM for classification, discuss benefits of the introduced constraints, and extend it to test data for out-of-sample extension.
Denote $\mathcal{X} \subseteq \mathbb{R}^d$ as a compact metric space, and $\mathcal{Y}=\{ -1, 1\}$ as the label space, we assume that a sample set $\mathcal{Z} = \{  (\bm x_i, y_i) \}_{i=1}^n $ is drawn from a non-degenerate Borel probability measure $\rho$ on $\mathcal{X} \times \mathcal{Y}$.
We focus on binary classification problems for the ease of description and it can be extended to multi-classification tasks.


\subsection{Model Formulation}
\label{sec:modelfor}
We begin with the SVM formulation and then introduce our DANK model in SVM.
The hard-margin SVM aims to learn a linear classifier $f(\bm x; \bm w, b) = \mbox{sign}(\bm w^{\!\top}\bm x + b) \in \{ -1, +1\}$ with $\bm w$ and $b$ that determine the decision hyperplane.
This is conducted by maximizing the distance between the nearest training samples of the two classes (a.k.a the \emph{margin} $\gamma = 1/\| \bm w \|_2$), as this way reduces the model generalization error \citep{Vapnik2000The}.
While the data are not linearly separable in most practical settings, the hard-margin SVM is subsequently extended to a soft-margin SVM with an implicit mapping $\phi(\cdot)$ for a non-linear decision hyperplane.
Mathematically, the soft-margin SVM aims to maximize the margin $\gamma$ (or minimize $\| \bm w \|_2^2$) and minimize the slack penalty $\sum_{i=1}^{n} \xi_i$ with the following formulation
\begin{equation}\label{primalsvm}
\begin{split}
&\min \limits_{{\bm w},b, \bm \xi} \frac{1}{2}\| {\bm w} \|_2^2 + C \sum_{i=1}^{n} \xi_i \\
&\mbox{s.t.} ~~ y_i (\bm w^{\!\top} \phi(\bm x_i)+b) \geq 1 - \xi_i,~ \xi_i \geq 0,~ i=1,2,\cdots,n \,,
\end{split}
\end{equation}
where $\bm \xi = [\xi_1, \xi_2, \cdots, \xi_n]^{\!\top}$ is the slack variable and $C$ is the balance parameter.
As illustrated by \cite{Vapnik2000The}, the dual form of problem \eqref{primalsvm} is given by
\begin{equation}\label{dualSVM}
\begin{split}
&\mathop{\mathrm{max}}\limits_{{\bm \alpha} \in \mathcal{A}}~~ \bm{1}^{\!\top}{\bm \alpha}  - \frac{1}{2}{\bm \alpha}^{\!\top}\bm{YKY}{\bm \alpha} \,,
\end{split}
\end{equation}
where $\bm Y=\diag(\bm y)$ is the label matrix, $\bm K = [k(\bm x_i, \bm x_j)]_{n \times n}$ is the (pre-given) Gram matrix  satisfying $k(\bm x_i, \bm x_j) = \langle \phi(\bm x_i), \phi(\bm x_j) \rangle_{\mathcal{H}}$, and the constraint set is given by $\mathcal{A}=\{ {\bm \alpha}\in \mathbb{R}^n: {\bm \alpha}^{\top}\bm{y}=0,~\bm 0\leq {\bm \alpha} \leq C\bm{1} \}$. Without loss of generality, we assume that the kernel function is bounded, i.e., $\kappa := \sup_{\bm x, \bm x' \in \mathcal{X}}|k(\bm x, \bm x')| < \infty $.

Theoretical results on learning kernels, mainly on multiple kernel learning \citep{srebro2006learning,hussain2011improved} demonstrate that, 
	to achieve a tight bound of the estimation error (namely the gap between empirical error and expected error), a learned SVM classifier is better to admit a large margin $\gamma$, and to effectively control the complexity of the hypothesis space.
	Despite that the above theoretical results on multiple kernel learning cannot be directly applied to non-parametric kernel learning models as the learned kernel is sample-dependent and implicit, their results are able to motivate us to design our non-parametric model. On one hand, an adaptive matrix $\bm F$ is introduced into problem~\eqref{dualSVM} to increase the margin $\gamma$; on the other hand, two constraints are considered to control the model complexity. Mathematically, our DANK model is
\begin{equation}\label{main}
\begin{split}
&\min \limits_{\bm{F}\in \mathcal{S}^n_+}  \max\limits_{\bm \alpha \in \mathcal{A}} ~\bm{1}^{\!\top}\!{\bm \alpha} -\frac{1}{2} {\bm \alpha}^{\!\top}\bm{Y}\big(\bm{F}\!\odot\!\bm{K}\big)\bm{Y}{\bm \alpha} \\
&\mbox{s.t.} ~~ \| \bm{F} - \bm{1}\bm{1}^{\!\top} \! \|_{\text{F}}^2 \leq R^2,~\mbox{rank}(\bm F) < r\,,
\end{split}
\end{equation}
where $R$ refers to the bounded region size, $\mbox{rank}(\bm F) $ denotes the rank of $\bm F$, and $r \leq n$ is a given integer.
The constraint $\bm F \in \mathcal{S}^n_+$ is given to ensure that the learned kernel matrix $\bm{F}\odot\bm{K}$ is still a PSD one.\footnote{It is admitted by Schur Product Theorem \citep{Styan1973Hadamard} which relates positive semi-definite matrices to the Hadamard product.} 
Since we do not specify the parametric form of $\bm F$, we can obtain a nonparametric kernel matrix $\bm F \odot \bm K$ and thus our DNAK model is nonparametric.
Due to the non-convexity of the used rank constraint in problem~\eqref{main}, we consider the \emph{nuclear norm} $\| \cdot \|_*$ instead, which is the best convex lower bound of the non-convex rank function \citep{recht2010guaranteed} and can be minimized efficiently.
Accordingly, we relax the constrained optimization problem in Eq.~\eqref{main} to a unconstrained problem by absorbing the two original constraints to the objective function.
Moreover, following the min-max approach \citep{boyd2004convex}, problem~\eqref{main} can be reformulated as
\begin{equation}\label{mainrank}
\begin{split}
&\max\limits_{\bm \alpha \in \mathcal{A}}\min\limits_{\bm{F}\in \mathcal{S}^n_+}  \bm{1}^{\!\top}{\bm \alpha} -\frac{1}{2} {\bm \alpha}^{\!\top}\bm{Y}\big(\bm{F}\odot \bm{K}\big)\bm{Y} {\bm \alpha} + {\eta} \| \bm{F} - \bm{1}\bm{1}^{\!\top}  \|_{\F}^2 +\tau \eta \| \bm{F} \|_* \,,
\end{split}
\end{equation}
where $\eta$, $\tau$ are two regularization parameters. Here we denote $\| \bm{F} - \bm{1}\bm{1}^{\!\top}  \|_{\F}^2$ as the centering regularizer.
In problem~\eqref{mainrank}, the inner minimization problem with respect to $\bm{F}$ is a convex conic programming, and the outer maximization problem is a point-wise minimum of concave quadratic functions of ${\bm \alpha}$.
As a consequence, problem~\eqref{mainrank} is convex, and strong duality holds by Slater's condition \citep{boyd2004convex}.
The optimal values of the primal and dual form of kernel learning based SVM problems will be equal.
Accordingly, when we learn the kernel matrix in problem~\eqref{mainrank}, the objective function value of its primal problem would decrease, which in turn enlarges the margin $\gamma$ for increasing model flexibility.
The introduced two regularizers in Eq.~\eqref{mainrank} are beneficial to control the model complexity. We briefly explain this here and detail in the next subsection.
Intuitively speaking, when $\bm F$ is chosen as the all-one matrix, the DANK model in Eq.~\eqref{mainrank} degenerates to a standard SVM problem.
The considered low-rank regularizer forces $\bm F$ to be endowed with the low-rank structure enjoyed by the all-one matrix.
The introduced centering regularizer restricts the adaptive matrix $\bm F$ to vary around the all-one matrix in a small range. This scheme is also able to prevent $\bm F$ from dropping to a trivial solution $\bm{F}=\bm{0}_{n\times n}$.
By the above two regularizers, we can effectively restrict the complexity of $\bm F$, and further to control the complexity of the whole model. 

\subsection{Benefits of the Used Constraints}
\label{sec:benefits}
In this subsection, we aim to demonstrate two merits of the introduced constraints/regularizers.
First, although the hard constraints are substituted by two regularizers in Eq.~\eqref{mainrank}, the optimal solution $\bm F^*$ of our unconstrained optimization problem can be still restricted in a bounded set $\mathcal{F}$.
	This property will be beneficial to control the model complexity.
	Second, we elucidate that the learned kernel matrix exhibits a fast eigenvalue decay by the used constraints/regularizers, which would be helpful to achieve good generalization properties.

\subsubsection{The boundedness of the optimal solution}
For notational simplicity, we denote the objective function in Eq.~\eqref{mainrank} as
\begin{equation*}
H({\bm \alpha},\bm{F})=\bm{1}^{\!\top}{\bm \alpha} -\frac{1}{2} {\bm \alpha}^{\!\top}\bm{Y}\big(\bm{F}\odot \bm{K}\big)\bm{Y} {\bm \alpha} + {\eta} \| \bm{F} - \bm{1}\bm{1}^{\!\top}  \|_{\F}^2 +\tau \eta \| \bm{F} \|_*\,,
\end{equation*}
of which the optimal solution $({\bm \alpha}^*,\bm{F}^*)$ is a saddle point of $H({\bm \alpha},\bm{F})$ due to the property of the max-min problem~\eqref{mainrank}.
It is easy to check $H({\bm \alpha},\bm{F}^*) \leq H({\bm \alpha}^*,\bm{F}^*) \leq H({\bm \alpha}^*,\bm{F})$ for any feasible $\bm \alpha$ and $\bm F$.
Further, we define the following function
\begin{equation}\label{falpha}
h(\bm \alpha) := H({\bm \alpha},\bm{F}^*)  = \min \limits_{\bm{F}\in \mathcal{S}^n_+} H({\bm \alpha},\bm{F})\,,
\end{equation}
which is concave since $h(\cdot)$ is the minimum of a sequence of concave functions.
The optimal solution $\bm F^*$ of problem~\eqref{mainrank} can be restricted in a bounded set $\mathcal{F}$ by the following Lemma.
\begin{lemma}\label{fbound}
	Problem \eqref{mainrank} admits the following equivalent formulation such that $\bm F$ can be optimized in a bounded region
	\begin{equation}\label{falphab}
	\max\limits_{\bm \alpha \in \mathcal{A}}\min\limits_{\bm{F}\in \mathcal{S}^n_+} H(\bm \alpha, \bm F) = \max\limits_{\bm \alpha \in \mathcal{A}} \underbrace{\min\limits_{\bm{F}\in \mathcal{F}} H(\bm \alpha, \bm F) }_{\triangleq  h(\bm \alpha) }\,,
	\end{equation}
	where the feasible region on $\bm F$ is defined by $\mathcal{F}:=\Big\{ \bm F \in \mathcal{S}^n_+: \lambda_{\max}(\bm F) \leq n - \frac{\tau}{2}+\frac{nC^2}{4\eta}\lambda_{\max}(\bm K) \Big\}$ as a nonempty subset of $\mathcal{S}^n_+$.
\end{lemma}
\begin{proof}
	The key of the proof is to obtain the optimal solution $\bm F^*$ over $\mathcal{S}^n_+$, i.e., $\bm F^* = \argmin\limits_{\bm{F}\in \mathcal{S}^n_+} H(\bm \alpha, \bm F)$ in problem~\eqref{mainrank}.
	By virtue of the following expression\footnote{We use the formula $\bm{x}^{\!\top}\bm{A}\odot\bm{By}= \mathrm{tr}(\bm{D}_x\bm{A}\bm{D}_y\bm{B}^{\top})$ with $\bm{D}_x=\diag(\bm{x})$ and $\bm{D}_y=\diag(\bm{y})$.}
		\begin{equation*}
		\frac{1}{2} {\bm \alpha}^{\!\top}\bm{Y}\big(\bm{F}\!\odot \!\bm{K}\big)\bm{Y} {\bm \alpha} = \mathrm{tr} \left[ \diag({\bm \alpha}^{\top}\bm{Y})\bm{K}\diag({\bm \alpha}^{\top}\bm{Y}) \bm F \right] \,,
		\end{equation*}
		and denote
		\begin{equation}\label{gammadef}
		\bm \Gamma \equiv \bm \Gamma(\bm \alpha) := \frac{1}{4\eta}\diag({\bm \alpha}^{\top}\bm{Y})\bm{K}\diag({\bm \alpha}^{\top}\bm{Y}) \,,
		\end{equation}
		then finding $\bm F^*$ is equivalent to consider the following problem
		\begin{equation}\label{Smain}
		\bm F^* := \argmin \limits_{\bm{F}\in \mathcal{S}^n_+}~~   -\!2 \mathrm{tr}\Big[ \eta \bm \Gamma(\bm \alpha) \bm{F}\Big] + \eta \| \bm{F} - \bm 1 \bm 1^{\!\top} \|_{\mathrm{F}}^2 + \tau \eta \| \bm F \|_*
		\,,
		\end{equation}
		where we omit the irrelevant term $\bm 1^{\!\top}\bm \alpha$ independent of the optimization variable $\bm F$ in problem~\eqref{mainrank}.
	Further, due to the independence of $\bm \Gamma(\bm \alpha)$ on $\bm F$, problem~\eqref{Smain} can be reformulated as
	\begin{equation}\label{subF}
	\bm F^* = \argmin \limits_{\bm{F}\in \mathcal{S}^n_+} \| \bm{F} -\bm 1 \bm{1}^{\!\top}- \bm \Gamma(\bm \alpha)\|_\text{F}^2 + \tau \| \bm F \|_*\,.
	\end{equation}
	Note that the regularization parameter $\eta$ is implicitly included in $\bm \Gamma (\bm \alpha)$.
	Following \citep{cai2010singular}, we can directly obtain the optimal solution of problem~\eqref{subF} with
	\begin{equation*}
	\bm F^* \equiv \bm F (\bm \alpha)  = \mathcal{J}_{\frac{\tau}{2}}(\bm 1 \bm{1}^{\!\top}+\bm \Gamma(\bm \alpha))\,,
	\end{equation*}
	where we use the singular value thresholding operator $\mathcal{J}_{\frac{\tau}{2}}(\cdot)$ as the proximity operator associated with the nuclear
	norm, refer to Theorem 2.1 in \citep{cai2010singular} for details.
	Based on its closed-form, $\lambda_{\max}(\bm F^*)$ can be upper bounded by
	\begin{equation}\label{eigenf}
	\begin{split}
	\lambda_{\max}(\bm F^*) &=  \lambda_{\max}\Big(\!\mathcal{J}_{\frac{\tau}{2}}\big(\bm 1 \bm{1}^{\!\top}\!+\! \bm \Gamma(\bm \alpha)\big)\!\Big)\! = \! \lambda_{\max}\Big(\bm 1 \bm{1}^{\!\top}\!+\! \bm \Gamma(\bm \alpha) \Big) \!-\! \frac{\tau}{2} \\
	&\leq \lambda_{\max}(\bm 1 \bm{1}^{\!\top}\!) \!+\! \frac{1}{4\eta}\lambda_{\max}\Big(\diag({\bm \alpha}^{\!\top}\bm{Y})\bm{K}\diag({\bm \alpha}^{\top}\bm{Y})\Big) \!-\! \frac{\tau}{2}\\
	&= n + \frac{1}{4\eta} \Big\| \diag({\bm \alpha}^{\!\top}\bm{Y})\bm{K}\diag({\bm \alpha}^{\top}\bm{Y})\Big\|_2 - \frac{\tau}{2}\\
	&\leq n + \frac{1}{4\eta}  \big\| \diag({\bm \alpha}^{\top}\bm{Y}) \big\|^2_2 \big\| \bm K \big\|_2 - \frac{\tau}{2}\\
	&\leq n - \frac{\tau}{2}+\frac{nC^2}{4\eta}\lambda_{\max}(\bm K)\,,
	\end{split}
	\end{equation}
	where the first inequality uses the property of maximum eigenvalues, \emph{i.e.}, $\lambda_{\max}(\bm A + \bm B) \leq \lambda_{\max}(\bm A) + \lambda_{\max}(\bm B)$ for any $\bm A, \bm B \in \mathcal{S}_+^n$, and the last inequality admits by $\| \bm \alpha\|_2^2 \leq nC^2$.

\end{proof}
{\bf Remark:} The centering regularizer ${\eta} \| \bm{F} - \bm{1}\bm{1}^{\!\top}  \|_{\F}^2$ in our DANK model requires that the adaptive matrix $\bm F \in \mathcal{F}$ is expected to slightly vary around the all-one matrix, so each element in $[\bm \Gamma(\bm \alpha)]_{ij}$ cannot be significantly larger than $1$.
To this end, recall Eq.~\eqref{gammadef} and problem~\eqref{subF}, the regularizer parameter $\eta$ is chosen as $\eta \in \mathcal{O}(n)$ to ensure $\mathrm{tr}[\bm \Gamma(\bm \alpha)]$ to be in the same order with $\mathrm{tr}(\bm 1 \bm 1^{\!\top}) \in \mathcal{O}(n)$.

Lemma~\ref{fbound} gives the upper spectral bound of $\bm F^*$, which demonstrates that the feasible region $\mathcal{F}$ is a subset of the PSD cone $\mathcal{S}_+^n$.
In this case, we can directly solve $\bm F$ in the subset $\mathcal{F} \subseteq \mathcal{S}_+^n$ instead of the entire PSD cone $\mathcal{S}_+^n$, which makes it possible to seek for a good trade-off between the model flexibility and complexity.
Figure~\ref{flexiF} experimentally validates the effectiveness of the introduced constraints. We find that $\bm F^*$ is of low-rank and its entries range from 0.9 to 1.1 in a small region.
Therefore, such small fluctuation on $\bm F^*$ and its low-rank structure effectively control the model complexity.

\begin{figure}
	\centering
	\subfigure[$\sigma = 1$]{\label{ranksigma1}
		\includegraphics[width=0.35\textwidth]{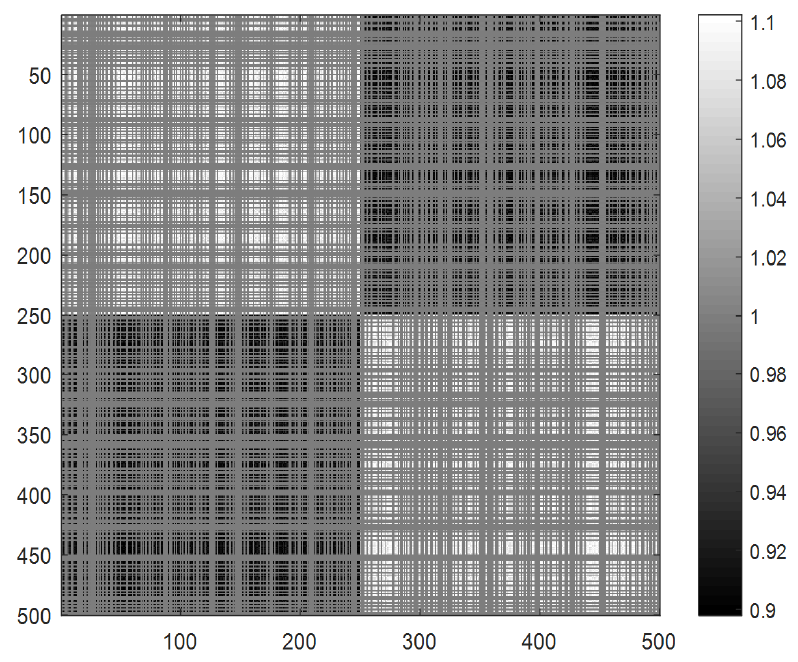}}
	\hspace{0.5cm}
	\subfigure[$\sigma=0.07$ by cross validation]{\label{ranksigma2}
		\includegraphics[width=0.35\textwidth]{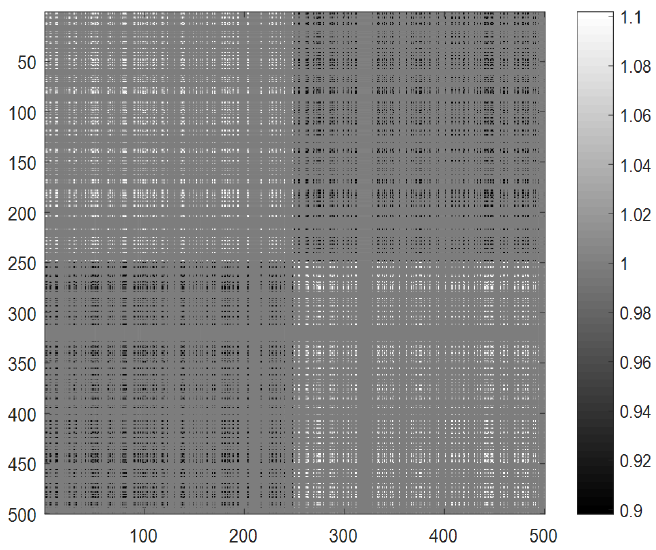}}
	\caption{Entries of $\bm F^*$ in our DANK model with the Gaussian kernel on the \emph{clowns} data set. }\label{flexiF}
\end{figure}


\subsubsection{Fast eigenvalue decay}
\label{sec:ana}
	In the above subsection, we have theoretically and experimentally validated the boundedness of the optimal solution $\bm F^*$, and thus this property is beneficial to control the model complexity.
	In this subsection, we elucidate that the introduced constraints/regularizers result in a fast eigenvalue decay of the learned kernel (matrix), which would be helpful to achieve good prediction performance.\footnote{It appears non-trivial to obtain rigorous analysis on generalization properties of non-parametric kernel learning as the learned kernel is sample-dependent and implicit.}
	
	Denote the learned kernel matrix as $\widetilde{\bm K} \equiv \widetilde{\bm K}(\bm \alpha) = \bm F \odot \bm K$, the learned kernel as $\tilde{k}$, and its associated RKHS as $\widetilde{\mathcal{H}}$. In fact, the eigenvalue decay of $\widetilde{\bm K}$ is leveraged to characterize the ``size” of $\widetilde{\mathcal{H}}$.
	For example, a fast eigenvalue decay of a kernel matrix implies that functions in the associated RKHS are smooth, and thus achieving a good prediction performance \citep{bach2013sharp}. 
	To this end, we need to control the complexity of $\widetilde{\mathcal{H}}$ for a fast eigenvalue decay of $\widetilde{\bm K}$. This can be achieved by the introduced two constraints/regularizers in our DANK model. \\
	i) the low-rank structure: According to $\mathrm{rank}(\widetilde{\bm K}) \leq \mathrm{rank}(\bm F) \mathrm{rank}(\bm K)$, see Theorem~3.2 in \citep{Styan1973Hadamard}, the rank of the learned kernel matrix $\widetilde{\bm K}$ mainly depends on the pre-given kernel matrix $\bm K$ as the introduced low-rank constraint on $\bm F$ yields $\mathrm{rank}(\bm F) \ll n$.
	Accordingly, we can obtain a low rank matrix $\widetilde{\bm K}$ if the pre-given kernel matrix $\bm K$ is of low rank.
	Here, the low rank property of $\bm K$, i.e., the finite non-zero eigenvalues of $\bm K$, implies a fast eigenvalue decay of $\bm K$, which holds by the following two common cases: (1) the geometric/exponential decay with $\lambda_i(\bm K) \propto nR_0 \mathrm{e}^{-a i}$ ($r_0$ and $a$ are some constants); (2) the polynomial decay with $\lambda_i(\bm K) \propto nR_0 i^{-b}$ with $b > 1$.
	Under these two cases, the learned kernel matrix $\widetilde{\bm K}$ is able to exhibit a fast eigenvalue decay, which would lead to a good prediction performance \citep{bach2013sharp}, or a tight estimation error bound \citep{liu2015eigenvalues}. \\
	ii) the bounded constraint: The introduced centering regularizer $\| \bm F - \bm 1 \bm 1^{\!\top} \|_{\mathrm{F}}^2$ would restrict $\bm F$ to vary around the all-one matrix in a small region, so $\bm F$ can be decomposed into $\bm F := \bm 1 \bm 1^{\!\top} + \bm E$, where $\bm E$ is regarded as a perturbation matrix with small residual error, i.e., $\| \bm E \|_{\mathrm{F}}$ is small.
	According to Hoffman-Wielandt inequality \citep{hoffman2003variation} in matrix perturbation theory, we have $\sum_{i=1}^n \left[ \lambda_i(\bm F) - \lambda_i(\bm 1 \bm 1^{\!\top}) \right]^2 \leq \| \bm E \|_{\mathrm{F}}^2$.
	Since the rank-one matrix $\bm 1 \bm 1^{\!\top}$ has only one non-zero eigenvalue with $\lambda_{1}(\bm 1 \bm 1^{\!\top})=n$, and its remaining eigenvalues are zero, we have $[\lambda_1(\bm F) - n]^2 + \sum_{i=2}^n [\lambda_i(\bm F)]^2 \leq \| \bm E \|_{\mathrm{F}}^2$.
	Roughly speaking, eigenvalues of $\bm F$ can be well approximated by a rank-one matrix $\bm 1 \bm 1^{\!\top}$ to some extent.
	That means, the centering regularizer could also bring the low-rank property on $\bm F$, which is useful to control the complexity of the solving space.
	Additionally, in the light of this, we omit the exact low-rank constraint/regularizer in the large-sample case due to its inseparable property. In this case, the learned $\bm F$ still exhibits a relative low-rank structure, see Section~\ref{sec:knplapp} for details.
	
	Based on the above analyses, we elucidate that, on one hand, the introduced low-rank regularizer and the centering regularizer ensure that $\bm F$ can be optimized in a bounded region in our unconstrained optimization problem.
	On the other hand, the used constraints/regularizers are beneficial to achieve a fast eigenvalue decay of the learned kernel (matrix) for good prediction performance.

\subsection{Out-of-sample Extension}
\label{sec:ose}
In our DANK model, the learned kernel (matrix) is non-parametric, so it is unknown for test data.
This is a so-called out-of-sample extension problem \citep{Bengio2004Out,fanuel2017positive,Pan2016Out}, which extensively exists in non-parametric kernel learning \citep{Lu2009Geometry,Zhuang2011Anonk,Liu2018AAAIme}, metric learning \citep{Kulis2013Metric,Jain2017Learning}, and nonlinear manifold learning \citep{Xie2013Onn}.
To address this issue, we develop a simple but effective technique, i.e., the reciprocal nearest neighbor scheme, to establish the learned kernel (matrix) for test data.

Given the optimal $\bm F^*$ on training data, the test data $\{ \bm{x}'_i \}_{i=1}^m$, and the initial kernel matrix for test data $\bm K' = [k(\bm x_i, \bm x'_j)]_{n \times m}$, we aim to establish the adaptive matrix $\bm F'$ for test data by the reciprocal nearest neighbor scheme.
Formally, we first construct the similarity matrix $\bm M$ between the training data and the test data based on the nearest neighbor scheme.
The used distance to find the nearest neighbor is the standard $\| \bm x_i - \bm x_j \|_2$ metric in the $d$-dimensional Euclidean space.
Assuming that $\bm x'_j$ is the $r$-th nearest neighbor of $\bm x_i$ in the set of $\{ \bm x'_t \}_{t=1}^m$, denoted as $\bm x'_j = \text{NN}_r(\bm x_i,\{ \bm x'_t \}_{t=1}^m)$.
Meanwhile $\bm{x}_i$ is the $s$-th nearest neighbor of $\bm{x}'_j$ in $\{ \bm x_t \}_{t=1}^n$, denoted as $\bm{x}_i =\text{NN}_s(\bm{x}'_j,\{ \bm x_t \}_{t=1}^n)$, then the similarity matrix $\bm M \in \mathbb{R}^{n \times m}$ is defined by
\begin{equation}\label{mBBPdef}
M_{ij} =\frac{1}{rs},\!~\text{if}~ \bm x'_j = \text{NN}_r\left(\bm x_i,\{ \bm x'_t \}_{t=1}^m\right)  \wedge
\bm{x}_i =\text{NN}_s\left(\bm{x}'_j,\{ \bm x_t \}_{t=1}^n\right),\quad \forall i \in [n], j \in [m] \,,
\end{equation}
and thus $\bm M_j = [M_{1j}, M_{2j}, \cdots, M_{nj}]^{\!\top} \in \mathbb{R}^n$ describes the relationship between $\bm x'_j$ and the training data $\{ \bm x_i \}_{i=1}^n$. 
This is a much stronger and more robust indicator of similarity than the simple and unidirectional nearest neighborhood relationship, since it takes into account the local densities of vectors around $\bm x_i$ and $\bm x'_j$. 
This reciprocal nearest scheme has been extensively applied to computer vision, such as image retrieval \citep{qin2011hello} and person re-identification \citep{zhong2017re,zheng2012reidentification}. 
Accordingly, $\bm F'$ is given by
\begin{equation*}
\bm{F}'_{j} \leftarrow  \bm{F}^*_{j^*},\!~\mbox{if}~ {j^*} = \argmax_{t}~ \{ M_{1j}, M_{2j}, \cdots, M_{tj}, \cdots, M_{nj} \} \, \quad \forall t \in [n]\,.
\end{equation*}
That is to say, if $\bm{x}_{j^*}$ is the ``optimal" reciprocal nearest neighbor of $\bm{x}'_j$ among the training data set $\{ \bm x_t \}_{t=1}^n$, the $j^*$-th column of $\bm{F}^*$ is assigned to the $j$-th column of $\bm{F}'$.
By doing so, $\bm F^*$ results in a flexible kernel matrix ${\bm F}' \odot {\bm{K}}' $ for test data.
Admittedly, we would be faced with the inconsistency if we directly extend the training kernel to the test kernel in this way.
However, in our model, $\bm F$ is designed to vary in a small range with low-rank structure, and is expected to smoothly vary between any two neighboring data points in $\mathcal{F}$, so the extension to $\bm F'$ on test data by this scheme is reasonable. Further, we attempt to provide some theoretical justification for this scheme as follows.

Mathematically, if $\bm x$ and $\bm x'$ are the reciprocal nearest neighbor pair, we expect that $\| \varphi^{\!\top}(\bm x) \varphi(\bm x'_j) - \varphi^{\!\top}(\bm x') \varphi(\bm x'_j) \|_2^2 $ is small on the test data $\{ \bm x'_j \}_{j=1}^m$, where $\varphi$ is the learned implicit feature mapping such that $F_{ij}K_{ij} = \langle \varphi(\bm x_i), \varphi(\bm x_j) \rangle_{\widetilde{\mathcal{H}}}$ on the training data.
\cite{balcan2006kernels} demonstrate that, in the presence of a large margin $\gamma$, a kernel function can also be viewed as a mapping from the input space $\mathcal{X}$ into an $\tilde{\mathcal{O}}(1/\gamma^2)$ space.
\begin{proposition}\label{prokernelmap} \citep{balcan2006kernels}
	Given $0 < \epsilon \leq 1$, the margin $\gamma$ and the implicit mapping $\varphi(\cdot)$ in SVM, then with probability at least $1-\delta$, let $d'$ be a positive integer such that $d' \geq d_0 = \mathcal{O}(\frac{1}{\gamma^2} \log \frac{1}{\epsilon \delta})$, for any $\bm x \in \{ \bm x_i \}_{i=1}^n \subset \mathbb{R}^d$ and $\bm x' \in \{ \bm x'_i \}_{j=1}^m \subset \mathbb{R}^d$ with $d \geq d'$, we have
	\begin{equation*}
	(1-\epsilon) \| \bm x - \bm x' \|_2^2 \leq \| \varphi(\bm x) - \varphi(\bm x') \|_2^2 \leq (1+\epsilon) \| \bm x - \bm x' \|_2^2\,,
	\end{equation*}
	where the mapping $\varphi$ is a random projection following with the Gaussian distribution or the uniform distribution.
\end{proposition}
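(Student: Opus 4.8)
The plan is to recognize this statement as a Johnson--Lindenstrauss-type distortion bound and to reproduce the random-projection argument of \cite{balcan2006kernels}. First I would make the random map explicit: take $\varphi(\bm z) = \tfrac{1}{\sqrt{d'}}\bm A \bm z$ for $\bm z \in \mathbb{R}^d$, where $\bm A \in \mathbb{R}^{d' \times d}$ has i.i.d.\ rows $\bm a_1,\dots,\bm a_{d'}$ whose entries are either standard Gaussian $\mathcal{N}(0,1)$ or uniform on $\{-1,+1\}$ (the two cases named in the statement). A direct second-moment computation gives $\mathbb{E}\,\|\varphi(\bm x)-\varphi(\bm x')\|_2^2 = \|\bm x - \bm x'\|_2^2$, so $\varphi$ is an isometry in expectation, and the entire content of the proposition is concentration of this quantity around its mean.

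Next I would fix a pair $(\bm x,\bm x')$, write $\bm v = (\bm x - \bm x')/\|\bm x - \bm x'\|_2$, and observe that
\begin{equation*}
\frac{\|\varphi(\bm x)-\varphi(\bm x')\|_2^2}{\|\bm x - \bm x'\|_2^2} = \frac{1}{d'}\sum_{\ell=1}^{d'}\langle \bm a_\ell, \bm v\rangle^2 ,
\end{equation*}
an average of $d'$ i.i.d.\ mean-one random variables, each sub-exponential (a scaled $\chi^2_1$ in the Gaussian case, a bounded average in the $\pm 1$ case). The standard sub-exponential / Bernstein tail then yields, for an absolute constant $c>0$ and $\epsilon \in (0,1]$,
\begin{equation*}
\Pr\!\left[\big|\|\varphi(\bm x)-\varphi(\bm x')\|_2^2 - \|\bm x-\bm x'\|_2^2\big| > \epsilon \|\bm x - \bm x'\|_2^2\right] \le 2\exp\!\big(-c\,\epsilon^2 d'\big).
\end{equation*}
Finally I would union bound over the $nm$ relevant training/test pairs (these are the only ones used by the reciprocal-nearest-neighbour extension), which forces $2nm\exp(-c\epsilon^2 d')\le\delta$, i.e.\ it suffices that $d' \ge \tfrac{1}{c\epsilon^2}\log\tfrac{2nm}{\delta}$; matching this with the quoted threshold $d_0 = \mathcal{O}(\gamma^{-2}\log(1/(\epsilon\delta)))$ is precisely the bookkeeping in \cite{balcan2006kernels}, where the projection tolerance is calibrated to the margin $\gamma$ so that a margin-$\gamma$ separator survives the projection, which is what absorbs the dependence on $\epsilon$ and on the sample sizes into the $1/\gamma^2$ factor.

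The main obstacle is not the concentration step --- that is textbook --- but reconciling the form of the dimension bound: a naive JL union bound produces $d' = \mathcal{O}(\epsilon^{-2}\log(nm/\delta))$, whereas the stated bound is $\mathcal{O}(\gamma^{-2}\log(1/(\epsilon\delta)))$. Closing that gap requires invoking the specific construction of \cite{balcan2006kernels}, in which accuracy is coupled to the margin so that the surviving margin, rather than the worst-case pairwise distortion over all $nm$ pairs, is the controlled quantity. Since the statement is quoted verbatim from that paper, I would either cite it directly or, for a self-contained argument, restate the tolerance in terms of $\gamma$ and then run the three steps above as written.
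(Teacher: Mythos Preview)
The paper does not prove this proposition: it is stated with an explicit citation to \cite{balcan2006kernels} and is followed only by a Remark, with no proof or proof sketch anywhere in the body or the appendices. So there is no ``paper's own proof'' against which to compare your attempt.

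That said, your outline is the right skeleton for a Johnson--Lindenstrauss argument, and you correctly flag the real difficulty: the concentration-plus-union-bound route naturally produces a threshold of order $\epsilon^{-2}\log(nm/\delta)$, not the quoted $\gamma^{-2}\log(1/(\epsilon\delta))$. Your proposed fix --- to import the margin-calibrated construction from \cite{balcan2006kernels} so that the controlled quantity is the surviving margin rather than all pairwise distortions --- is exactly what that reference does, but note that this is a genuinely different statement from the two-sided pairwise distance bound written in the proposition. In other words, the proposition as stated in this paper is really a JL distortion claim, and the $\gamma^{-2}$ in place of $\epsilon^{-2}$ appears to be an artifact of transcribing the Balcan--Blum--Vempala result (which is about preserving a linear separator with margin $\gamma$) into a pairwise-distance form. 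If you want a self-contained proof of the inequality exactly as displayed, the honest bound is $d' = \mathcal{O}(\epsilon^{-2}\log(nm/\delta))$; matching the printed $d_0$ requires either reinterpreting the conclusion as margin preservation or silently identifying $\epsilon$ with a function of $\gamma$, neither of which your three concentration steps alone will supply.
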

{\bf Remark:} 
If the learned $\tilde{k}$ in our DANK model is sub-Gaussian, the above bounds can be achieved \citep{shi2012margin}. Note that the sub-Gaussian kernel assumption is mild as demonstrated by \cite{dao2017gaussian}.
Regarding to $d'$, we choose the lower bound $d = d' = d_0 = \mathcal{O}(\frac{1}{\gamma^2} \log \frac{1}{\epsilon \delta})$, which can be achieved by the margin $\gamma = {1}/{\| \bm w \|_2}$ and $\| \bm w \|_2^2 \in \mathcal{O}(d)$.

Based on the above analysis, given the ``optimal" reciprocal nearest neighbor pair $(\bm x, \bm x')$, for any test data point $\bm x'_j$ with $j \in [m]$, we have
\begin{equation}\label{propmap}
\begin{split}
\| \varphi^{\!\top}(\bm x) \varphi(\bm x'_j) - \varphi^{\!\top}(\bm x') \varphi(\bm x'_j) \|_2^2 & \leq \| \varphi(\bm x) - \varphi(\bm x') \|_2^2 \| \varphi^{\!\top}(\bm x'_j) \varphi(\bm x'_j) \|_2 \\
& \leq \underbrace{\| \varphi^{\!\top}(\bm x'_j) \varphi(\bm x'_j) \|_2}_{\mbox{effected by $\bm F \odot \bm K$ for $\bm x'_j$}} (1+\epsilon) \underbrace{\| \bm x - \bm x' \|_2^2}_{\mbox{small}} \\
& \leq \widetilde{C} (1+\epsilon) {\| \bm x - \bm x' \|_2^2}\,,
\end{split}
\end{equation}
where $\widetilde{C}$ is some constant as the learned kernel is bounded.
As a result, we can obtain a small $\| \varphi^{\!\top}(\bm x) \varphi(\bm x'_j) - \varphi^{\!\top}(\bm x') \varphi(\bm x'_j) \|_2^2$ if $\| \bm x - \bm x' \|_2^2$ is small, which is beneficial to out-of-sample extensions.


\subsection{Connections to Other Models}
Our non-parametric kernel learning framework in fact covers several typical models.

{\bf Kernel with multiple layers:} A kernel with $l$ layers in deep architectures \citep{cho2009kernel} is defined as
\begin{equation*}\label{layerkernel}
k^{(l)}(\bm x_i,\bm x_j) = \left \langle \phi^{(l)}\big( \cdots \phi^{(1)}(\bm x_i) \big), \phi^{(l)}\big( \cdots \phi^{(1)}(\bm x_j) \big) \right \rangle \,,
\end{equation*}
which computes the inner product between two data points $\bm x_i$ and $\bm x_j$ after $l$ successive applications of the nonlinear mapping $\phi(\cdot)$.
For example, the two layer composition of Gaussian kernel can be formulated as
\begin{equation}\label{equiv}
\begin{split}
&k^{(2)}(\bm x_i, \bm x_j)= \langle \phi^{(2)}(\phi^{(1)}(\bm x_i)), \phi^{(2)}(\phi^{(1)}(\bm x_j)) \rangle = e^{-\frac{1}{\sigma^2}}\exp \left(\frac{k(\bm x_i, \bm x_j)}{\sigma^2} \right) \\
\end{split}
\,.
\end{equation}
We observe that the nested kernel in Eq.~\eqref{equiv} can be decomposed into a fixed Gaussian kernel $k(\bm x_i, \bm x_j)$ and a nonlinear pairwise function $g(\bm x_i, \bm x_j)$ such that $k^{(2)}(\bm x_i, \bm x_j) := g(\bm x_i, \bm x_j)k(\bm x_i, \bm x_j)$, which is actually associated with the data-adaptive matrix $\bm F = [g(\bm x_i, \bm x_j)]_{n \times n}$.
That means, using a single kernel as well as a nonlinear pairwise layer could achieve a comparable and even better model flexibility when compared to the two-layer kernel framework. Learning the nonlinear pairwise function $g(\cdot, \cdot)$ by the matrix $\bm F$ via $n$ samples is an interpolation problem, which is also related to recently popular interpolation learning \citep{hastie2019surprises,bartlett2020benign}.
Further, recent deep kernel architectures \citep{Wilson2016Deep,mairal2016end,chen2020convolutional} bridge neural networks and kernel methods and achieve promising performance on various tasks. 
This will motivate us to design a multi-layer version of our DANK model, in which the parameters can be optimized in a layer-by-layer way in the training process. 
We leave this to future work.

{\bf Hyper-parameter learning:} In our model, the entry in the kernel matrix is learned from the data, and thus Gaussian kernels with flexible variances \citep{ying2007learnability} can be linked to our framework. 
Besides, based on the Sch\"{o}nberg's representation theorem \citep{wendland2004scattered}, we consider our DANK model in a distribution view \citep{khuzani2020mean}
\begin{equation}
\tilde{k}(\bm x_i, \bm x_j)=\int_{0}^{\infty} e^{-\xi \|\bm x_i - \bm x_j \|_{2}^{2}} \mu(\mathrm{d} \xi) \approx \frac{1}{D} \sum_{r=1}^D e^{-\xi_r \| \bm x_i - \bm x_j \|^2_2}\,,
\end{equation}
where $\mu$ is an implicit non-negative Borel measure with $\{ \xi_r \}_{r=1}^D \sim \mu(\cdot)$ that corresponds to the data-adaptive matrix $\bm F$. 

\section{Algorithm for DANK Model in SVM}
\label{sec:knplopt}
Extra-gradient based methods can be directly applied to solve the max-min problem~\eqref{mainrank}, and have been shown to exhibit an $\mathcal{O}(1/t)$ convergence rate \citep{nemirovski2004prox}, where $t$ is the iteration number.
Further, to accelerate the convergence rate, this section investigates the gradient-Lipschitz continuity of $h({\bm \alpha})$ in Eq.~\eqref{falpha}.
Based on this, we introduce the Nesterov's smooth optimization method \citep{Yu2005Smooth} that requires $\nabla h({\bm \alpha})$ Lipschitz continuous to solve problem~\eqref{mainrank}, that is shown to achieve $\mathcal{O}(1/t^2)$ convergence rate.

\subsection{Gradient-Lipschitz Continuity of DANK in SVM}
\label{sec:gradl}
To prove the gradient-Lipschitz continuity of $h({\bm \alpha})$, we need the following lemma.



\begin{lemma}\label{gammabound}
	Under the same condition in Lemma~\ref{fbound}, for any $\bm \alpha_1$, $\bm \alpha_2$ $\in \mathcal{A}$, we have
	\begin{equation*}
	\begin{split}
	\big\| \bm F(\bm \alpha_1) - \bm F(\bm \alpha_2)\big\|_{\F}
	\leq  \frac{\|\bm{K}\|_{\F}\big( \| {\bm \alpha_1} \|_2 + \| {\bm \alpha_2} \|_2 \big)}{4\eta} \big\|  {\bm \alpha_1} - {\bm \alpha_2} \big\|_{2}\,,
	\end{split}
	\end{equation*}
	where $\bm F(\bm \alpha_1) =\mathcal{J}_{\frac{\tau}{2}}\big(\bm 1 \bm{1}^{\!\top}+\bm \Gamma(\bm \alpha_1)\big)$ and $\bm F(\bm \alpha_2) =\mathcal{J}_{\frac{\tau}{2}}\big(\bm 1 \bm{1}^{\!\top}+\bm \Gamma(\bm \alpha_2)\big)$.
\end{lemma}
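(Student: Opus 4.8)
The plan is to exploit the fact that the singular value soft-thresholding operator $\mathcal{J}_{\tau/2}$ is precisely the proximity operator associated with the convex nuclear norm (this is how $\bm F^*$ was obtained in Eq.~\eqref{subF} via \citep{cai2010singular}), and hence is non-expansive, i.e. $1$-Lipschitz, with respect to the Frobenius norm. Note first that $\bm 1\bm 1^{\!\top}+\bm\Gamma(\bm\alpha_i)$ is symmetric and positive semi-definite (since $\bm K$ is PSD, $\bm\Gamma(\bm\alpha_i)=\tfrac{1}{4\eta}\diag(\bm\alpha_i^{\!\top}\bm Y)\bm K\diag(\bm\alpha_i^{\!\top}\bm Y)\succeq \bm 0$), so $\mathcal{J}_{\tau/2}$ is well defined on it via the eigendecomposition; this does not affect the non-expansiveness, which holds on all of $\mathbb{R}^{n\times n}$. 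Applying it with the two arguments $\bm 1\bm 1^{\!\top}+\bm\Gamma(\bm\alpha_1)$ and $\bm 1\bm 1^{\!\top}+\bm\Gamma(\bm\alpha_2)$, the common summand $\bm 1\bm 1^{\!\top}$ cancels and we get
\begin{equation*}
\big\|\bm F(\bm\alpha_1)-\bm F(\bm\alpha_2)\big\|_{\F}\le\big\|\bm\Gamma(\bm\alpha_1)-\bm\Gamma(\bm\alpha_2)\big\|_{\F}\,.
\end{equation*}

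Next I would reduce everything to the diagonal matrices $\bm D_i:=\diag(\bm\alpha_i^{\!\top}\bm Y)$, so that $\bm\Gamma(\bm\alpha_i)=\tfrac{1}{4\eta}\bm D_i\bm K\bm D_i$ by Eq.~\eqref{gammadef}, and use the telescoping identity
\begin{equation*}
\bm D_1\bm K\bm D_1-\bm D_2\bm K\bm D_2=\bm D_1\bm K(\bm D_1-\bm D_2)+(\bm D_1-\bm D_2)\bm K\bm D_2\,.
\end{equation*}
Taking Frobenius norms and using the submultiplicative bounds $\|\bm A\bm B\bm C\|_{\F}\le\|\bm A\|_2\|\bm B\|_2\|\bm C\|_{\F}$ and $\|\bm A\bm B\bm C\|_{\F}\le\|\bm A\|_{\F}\|\bm B\|_2\|\bm C\|_2$ yields
\begin{equation*}
\big\|\bm\Gamma(\bm\alpha_1)-\bm\Gamma(\bm\alpha_2)\big\|_{\F}\le\frac{\|\bm K\|_2}{4\eta}\big(\|\bm D_1\|_2+\|\bm D_2\|_2\big)\,\|\bm D_1-\bm D_2\|_{\F}\,.
\end{equation*}

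Finally I would translate the diagonal-matrix norms back to $\bm\alpha$: since $\bm D_i$ is diagonal with entries $y_k\alpha_{i,k}$ and $|y_k|=1$, one has $\|\bm D_i\|_2=\max_k|\alpha_{i,k}|\le\|\bm\alpha_i\|_2$ and $\|\bm D_1-\bm D_2\|_{\F}^2=\sum_k(\alpha_{1,k}-\alpha_{2,k})^2=\|\bm\alpha_1-\bm\alpha_2\|_2^2$. Substituting these into the chain of inequalities gives exactly the claimed bound $\|\bm F(\bm\alpha_1)-\bm F(\bm\alpha_2)\|_{\F}\le\frac{\|\bm K\|(\|\bm\alpha_1\|+\|\bm\alpha_2\|)}{4\eta}\|\bm\alpha_1-\bm\alpha_2\|_2$. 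The only non-routine ingredient is the non-expansiveness of the singular value thresholding operator, which I would state carefully (and cite); everything afterwards is elementary matrix-norm bookkeeping, so that is where the emphasis belongs. A minor point worth a sentence is that $\bm F(\bm\alpha_i)$ here denotes the unconstrained minimizer over $\mathcal{S}^n_+$ (the $\mathcal{J}_{\tau/2}$ formula), consistent with Lemma~\ref{fbound}.
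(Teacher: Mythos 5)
Your proof is correct and follows essentially the same route as the paper: non-expansiveness of $\mathcal{J}_{\frac{\tau}{2}}$ to reduce the problem to bounding $\|\bm\Gamma(\bm\alpha_1)-\bm\Gamma(\bm\alpha_2)\|_{\F}$, then a factorization of the difference of quadratic forms in $\bm D_i=\diag(\bm\alpha_i^{\!\top}\bm Y)$ followed by submultiplicative norm bounds. The only (harmless) differences are that you use the exact telescoping identity $\bm D_1\bm K\bm D_1-\bm D_2\bm K\bm D_2=\bm D_1\bm K(\bm D_1-\bm D_2)+(\bm D_1-\bm D_2)\bm K\bm D_2$ where the paper writes a symmetrized product, and you keep the spectral norm $\|\bm K\|_2\le\|\bm K\|_{\F}$ on the kernel and diagonal factors, which gives a slightly tighter constant than the paper's all-Frobenius estimate while still implying the stated bound.
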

\begin{proof}
	The proofs can be found in Appendix~\ref{proofgammabound}.
\end{proof}
Formally, based on Lemmas~\ref{fbound} and \ref{gammabound}, we present the following theorem.
\begin{theorem}\label{theor}
	The function $h ({\bm \alpha})$ in Eq.~\eqref{falpha} is gradient-Lipschitz continuous, i.e.
	\begin{equation*}
	\| \nabla h(\bm \alpha_1) - \nabla h(\bm \alpha_2) \|_2 \leq L\| {\bm \alpha_1} - {\bm \alpha_2} \|_2,\quad \forall \bm \alpha_1, \bm \alpha_2 \in \mathcal{A}\,,
	\end{equation*}
	where the Lipschitz constant is $L=\kappa \left( n+{ 3nC^2 \|\bm{K}\|_{\F}}/{4\eta} \right)$ with $\kappa := \sup_{\bm x, \bm x' \in \mathcal{X}} |k(\bm x, \bm x')| $.
\end{theorem}
\begin{proof}
	The proofs can be found in Appendix~\ref{prooftheor}.
\end{proof}
The above theoretical analyses demonstrate that $\nabla h({\bm \alpha})$ is Lipschitz continuous, which provides a justification for utilizing a smooth optimization Nesterov's acceleration method to solve problem~\eqref{mainrank} with faster convergence.

\subsection{Nesterov's Smooth Optimization Method}
\label{sec:nest}
Here we introduce a projected gradient algorithm with Nesterov's acceleration to solve the optimization problem \eqref{mainrank}.
\cite{Yu2005Smooth} proposes an optimal scheme for smooth optimization
$\min_{\bm x \in \mathcal{Q}} g(\bm x)$,
where $g(\cdot)$ is a convex gradient-Lipschitz continuous function over a closed convex set $\mathcal{Q}$.
Introducing a continuous and strongly convex function denoted as \emph{proxy-function} $d(\bm x)$ on $\mathcal{Q}$, the first-order projected gradient method with Nesterov's acceleration can then be used to solve this problem.
In our model, we aim to solve the following convex problem
\begin{equation}\label{hopt}
\max\limits_{\bm \alpha \in \mathcal{A}} ~~h(\bm \alpha)\,,
\end{equation}
where $h(\bm \alpha)$ is concave and gradient-Lipschitz continuous with the Lipschitz constant $L$ in Theorem~\ref{theor}.
Here the proxy-function is defined as
$ d(\bm \alpha) = \frac{1}{2} \| \bm \alpha - \bm \alpha_0 \|_2^2 $
with $\bm \alpha_0 \in \mathcal{A}$.
The first-order Nesterov's smooth optimization method for solving problem \eqref{mainrank} is summarized in Algorithm~\ref{ago3}.

\begin{algorithm}[t]
	\caption{Projected gradient method with Nesterov's acceleration for problem \eqref{mainrank}}
	\label{ago3}
	\KwIn{The kernel matrix $\bm{K}$, the label matrix $\bm{Y}$, and the Lipschitz constant in Theorem~\ref{theor}}
	\KwOut{The optimal ${\bm \alpha}^*$}
	Set the stopping criteria $t_{\max}=2000$ and $\epsilon=10^{-4}$.\\
	Initialize $t = 0$ and ${\bm \alpha^{(0)}} \in \mathcal{A} := \bm 0$.\\
	\SetKwRepeat{RepeatUntil}{Repeat}{Until}
	\RepeatUntil{$t \geq t_{\max}$ or $ \| \bm \alpha^{(t)} - \bm \alpha^{(t-1)} \|_2 \leq \epsilon$}
	{ Compute $\bm{F}({\bm \alpha^{(t)}})=\mathcal{J}_{\frac{\tau}{2}}\big(\bm 1 \bm{1}^{\!\top}+\bm \Gamma(\bm \alpha^{(t)}) \big)$\;
		Compute $\nabla h ({\bm \alpha^{(t)}}) = \bm{1} - \bm{Y}\big(\bm{F}({\bm \alpha^{(t)}})\odot\bm{K}\big)\bm{Y} {\bm \alpha^{(t)}}$ \;
		Compute $\bm \theta^{(t)} \!=\! \mathcal{P}_{\mathcal{A}}\Big(\bm \alpha^{(t)} + \frac{1}{L}\nabla h ({\bm \alpha^{(t)}})\Big)$ \;
		Compute $\bm \beta^{(t)}\!=\!\mathcal{P}_{\!\mathcal{A}}\Big( \!\bm \alpha^{(0)} \!-\! \frac{1}{2L}\!\sum_{i=0}^{t}(i+1) \nabla h ({\bm \alpha^{(i)}}) \Big)$\;
		Set $\bm \alpha^{(t+1)} = \frac{t+1}{t+3}\bm \theta^{(t)} + \frac{2}{t+3}\bm \beta^{(t)}$\;
		$t := t + 1$\;}
\end{algorithm}

The key steps of Nesterov's acceleration are characterized by Lines 6, 7, and 8 in Algorithm~\ref{ago3}.
To be specific, according to \cite{Yu2005Smooth}, at the $t$-th iteration, we need to solve the following problem
\begin{equation}\label{nest1}
\bm \beta^{(t)} = \argmin_{\bm \alpha \in \mathcal{A}} \frac{L}{2} \| \bm \alpha - \bm \alpha_0 \|_2^2 +  \sum_{i=0}^{t} \frac{i+1}{2} \Big[ h(\bm \alpha^{(i)}) + \big\langle \nabla h ({\bm \alpha^{(i)}}), \bm \alpha - \bm \alpha^{(i)} \big\rangle \Big]\,,
\end{equation}
which is equivalent to
\begin{equation*}
\bm \beta^{(t)} = \argmin_{\bm \alpha \in \mathcal{A}} \| \bm \alpha - \bm \alpha_0 \|_2^2 + \frac{2}{L} \sum_{i=0}^{t} \frac{i+1}{2} \nabla h^{\!\top} ({\bm \alpha^{(i)}}) \bm \alpha \,,
\end{equation*}
where we omit the irrelevant terms $h(\bm \alpha^{(i)})$ and $\nabla h^{\!\top} ({\bm \alpha^{(i)}}) \bm \alpha^{(i)}$ that are independent of the optimization variable $\bm \alpha$ in Eq.~\eqref{nest1}.
Accordingly, the above problem is further reformulated as
\begin{equation*}
\bm \beta^{(t)} = \argmin_{\bm \alpha \in \mathcal{A}}~~ \Big\| \bm \alpha - \bm \alpha_0 +  \frac{1}{2L}\!\sum_{i=0}^{t}(i+1) \nabla h ({\bm \alpha^{(i)}}) \Big\|_2^2\,,
\end{equation*}
of which the optimal solution is $\bm \beta^{(t)} = \mathcal{P}_{\!\mathcal{A}}\Big( \!\bm \alpha_{0} \!-\! \frac{1}{2L}\!\sum_{i=0}^{t}(i+1) \nabla h ({\bm \alpha^{(i)}}) \Big)$ as outlined in Line 7 in Algorithm~\ref{ago3}, where $\mathcal{P}_{\mathcal{A}}(\bm \alpha)$ is a projection operator that projects $\bm \alpha$ over the set $\mathcal{A}$.
A quick note on projection onto the feasible set $\mathcal{A}=\{ {\bm \alpha}\in \mathbb{R}^n: {\bm \alpha}^{\top}\bm{y}=0,~\bm 0\leq {\bm \alpha} \leq C\bm{1} \}$: it typically suffices in practice to use the  alternating projection algorithm \citep{von1949rings}. 
Since the feasible set $\mathcal{A}$ is the intersection of a hyperplane and a hypercube, both of them admit a simple projection step. To be specific, first clip $\bm \alpha$ to $[0,C]$, and then project on the hyperplane $\bm \alpha \leftarrow \bm \alpha - \frac{\bm y^{\!\top} \bm \alpha}{n}\bm y$.
The convergence rate of the alternating projection algorithm is shown to be linear \citep{von1949rings} and thus it is very efficient.

It can be noticed that when Lines 6, 7, and 8 in Algorithm~\ref{ago3} are replaced by
\begin{equation}\label{pgm}
\bm \alpha^{(t+1)} \!=\! \mathcal{P}_{\mathcal{A}}\Big(\bm \alpha^{(t)} + \frac{1}{L}\nabla h ({\bm \alpha^{(t)}})\Big)\,
\end{equation}
with the Lipschitz constant $L=n - \frac{\tau}{2}+\frac{nC^2}{4\eta}\lambda_{\max}(\bm K)$ derived from Lemma~\ref{fbound}, the Nesterov's smooth method degenerates to a standard projected gradient method.
The convergence of the Nesterov smoothing optimization algorithm is pointed out by Theorem 2 in \citep{Yu2005Smooth}
\begin{equation*}
h(\bm \alpha^*) - h(\bm \beta^{(t)}) \leq \frac{8L\| \bm \alpha_0 -\bm \alpha^*\|^2}{(t+1)(t+2)}\,,
\end{equation*}
where $\bm \alpha^*$ is the optimal solution of Eq.~\eqref{hopt}.
Note that, in general, Algorithm~\ref{ago3} cannot guarantee $\{ h(\bm \alpha^{(t)}): t \in \mathbb{N} \}$ and $\{ h(\bm \beta^{(t)}): t \in \mathbb{N} \}$ to be monotonely increasing during the maximization process.
Nevertheless, such algorithm can be modified to obtain a monotone sequence with replacing Line 6 in Algorithm~\ref{ago3} by
\begin{align*}
\left\{
\begin{array}{rcl}
\begin{split}
& \tilde{\bm \theta}^{(t)} = \mathcal{P}_{\!\mathcal{A}}\Big(\bm \alpha^{(t)} + \frac{1}{L}\nabla h ({\bm \alpha^{(t)}})\Big)\,, \\
& \\
& \bm {\theta}^{(t)} = \argmax_{\bm \alpha} h(\bm \alpha),~\bm \alpha \in \{ \bm {\theta}^{(t-1)},  \tilde{\bm \theta}^{(t)}, \bm \alpha^{(t)} \}\,.
\end{split}
\end{array} \right.
\end{align*}

The Nesterov's smooth optimization method takes $\mathcal{O}(\sqrt{L/\epsilon})$ to find an $\epsilon$-optimal solution, which is better than the standard projected gradient method with the complexity $\mathcal{O}({L/\epsilon})$.


\section{DANK in Large Scale Case}
\label{sec:knplapp}
Scalability in kernel methods is a vital issue which often limits their applications in large data sets \citep{rahimi2007random,wang2016spsd,liu2020survey}, especially for nonparametric kernel learning optimized by semi-definite programming. 
Hence, in this section, we take our DANK model embedded in SVM as an example to study our kernel approximation approach.
The presented results in this section are also suitable to other nonparametric kernel learning based algorithms.

To consider the scalability of our DANK model embedded in SVM in large-scale situations, problem~\eqref{mainrank} is reformulate as
\begin{equation}\label{mainls}
\begin{split}
&\max\limits_{\bm \alpha}\min \limits_{\bm{F}\in \mathcal{S}^n_+} ~~ H(\bm \alpha, \bm F) = \bm{1}^{\!\top}\!{\bm \alpha}\! -\!\frac{1}{2} {\bm \alpha}^{\!\!\top}\bm{Y}\big(\bm{F}\odot\bm{K}\big)\bm{Y} {\bm \alpha} + {\eta} \| \bm{F} - \bm{1} \bm{1}^{\top} \|_{\text{F}}^2 \\
&\mbox{s.t.}  ~~\bm{0}\leq {\bm \alpha} \leq C\bm{1}\,.
\end{split}
\end{equation}
where the bias term $b$ is usually omitted in the large scale issue \citep{keerthi2006building,hsieh2014divide,lian2017divide}.
Besides, we have to omit the low-rank regularizer on $\bm F$ due to its inseparable property, which is reasonable based on the rapid decaying spectra of the kernel matrix \citep{smola2000sparse}.
Specifically, in Section~\ref{sec:ana}, we have demonstrated that $\bm F$ would exhibit the low-rank property as well by the centering regularizer $\| \bm{F} - \bm{1} \bm{1}^{\top} \|_{\text{F}}^2$.
Furthermore, we will experimentally verify that dropping the low-rank term in large-scale problems has no much sacrifice for the accuracy in Section~\ref{sec:explowrank}.

In our decomposition-based scalable approach, we divide the data into small subsets by k-means, and then solve each subset independently and efficiently.
Such similar scheme also exists in \citep{hsieh2014divide,zhang2013divide,si2017memory}.
To be specific, we firstly partition the data into $v$ subsets $\{ \mathcal{V}_1, \mathcal{V}_2,\dots, \mathcal{V}_v \}$, and then solve the respective sub-problems independently with the following formulation
\begin{equation}\label{mainlssub}
\begin{split}
&\max\limits_{\bm \alpha^{(c)}}\!\min \limits_{\bm{F}^{(c,c)} \in \mathcal{S}^{|\mathcal{V}_c|}_+}  \bm{1}^{\!\top}\!{\bm \alpha^{(c)}}\!
\!+\! {\eta} \| \bm{F}^{(c,c)} - \bm{1} \bm{1}^{\top} \|_{\text{F}}^2
-\!\frac{1}{2} {\bm \alpha^{(c)}}^{\!\!\top}\bm{Y}^{(c,c)}\big(\bm{F}^{(c,c)}\odot\bm{K}^{(c,c)}\big) \bm{Y}^{(c,c)} {\bm \alpha}^{(c)}\\
&\mbox{s.t.}  ~~\bm{0}\leq {\bm \alpha}^{(c)} \leq C\bm{1},~~\forall~c=1,2,\dots,v\,,
\end{split}
\end{equation}
where $|\mathcal{V}_c|$ denotes the number of data points in $\mathcal{V}_c$.
Suppose that $(\bar{\bm \alpha}^{(c)}, \bar{\bm{F}}^{(c,c)})$ is the optimal solution of the $c$-th subproblem, the approximation solution $(\bar{\bm \alpha}, \bar{\bm F})$ to the whole problem is concatenated by $\bar{\bm \alpha}=[\bar{\bm \alpha}^{(1)}, \bar{\bm \alpha}^{(2)},\dots,\bar{\bm \alpha}^{(v)}]$ and $\bar{\bm F} = \diag(\bar{\bm{F}}^{(1,1)}, \bar{\bm{F}}^{(2,2)}, \dots, \bar{\bm{F}}^{(v,v)})$, where $\bar{\bm F}$ is a block-diagonal matrix.
In the next, we study the decomposition-based scalable approach in the following two aspects.
First, the objective function value $H(\bar{\bm \alpha},\!\bar{\bm{F}})$ in Eq.~\eqref{mainls} is close to $H(\bm \alpha^*, \bm F^*)$.
Second, if $\bm x_i$ is not a support vector of the subproblem, it will also be a non-support vector of the whole problem under some conditions.
To prove the above three propositions, we need the following lemma that links the subproblems to the whole problem.
\begin{lemma}\label{lemmalink}
	Given the optimal solution $(\bar{\bm \alpha}^{(c)}, \bar{\bm{F}}^{(c,c)})$ of problem~\eqref{mainlssub} with $c \in \{1, 2, \cdots, v \}$, by concatenating $\bar{\bm \alpha}=[\bar{\bm \alpha}^{(1)}, \bar{\bm \alpha}^{(2)},\dots,\bar{\bm \alpha}^{(v)}]$ and $\bar{\bm F} = \diag(\bar{\bm{F}}^{(1,1)}, \bar{\bm{F}}^{(2,2)}, \dots, \bar{\bm{F}}^{(v,v)})$, the approximation solution $(\bar{\bm \alpha}, \bar{\bm F})$ to the whole problem is the optimal solution of the following problem
	\begin{equation}\label{mainlslemma}
	\begin{split}
	&\max\limits_{\bm \alpha}\!\min \limits_{\bm{F}\in \mathcal{S}^n_+}~~\bar{H}(\bm \alpha, \bm F) \triangleq \bm{1}^{\!\top}\!{\bm \alpha} -\frac{1}{2} {\bm \alpha}^{\!\!\top}\bm{Y}\big(\bm{F}\odot \bar{\bm{K}} \big)\bm{Y} {\bm \alpha} + {\eta} \| \bm{F} - \bm{1} \bm{1}^{\top} \|_{\emph{F}}^2 \\
	&\emph{s.t.} ~~\bm{0}\leq {\bm \alpha} \leq C\bm{1}\,,
	\end{split}
	\end{equation}
	with the kernel $\bar{\bm K}$ defined by
	\begin{equation*}
	\bar{K}_{ij} = I(\pi(\bm x_i), \pi(\bm x_j)) K_{ij}\,,
	\end{equation*}
	where $\pi(\bm x_i)$ is the cluster that $\bm x_i$ belongs to, and $I(a,b)=1$ iff $a=b$, and $I(a,b)=0$ otherwise.
\end{lemma}
\begin{proof}
	The proofs can be found in Appendix~\ref{prooflemmalink}.
\end{proof}
Based on the above lemma, we are ready to investigate the difference between $H(\bm \alpha^*, \bm F^*)$ and $H(\bar{\bm \alpha},\!\bar{\bm{F}})$ as follows.
\begin{theorem}\label{theorappf}
	Denote $(\bm \alpha^*, \bm F^*)$ and $(\bar{\bm \alpha}, \bar{\bm F})$ as the optimal solutions of problem~\eqref{mainrank} and problem~\eqref{mainlslemma}, respectively. Suppose that each element in $\bm F^*$ and $\bar{\bm F}$ satisfies $0 < B_1 \leq \max\{ F^*_{ij}, \bar{F}_{ij} \} \leq B_2$, with $B=B_2 - B_1$, we have
	\begin{equation*}
	\left| H(\bm \alpha^*, \bm F^*) - H(\bar{\bm \alpha},\!\bar{\bm{F}}) \right| \leq \frac{1}{2}BC^2 Q(\pi)\,,
	\end{equation*}
	with $Q(\pi) = \sum_{i,j:\pi(\bm x_i) \neq \pi(\bm x_j) }^{n} | k(\bm x_i, \bm x_j)|$, where $\{ \pi(\bm x_1), \pi(\bm x_2), \cdots, \pi(\bm x_n) \}$ is the partition indicator and $C$ is the balance parameter in SVM.
\end{theorem}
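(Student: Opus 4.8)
The plan is to view both pairs as saddle points — $(\bm \alpha^*,\bm F^*)$ for the DANK objective $H$ built on $\bm K$, and, by Lemma~\ref{lemmalink}, $(\bar{\bm \alpha},\bar{\bm F})$ for the objective $\bar H$ built on the block-masked kernel $\bar{\bm K}$ — and to sandwich the difference of the two saddle values by feeding the optimizer of one problem into the objective of the other. The point that keeps the estimate of order $\mathcal O(1)$ rather than $\mathcal O(n)$ is that $H$ and $\bar H$ carry \emph{the same} regularizer $\eta\|\bm F-\bm 1\bm 1^{\!\top}\|_{\F}^2$, so they differ only through the quadratic term, namely $H(\bm \alpha,\bm F)-\bar H(\bm \alpha,\bm F)=\frac{1}{2}\bm \alpha^{\!\top}\bm Y\big(\bm F\odot(\bm K-\bar{\bm K})\big)\bm Y\bm \alpha$, and the entries of $\bm K-\bar{\bm K}$ are supported exactly on the off-block set $\{(i,j):\pi(\bm x_i)\neq \pi(\bm x_j)\}$, whose $\ell_1$ kernel mass is $Q(\pi)$.

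\textbf{Step 1 (one direction is free).} Since $\bar{\bm F}$ is block-diagonal with each block a principal submatrix of a PSD matrix, $\bar{\bm F}\in\mathcal S^n_+$, and $\bar{\bm F}\odot(\bm K-\bar{\bm K})=\bm 0$ because the two supports are disjoint; hence $H(\bm \alpha,\bar{\bm F})=\bar H(\bm \alpha,\bar{\bm F})$ for every $\bm \alpha$. Chaining minimality of $\bm F^*$ in $H(\bm \alpha^*,\cdot)$ over $\mathcal S^n_+$, this identity, and the saddle property of $(\bar{\bm \alpha},\bar{\bm F})$ for $\bar H$,
\begin{equation*}
H(\bm \alpha^*,\bm F^*)\;\le\;H(\bm \alpha^*,\bar{\bm F})\;=\;\bar H(\bm \alpha^*,\bar{\bm F})\;\le\;\max_{\bm \alpha}\bar H(\bm \alpha,\bar{\bm F})\;=\;\bar H(\bar{\bm \alpha},\bar{\bm F})\;=\;H(\bar{\bm \alpha},\bar{\bm F})\,,
\end{equation*}
so $H(\bm \alpha^*,\bm F^*)-H(\bar{\bm \alpha},\bar{\bm F})\le 0\le\frac{1}{2}BC^2Q(\pi)$, settling one half of the absolute value.

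\textbf{Step 2 (the reverse gap).} Here I feed $\bm F^*\in\mathcal S^n_+$ into $\bar H(\bar{\bm \alpha},\cdot)$, using minimality of $\bar{\bm F}$, then the $H$–$\bar H$ relation, then the saddle property of $(\bm \alpha^*,\bm F^*)$ for $H$:
\begin{equation*}
H(\bar{\bm \alpha},\bar{\bm F})=\bar H(\bar{\bm \alpha},\bar{\bm F})\le\bar H(\bar{\bm \alpha},\bm F^*)=H(\bar{\bm \alpha},\bm F^*)+\tfrac12\bar{\bm \alpha}^{\!\top}\bm Y\big(\bm F^*\odot(\bm K-\bar{\bm K})\big)\bm Y\bar{\bm \alpha}\le H(\bm \alpha^*,\bm F^*)+\tfrac12\bar{\bm \alpha}^{\!\top}\bm Y\big(\bm F^*\odot(\bm K-\bar{\bm K})\big)\bm Y\bar{\bm \alpha}\,.
\end{equation*}
Thus $0\le H(\bar{\bm \alpha},\bar{\bm F})-H(\bm \alpha^*,\bm F^*)\le\frac{1}{2}\sum_{i,j:\pi(\bm x_i)\neq\pi(\bm x_j)}(y_i\bar\alpha_i)(y_j\bar\alpha_j)F^*_{ij}k(\bm x_i,\bm x_j)$, and it remains to estimate this residual. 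Writing $F^*_{ij}=1+(F^*_{ij}-1)$ and using $|\bar\alpha_i|\le C$, the fluctuation piece is at most $\frac{1}{2}C^2\big(\max_{ij}|F^*_{ij}-1|\big)Q(\pi)\le\frac{1}{2}BC^2Q(\pi)$, since $F^*_{ij}\in[B_1,B_2]$ and the band brackets the all-ones reference, so $|F^*_{ij}-1|\le B_2-B_1=B$; the ``$1$''-piece equals $\frac{1}{2}\bar{\bm \alpha}^{\!\top}\bm Y(\bm K-\bar{\bm K})\bm Y\bar{\bm \alpha}$.

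\textbf{Main obstacle.} The load-bearing step is controlling the residual with the sharp constant $B=B_2-B_1$ rather than the crude $B_2$: concretely, I must show the ``$1$''-piece $\frac{1}{2}\bar{\bm \alpha}^{\!\top}\bm Y(\bm K-\bar{\bm K})\bm Y\bar{\bm \alpha}$ is non-positive, or otherwise re-absorb it. I expect this to follow from a symmetric re-run of Steps 1–2 with the roles of $(\bm \alpha^*,\bm F^*)$ and $(\bar{\bm \alpha},\bar{\bm F})$ (and of $\bm F^*$ and $\bar{\bm F}$) interchanged, which should yield the matching bound on $\bar{\bm \alpha}^{\!\top}\bm Y(\bm K-\bar{\bm K})\bm Y\bar{\bm \alpha}$ using the optimality of $\bar{\bm \alpha}$ for the block-masked problem and the PSD structure of $\bm K$ and $\bar{\bm K}$; if that quadratic form resists being signed, one falls back to the weaker constant $B_2$. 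The remaining checks are bookkeeping — that $\bar{\bm F},\bm F^*\in\mathcal S^n_+$ so each is admissible in the other problem, and that dropping the inseparable nuclear-norm term (as in problem~\eqref{mainls}) leaves all the identities above intact.
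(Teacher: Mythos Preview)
Your saddle-point sandwich is the same device the paper uses, but your execution is actually tighter. Your Step~1 exploits $\bar{\bm F}\odot(\bm K-\bar{\bm K})=\bm 0$ to obtain $H(\bm\alpha^*,\bm F^*)\le H(\bar{\bm\alpha},\bar{\bm F})$ at no cost; the paper does not notice this and runs a separate chain for that direction. Your Step~2 chain (minimality of $\bar{\bm F}$ for $\bar H(\bar{\bm\alpha},\cdot)$, then conversion $\bar H\to H$, then maximality of $\bm\alpha^*$ for $H(\cdot,\bm F^*)$) invokes only inequalities that the saddle-point structure genuinely supplies. The paper's corresponding chains pass through steps such as $\bar H(\bm\alpha^*,\bm F^*)\ge\bar H(\bm\alpha^*,\bar{\bm F})$ and $H(\bar{\bm\alpha},\bm F^*)\le H(\bar{\bm\alpha},\bar{\bm F})$, neither of which follows from the stated saddle properties (the minimizers are taken at the \emph{wrong} $\bm\alpha$).

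On your ``main obstacle'': take the fallback constant $B_2$ and stop. Your proposed route --- signing $\bar{\bm\alpha}^{\!\top}\bm Y(\bm K-\bar{\bm K})\bm Y\bar{\bm\alpha}$ --- cannot work in general because $\bm K-\bar{\bm K}$ is indefinite, and the inequality $|F^*_{ij}-1|\le B_2-B_1$ tacitly assumes $B_1\le 1\le B_2$, which is not among the hypotheses. The paper does not truly resolve this either: its residual in the lower-bound direction is $\tfrac{1}{2}\sum_{\pi\neq}\alpha^*_i\alpha^*_j y_iy_j(\bar F_{ij}-F^*_{ij})K_{ij}$ and it asserts $|\bar F_{ij}-F^*_{ij}|\le B$, but on off-block indices $\bar F_{ij}=0$ by construction, so $|\bar F_{ij}-F^*_{ij}|=|F^*_{ij}|$ is controlled by $B_2$, not $B=B_2-B_1$. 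The constant both arguments honestly deliver is $B_2$; your proof with that constant is complete.
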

\begin{proof}
	The proofs can be found in Appendix~\ref{prooftheorappf}.
\end{proof}
{\bf Remark: } $Q(\pi)$ actually consists of the off-diagonal values of the kernel matrix $\bm K$ if we rearrange the training data in a clustering order.
It depends on the data distribution, the number of clusters $v$, and the kernel type.
Intuitively, if the clusters are nicely shaped (e.g. Gaussian) and well-separated, then the kernel matrix may be approximately block-diagonal. In this case, $Q(\pi)$ would be small.
Let we examine two extreme cases of $v$. If $v=1$, i.e., only one cluster, we have $Q(\pi)=0$; If $v=n$, i.e., each data point is grouped into a cluster, then $Q(\pi)$ can be upper bounded by $Q(\pi) \leq \sum_{i,j}^n |k(\bm x_i, \bm x_j)| $.
In practical clustering algorithms, $v$ is often chosen to be much smaller than $n$, i.e., $v \ll n$, and thus we can obtain a small $Q(\pi)$. 
In fact, it appears non-trivial to quantitatively analyze the relationship between $Q(\pi)$ and $v$, so we experimentally study this relationship in Section~\ref{sec:bound}.

Besides, in SVM, we also concern about the relationship of support/non-support vectors between the subproblems and the whole problem.
Accordingly, we present the following theorem to explain this issue.
\begin{theorem}\label{theorsv}
	Under the same condition of~Theorem~\ref{theorappf} with an additional bounded assumption $\kappa := \sup_{\bm x, \bm x' \in \mathcal{X}}|k(\bm x, \bm x')| $, suppose that $\bm x_i$ is not a support vector of the subproblem, i.e., $\bar{\alpha}_i =0$, $\bm x_i$ will also not be a support vector of the whole problem i.e., ${\alpha}_i =0$, under the following condition
	\begin{equation}\label{gradsvc}
	\Big(\! \nabla_{\bm \alpha} \bar{H}(\bar{\bm \alpha}, \bar{\bm F}) \!\Big)_{\!i} \!\leq\! - (B \!+\! B_2)C \left(\| \bar{\bm K}_i \|_1 \!+\! \kappa \right) \leq - (B + B_2)C \left(\| \bm K \|_1 + \kappa \right) \,,
	\end{equation}
	where $\bar{\bm K}_i$ denotes the $i$-th column of the kernel matrix $\bar{\bm K}$.
\end{theorem}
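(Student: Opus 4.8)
The plan is to reduce the statement to a first–order optimality (KKT) condition for the full, non-decomposed problem~\eqref{mainls} and then transfer the strict negativity of the subproblem's coordinate gradient in~\eqref{gradsvc} to the full problem via a perturbation estimate. In the large-scale formulation the bias is dropped, so $\bm\alpha$ ranges over the box $\mathcal B=[0,C]^n$; writing $h(\bm\alpha)=\min_{\bm F\in\mathcal S^n_+}H(\bm\alpha,\bm F)$ as in Eq.~\eqref{falpha}, the inner minimizer $\bm F(\bm\alpha)$ is unique (strict convexity of the $\eta\|\bm F-\bm 1\bm 1^{\!\top}\|_{\F}^2$ term) and Danskin's theorem yields $\nabla h(\bm\alpha)=\bm 1-\bm Y\big(\bm F(\bm\alpha)\odot\bm K\big)\bm Y\bm\alpha$, exactly the expression already used in Algorithm~\ref{ago3} and Theorem~\ref{theor}; likewise $\nabla\bar h(\bar{\bm\alpha})=\nabla_{\bm\alpha}\bar H(\bar{\bm\alpha},\bar{\bm F})$ for the block-diagonal problem of Lemma~\ref{lemmalink}. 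The first step is then the elementary observation that, since $\bm\alpha^*$ maximizes the concave $h$ over the box $\mathcal B$, any coordinate with $\big(\nabla h(\bm\alpha^*)\big)_i<0$ is pinned at its lower bound, i.e.\ $\alpha_i^*=0$ (the KKT multiplier of $\alpha_i\ge0$ is then strictly positive, which is incompatible with $\alpha_i^*>0$). Hence it suffices to show $\big(\nabla h(\bm\alpha^*)\big)_i<0$.

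Second, I would estimate the gap $\big(\nabla h(\bm\alpha^*)\big)_i-\big(\nabla_{\bm\alpha}\bar H(\bar{\bm\alpha},\bar{\bm F})\big)_i$. Expanding both quantities and using $\bar\alpha_i=0$ and $\bar K_{ij}=0$ whenever $\pi(\bm x_i)\neq\pi(\bm x_j)$, the difference splits into four pieces: the isolated diagonal term $-F^*_{ii}K_{ii}\alpha_i^*\le 0$ (which only helps, being bounded by $B_2\kappa C$ in magnitude and of the right sign); the cross-cluster kernel masking, bounded by $B_2C\sum_{j:\pi(\bm x_j)\neq\pi(\bm x_i)}|K_{ij}|$; the in-cluster replacement $\bm F^*\mapsto\bar{\bm F}$, controlled uniformly via $|F^*_{ij}-\bar F_{ij}|\le B$, giving $BC\sum_{j:\pi(\bm x_j)=\pi(\bm x_i)}|K_{ij}|$; and the in-cluster replacement $\bm\alpha^*\mapsto\bar{\bm\alpha}$, controlled with $|K_{ij}|<\kappa$, $F^*_{ij}\in[B_1,B_2]$ and the closeness $\|\bm\alpha^*-\bar{\bm\alpha}\|$ of Theorem~\ref{theorappv} (equivalently the objective-value gap of Theorem~\ref{theorappf}). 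Adding the two ``clean'' pieces produces a bound of the form $(B+B_2)C\big(\|\bar{\bm K}_i\|_1+\kappa\big)$, which is exactly the budget that hypothesis~\eqref{gradsvc} is designed to cancel (and the cruder bound with $\|\bm K\|_1$ follows by relaxing $\|\bar{\bm K}_i\|_1$); together with the residual $\bm\alpha$-term and the strictly negative right-hand side of~\eqref{gradsvc}, this gives $\big(\nabla h(\bm\alpha^*)\big)_i<0$, and the first step then forces $\alpha_i^*=0$.

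The step I expect to be the main obstacle is the perturbation estimate: organizing the kernel sums so that the masking error and the $\bm F$-error combine into precisely the $(B+B_2)C(\|\bar{\bm K}_i\|_1+\kappa)$ budget (one must split the sums by cluster membership to recover $\|\bar{\bm K}_i\|_1$ before relaxing to $\|\bm K\|_1$), and making sure the $\bm\alpha$-perturbation term is genuinely dominated---this is where the approximation guarantees of Theorems~\ref{theorappf} and~\ref{theorappv}, or a sharper per-coordinate control of $\alpha_i^*-\bar\alpha_i$, come in. A minor but necessary point is strictness: since $\kappa>0$ and $B,B_2,C>0$, the right-hand side of~\eqref{gradsvc} is strictly negative, so the transferred inequality $\big(\nabla h(\bm\alpha^*)\big)_i<0$ is strict and $\alpha_i^*$ is genuinely pinned at $0$ rather than merely permitted to vanish.
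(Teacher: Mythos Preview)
Your approach is essentially the same as the paper's: reduce to showing $\big(\nabla_{\bm\alpha}H(\bm\alpha^*,\bm F^*)\big)_i\le 0$ (which by the optimality condition~\eqref{optf} forces $\alpha_i^*=0$), and obtain this by bounding the perturbation $\big(\nabla_{\bm\alpha}H(\bm\alpha^*,\bm F^*)\big)_i-\big(\nabla_{\bm\alpha}\bar H(\bar{\bm\alpha},\bar{\bm F})\big)_i$. The paper's bookkeeping is slightly different---it telescopes with $\Delta\bm\alpha=\bar{\bm\alpha}-\bm\alpha^*$ and $\Delta\bm F=\bar{\bm F}-\bm F^*$ into four correction terms rather than splitting first by cluster membership---but the spirit and the final budget $(B+B_2)C(\|\bar{\bm K}_i\|_1+\kappa)$ are identical.

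The one place you overcomplicate is the ``$\bm\alpha$-perturbation'' piece, which you flag as the main obstacle and propose to control via Theorems~\ref{theorappf} and~\ref{theorappv}. The paper does not invoke either result here: it simply uses the crude box bounds $0\le\alpha_j^*,\bar\alpha_j\le C$, so that each coordinate of $\Delta\bm\alpha$ is at most $C$ in magnitude, and this is already enough for all correction terms to fit into the stated budget. So your anticipated obstacle is not one, and the proof is shorter and entirely self-contained. Your remark on strictness is well taken---the paper only derives $\le 0$ and appeals to~\eqref{optf} without explicitly arguing the strict case---so that is a point where your write-up would actually be more careful than the original.
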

\begin{proof}
	The proofs can be found in Appendix~\ref{prooftheorsv}.
\end{proof}
{\bf Remark:} Eq.~\eqref{gradsvc} is a sufficient condition and can be expressed as
\begin{equation*}
1 - nB_2 \kappa C \leq \Big(\! \nabla_{\bm \alpha} \bar{H}(\bar{\bm \alpha}, \bar{\bm F}) \!\Big)_{\!i} \leq  - (B + B_2)C \left(\| \bar{\bm K}_i \|_1 + \kappa \right) \,,
\end{equation*}
where the first inequality admits
$\inf \Big(\! \nabla_{\bm \alpha} \bar{H}(\bar{\bm \alpha}, \bar{\bm F}) \!\Big)_{\!i} = \inf \left( 1 -  \sum_{j=1}^{n} y_i y_j \bar{ F}_{ij} \bar{ K}_{ij} \bar{ \alpha}_j \right) = 1 - nB_2 \kappa C$.
So if we assume that $\left( \nabla_{\bm \alpha} \bar{H}(\bar{\bm \alpha}, \bar{\bm F}) \right)_{\!i}$ of non-support vectors is uniformly distributed over the range $[1-n\kappa B_2 C, 0]$, then nearly $1 - \frac{(B + B_2)C \left(\| \bar{\bm K}_i \|_1 + \kappa \right)}{nB_2 \kappa C - 1} \approx 1 - \frac{c}{\sqrt{n}}$ of the total non-support vectors can be directly recognized, where $c$ is some constant.
That means, the screening proportion of non-support vectors is $(1 - {c}/{\sqrt{n}})*100\%$, at a certain $\mathcal{O}(n^{-1/2})$ rate. 
If $\left( \nabla_{\bm \alpha} \bar{H}(\bar{\bm \alpha}, \bar{\bm F}) \right)_{\!i}$ follows with some heavy-tailed distributions over the range $[1-n\kappa B_2 C, 0]$, the recognized rate would decrease.
And specifically, we will experimentally check that our screening condition is reasonable in Section~\ref{sec:bound}.

\section{DANK Model in SVR}
\label{sec:knplsvr}
In this section, we incorporate the DANK model into SVR and also develop the Nesterov's smooth optimization algorithm to solve it.
Here the label space is $\mathcal{Y} \subseteq \mathbb{R}$ for regression.

Similar to DANK in SVM revealed by problem~\eqref{mainrank}, we incorporate the DANK model into SVR with the $\varepsilon$-insensitive loss, namely
\begin{equation}\label{mainsvr}
\begin{split}
& \max\limits_{\hat{\bm \alpha}, \check{\bm \alpha}}\min\limits_{{\bm F} \in \mathcal{S}^n_+} -\frac{1}{2}(\hat{\bm \alpha} - \check{\bm \alpha})^{\top} \big({\bm F} \odot \bm{K}\big)(\hat{\bm \alpha} - \check{\bm \alpha})  + (\hat{\bm \alpha} - \check{\bm \alpha})^{\top}\bm{y} \\
&\qquad \qquad ~- \varepsilon (\hat{\bm \alpha} + \check{\bm \alpha})^{\top}\bm{1} +\! {\eta} \| {\bm{F}} \!-\! \bm{1}\bm{1}^{\!\!\top} \! \|_{\text{F}}^2\! +\!\tau \eta \| {\bm{F}} \|_*\\
&\mbox{s.t.} ~0 \leq \hat{\bm \alpha}, \check{\bm \alpha} \leq C, ~(\hat{\bm \alpha} - \check{\bm \alpha})^{\top}\bm{y} = 0\,,
\end{split}
\end{equation}
where the dual variable is $\bm \alpha = \hat{\bm \alpha} - \check{\bm \alpha}$.
The objective function in problem~\eqref{mainsvr} is denoted as $H(\hat{\bm \alpha}, \check{\bm \alpha},\bm{F})$. Further, we define the following function
\begin{equation}\label{falphasvr}
{h}(\hat{\bm \alpha}, \check{\bm \alpha}) \triangleq H(\hat{\bm \alpha}, \check{\bm \alpha},\bm{F}^*)  = \min \limits_{\bm{F}\in \mathcal{S}^n_+} {H}(\hat{\bm \alpha}, \check{\bm \alpha},\bm{F})\,,
\end{equation}
where ${h}(\hat{\bm \alpha}, \check{\bm \alpha})$ can be obtained by solving the following problem
\begin{equation}\label{svrfa}
\min \limits_{{\bm F}\in \mathcal{S}^n_+} \| {\bm F} - \bm 1 \bm{1}^{\!\top} - {\bm \Gamma}(\hat{\bm \alpha}, \check{\bm \alpha})\|_{\F}^2 + \tau \| {\bm F} \|_*\,,
\end{equation}
with ${\bm \Gamma}(\hat{\bm \alpha}, \check{\bm \alpha})=\frac{1}{4\eta}\diag(\hat{\bm \alpha} - \check{\bm \alpha})^{\top}\bm{K}\diag(\hat{\bm \alpha} - \check{\bm \alpha})$.
The optimal solution of Eq.~\eqref{svrfa} is ${\bm F}^* =  \mathcal{J}_{\frac{\tau}{2}}(\bm 1 \bm{1}^{\!\top}+{\bm \Gamma(\hat{\bm \alpha}, \check{\bm \alpha})})$.
We can easily check that Lemma \ref{fbound} is also applicable to problem~\eqref{falphasvr}
\begin{equation*}
h(\hat{\bm \alpha}, \check{\bm \alpha}) = \min \limits_{{\bm{F}}\in \mathcal{B}} H(\hat{\bm \alpha}, \check{\bm \alpha},{\bm{F}})\,.
\end{equation*}

Similar to Lemma \ref{gammabound}, in our DANK model embedded in SVR, $\| {\bm \Gamma}(\hat{\bm \alpha}_1, \check{\bm \alpha}_1) - {\bm \Gamma}(\hat{\bm \alpha}_2, \check{\bm \alpha}_2)\|_2$ can be bounded by the following lemma.

\begin{lemma}\label{gammaboundsvr}
	For any $\hat{\bm \alpha}_1, \check{\bm \alpha}_1, \hat{\bm \alpha}_2, \check{\bm \alpha}_2 \in \mathcal{A}$, we have
	\begin{equation*}
	\begin{split}
	&\big\| {\bm F}(\hat{\bm \alpha}_1, \check{\bm \alpha}_1) -  {\bm F}(\hat{\bm \alpha}_2, \check{\bm \alpha}_2)\big\|_{\F}
	\leq  \| {\bm \Gamma}(\hat{\bm \alpha}_1, \check{\bm \alpha}_1) - {\bm \Gamma}(\hat{\bm \alpha}_2, \check{\bm \alpha}_2)\|_{\F} \\
	& \leq \frac{\|\bm{K}\|}{4\eta} \big \|\hat{\bm \alpha}_1-\check{\bm \alpha}_1+\hat{\bm \alpha}_2 - \check{\bm \alpha}_2 \big\|_2 \big\| \hat{\bm \alpha}_1 - \check{\bm \alpha}_1 - \hat{\bm \alpha}_2 + \check{\bm \alpha}_2\big\|_2\,,
	\end{split}
	\end{equation*}
	where $\bm{F}(\hat{\bm \alpha}_1, \check{\bm \alpha}_1)) =\mathcal{J}_{\frac{\tau}{2}}\big(\bm 1 \bm{1}^{\!\top}+ {\bm \Gamma}(\hat{\bm \alpha}_1, \check{\bm \alpha}_1)\big)$ and $\bm{F}(\hat{\bm \alpha}_2, \check{\bm \alpha}_2) =\mathcal{J}_{\frac{\tau}{2}}\big(\bm 1 \bm{1}^{\!\top}+ {\bm \Gamma}(\hat{\bm \alpha}_2, \check{\bm \alpha}_2)\big)$.
\end{lemma}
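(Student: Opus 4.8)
The plan is to follow the same two-stage template used for Lemma~\ref{gammabound}, writing the ``net'' dual variable $\bm \beta_i := \hat{\bm \alpha}_i - \check{\bm \alpha}_i$ as the direct analogue of $\bm Y\bm \alpha_i$ in the SVM case. Stage one reduces the distance between the thresholded matrices $\bm F(\hat{\bm \alpha}_i,\check{\bm \alpha}_i)$ to the distance between their un-thresholded arguments $\bm 1\bm 1^{\!\top}+\bm \Gamma(\hat{\bm \alpha}_i,\check{\bm \alpha}_i)$; stage two estimates the latter by a matrix-norm bound on the difference of the two quadratic forms in $\bm \beta$. Throughout, write $\bm \Gamma_i$ as shorthand for $\bm \Gamma(\hat{\bm \alpha}_i,\check{\bm \alpha}_i)=\frac{1}{4\eta}\diag(\bm \beta_i)\bm K\diag(\bm \beta_i)$.

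For stage one I would use that $\mathcal{J}_{\frac{\tau}{2}}$ is the proximity operator of the convex function $\frac{\tau}{2}\|\cdot\|_*$ (Theorem~2.1 of \citep{cai2010singular}); proximity operators of proper closed convex functions are firmly non-expansive, hence $1$-Lipschitz in the Frobenius norm. Since the additive term $\bm 1\bm 1^{\!\top}$ is common to both arguments and cancels in the difference, this yields the first claimed inequality:
\[
\| \bm F(\hat{\bm \alpha}_1,\check{\bm \alpha}_1) - \bm F(\hat{\bm \alpha}_2,\check{\bm \alpha}_2) \|_{\F}
= \| \mathcal{J}_{\frac{\tau}{2}}(\bm 1\bm 1^{\!\top} + \bm \Gamma_1) - \mathcal{J}_{\frac{\tau}{2}}(\bm 1\bm 1^{\!\top} + \bm \Gamma_2) \|_{\F}
\le \| \bm \Gamma(\hat{\bm \alpha}_1,\check{\bm \alpha}_1) - \bm \Gamma(\hat{\bm \alpha}_2,\check{\bm \alpha}_2) \|_{\F}.
\]
For stage two I would expand the difference of the two congruence transforms by the polarization-type identity (obtained by substituting $\bm \beta_1=\bm s+\bm d$, $\bm \beta_2=\bm s-\bm d$ with $\bm s=(\bm \beta_1+\bm \beta_2)/2$, $\bm d=(\bm \beta_1-\bm \beta_2)/2$ and simplifying):
\[
\diag(\bm \beta_1)\bm K\diag(\bm \beta_1) - \diag(\bm \beta_2)\bm K\diag(\bm \beta_2)
= \frac{1}{2}\Big[ \diag(\bm \beta_1 + \bm \beta_2)\,\bm K\,\diag(\bm \beta_1 - \bm \beta_2) + \diag(\bm \beta_1 - \bm \beta_2)\,\bm K\,\diag(\bm \beta_1 + \bm \beta_2) \Big].
\]
Then, taking Frobenius norms, the triangle inequality splits this into two symmetric terms; applying $\| \bm A\bm B\bm C \|_{\F}\le \| \bm A\|_{\F}\,\|\bm B\|_2\,\|\bm C\|_2$ together with $\|\diag(\bm v)\|_{\F}=\|\bm v\|_2$ and $\|\diag(\bm v)\|_2=\|\bm v\|_\infty\le\|\bm v\|_2$ bounds each term by $\|\bm \beta_1+\bm \beta_2\|_2\,\|\bm K\|\,\|\bm \beta_1-\bm \beta_2\|_2$; collecting the constants $\frac{1}{4\eta}\cdot\frac{1}{2}\cdot 2=\frac{1}{4\eta}$ produces exactly $\frac{\|\bm K\|}{4\eta}\,\|\bm \beta_1+\bm \beta_2\|_2\,\|\bm \beta_1-\bm \beta_2\|_2$, which after unfolding $\bm \beta_i=\hat{\bm \alpha}_i-\check{\bm \alpha}_i$ is the stated bound.

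I do not expect a genuine obstacle here—the argument is fully parallel to Lemma~\ref{gammabound}—but the one care-point is the bookkeeping of matrix norms in stage two: in each of the two symmetric terms one diagonal factor must be measured in the Frobenius norm (so it contributes exactly $\|\bm \beta_1\pm\bm \beta_2\|_2$) while the other is measured in the spectral norm and then relaxed via $\|\diag(\bm v)\|_2=\|\bm v\|_\infty\le\|\bm v\|_2$, and $\bm K$ must be kept in the spectral norm $\|\bm K\|=\|\bm K\|_2$. Choosing the split the other way, or measuring $\bm K$ in $\|\cdot\|_{\F}$, would still give a finite bound but a looser constant than the one in the statement. Everything else—the expansion, the cancellation of $\bm 1\bm 1^{\!\top}$, and the non-expansiveness of $\mathcal{J}_{\frac{\tau}{2}}$—is routine.
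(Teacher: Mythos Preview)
Your proposal is correct and follows exactly the template the paper intends: the paper omits the proof of Lemma~\ref{gammaboundsvr} and simply states it is ``similar to that of Lemma~\ref{gammabound}'', whose proof uses the non-expansiveness of $\mathcal{J}_{\tau/2}$ followed by a factorization of $\diag(\cdot)\bm K\diag(\cdot)-\diag(\cdot)\bm K\diag(\cdot)$ and submultiplicativity of norms. Your symmetrized polarization identity is in fact a slightly more careful version of the paper's one-sided factorization step in Appendix~\ref{proofgammabound} (which is written there as an equality but is really an inequality via the symmetric-part bound), and your choice to keep $\bm K$ in the spectral norm rather than the Frobenius norm gives a constant at least as good as the paper's; either way the stated bound follows.
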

The proof of Lemma~\ref{gammaboundsvr} is similar to that of Lemma~\ref{gammabound}, and here we omit the detailed proof.
Next we present the partial derivative of ${h}(\hat{\bm \alpha}, \check{\bm \alpha})$ regarding to $\hat{\bm \alpha}$ and $\check{\bm \alpha}$.
\begin{proposition}\label{theorgradsvr}
	The objective function ${h}(\hat{\bm \alpha}, \check{\bm \alpha})$ with two variables defined by Eq.~\eqref{falphasvr} is differentiable and its partial derivatives are given by
	\begin{align}\label{gradfsvr}
	\left\{
	\begin{array}{rcl}
	\begin{split}
	& \frac{\partial {{{h}(\hat{\bm \alpha}, \check{\bm \alpha})}}}{\partial \hat{\bm \alpha}}  =- \varepsilon \bm I - ( \hat{\bm \alpha} - \check{\bm \alpha}) \bm{F} \odot \bm K + \bm y\,,  \\
	& \\
	& \frac{\partial {{{h}(\hat{\bm \alpha}, \check{\bm \alpha})}}}{\partial \check{\bm \alpha}}  =- \varepsilon \bm I - ( \hat{\bm \alpha} - \check{\bm \alpha}) \bm{F} \odot \bm K - \bm y\,.
	\end{split}
	\end{array} \right.
	\end{align}
\end{proposition}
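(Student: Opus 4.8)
The plan is to treat $h(\hat{\bm\alpha},\check{\bm\alpha})$ as the value function of the parametric minimization $\min_{\bm F\in\mathcal{S}^n_+}H(\hat{\bm\alpha},\check{\bm\alpha},\bm F)$ and apply Danskin's (envelope) theorem, after which the claimed formulas fall out of a direct matrix differentiation. First I would record the structural facts about $H$: for each fixed $\bm F\in\mathcal{S}^n_+$ the map $(\hat{\bm\alpha},\check{\bm\alpha})\mapsto H(\hat{\bm\alpha},\check{\bm\alpha},\bm F)$ is jointly concave — the quadratic term involves $\bm F\odot\bm K\succeq 0$ by the Schur product theorem (since $\bm F\in\mathcal{S}^n_+$ and $\bm K$ is PSD), and the remaining terms are affine in $(\hat{\bm\alpha},\check{\bm\alpha})$ — and it is continuously differentiable in $(\hat{\bm\alpha},\check{\bm\alpha})$, with the non-smooth pieces $\eta\|\bm F-\bm 1\bm 1^{\!\top}\|_{\F}^2+\tau\eta\|\bm F\|_*$ not depending on $(\hat{\bm\alpha},\check{\bm\alpha})$ at all. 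Consequently $h=\min_{\bm F}H$ is concave, so differentiability at a point is equivalent to its superdifferential there being a singleton.

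Next I would verify the hypotheses that make Danskin's theorem applicable. Exactly as in Lemma~\ref{fbound}, for fixed $(\hat{\bm\alpha},\check{\bm\alpha})$ the inner problem over $\bm F$ is strongly convex because of the $\eta\|\bm F-\bm 1\bm 1^{\!\top}\|_{\F}^2$ term, so its minimizer is \emph{unique}; moreover it is given in closed form by $\bm F^*=\mathcal{J}_{\frac{\tau}{2}}(\bm 1\bm 1^{\!\top}+\bm\Gamma(\hat{\bm\alpha},\check{\bm\alpha}))$ (cf.~\eqref{svrfa}), lies in the bounded set $\mathcal{B}$, and — by the SVR analogue of Lemma~\ref{gammaboundsvr} — depends continuously (indeed Lipschitz) on $(\hat{\bm\alpha},\check{\bm\alpha})$. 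Restricting the feasible set to the compact $\mathcal{B}$ (admissible by Lemma~\ref{fbound}) together with continuity of $H$ and $\nabla_{\hat{\bm\alpha},\check{\bm\alpha}}H$ in all arguments and uniqueness of the minimizer lets Danskin's theorem conclude: $h$ is differentiable with $\nabla_{\hat{\bm\alpha}}h=\nabla_{\hat{\bm\alpha}}H(\hat{\bm\alpha},\check{\bm\alpha},\bm F^*)$ and $\nabla_{\check{\bm\alpha}}h=\nabla_{\check{\bm\alpha}}H(\hat{\bm\alpha},\check{\bm\alpha},\bm F^*)$.

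Finally I would compute the two partial derivatives of $H$ at $\bm F=\bm F^*$ by routine calculus: differentiating $-\tfrac12(\hat{\bm\alpha}-\check{\bm\alpha})^{\!\top}(\bm F^*\odot\bm K)(\hat{\bm\alpha}-\check{\bm\alpha})+(\hat{\bm\alpha}-\check{\bm\alpha})^{\!\top}\bm y-\varepsilon(\hat{\bm\alpha}+\check{\bm\alpha})^{\!\top}\bm 1$ with respect to $\hat{\bm\alpha}$ yields $-(\bm F^*\odot\bm K)(\hat{\bm\alpha}-\check{\bm\alpha})+\bm y-\varepsilon\bm 1$, and with respect to $\check{\bm\alpha}$ yields $(\bm F^*\odot\bm K)(\hat{\bm\alpha}-\check{\bm\alpha})-\bm y-\varepsilon\bm 1$, which is precisely \eqref{gradfsvr} (writing $\bm F$ for $\bm F^*$). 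The only real obstacle is the justification of the envelope step, i.e.\ uniqueness and continuity of the inner minimizer $\bm F^*$; both are supplied by the strong convexity from the Frobenius penalty (uniqueness and closed form) and by Lemma~\ref{gammaboundsvr} (Lipschitz continuity), in exact parallel with the SVM development, and nothing more delicate is needed.
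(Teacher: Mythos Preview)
Your proposal is correct and mirrors the paper's own approach: the paper does not give a separate proof of this proposition, but in the analogous SVM case (the proof of Theorem~\ref{theor}) it justifies the gradient formula for $h(\bm\alpha)$ exactly as you do --- by noting that the inner minimizer $\bm F(\bm\alpha)$ is unique on the compact set $\mathcal{F}$ and invoking the differentiability property of optimal value functions \citep{penot2004differentiability}, which is the same envelope/Danskin principle you spell out with the hypotheses verified. Your final direct differentiation is routine and correct (note that your $\partial h/\partial\check{\bm\alpha}$ has a $+(\bm F\odot\bm K)(\hat{\bm\alpha}-\check{\bm\alpha})$ term, whereas the displayed \eqref{gradfsvr} carries a minus sign on that term --- this appears to be a typo in the paper, not an error in your argument).
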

Formally, ${h}(\hat{\bm \alpha}, \check{\bm \alpha})$ is proven to be gradient-Lipschitz continuous by the following theorem.
\begin{theorem}\label{theorsvr}
	The function ${h}(\hat{\bm \alpha}, \check{\bm \alpha})$ with its partial derivatives in Eq.~\eqref{gradfsvr} is gradient-Lipschitz continuous, i.e., for any $\hat{\bm \alpha}_1, \check{\bm \alpha}_1, \hat{\bm \alpha}_2, \check{\bm \alpha}_2 \in \mathcal{A}$, let the concentration vectors be $\tilde{\bm \alpha}_1 = [\hat{\bm \alpha}_1^{\!\top}, \check{\bm \alpha}_1^{\!\top}]^{\!\top}$ and $\tilde{\bm \alpha}_2 = [\hat{\bm \alpha}_2^{\!\top}, \check{\bm \alpha}_2^{\!\top}]^{\!\top}$, and the partial  derivatives be
	\begin{small}
		\begin{equation*}
		\begin{split}
		\nabla_{\tilde{\bm \alpha}_1} h(\hat{\bm \alpha}_1, \check{\bm \alpha}_1) & \!=\!\bigg[\Big( \frac{\partial {{{h}(\hat{\bm \alpha}, \check{\bm \alpha}_1)}}}{\partial \hat{\bm \alpha}}\big|_{\hat{\bm \alpha}\! =\! \hat{\bm \alpha}_1} \Big)^{\!\top}, \Big( \frac{\partial {{{h}(\hat{\bm \alpha}_1, \check{\bm \alpha})}}}{\partial \check{\bm \alpha}}\big|_{\check{\bm \alpha}\! =\! \check{\bm \alpha}_1} \Big)^{\!\top} \bigg]^{\!\top}\,, \\
		\nabla_{\tilde{\bm \alpha}_2} h(\hat{\bm \alpha}_2, \check{\bm \alpha}_2) & \!=\!\bigg[\Big( \frac{\partial {{{h}(\hat{\bm \alpha}, \check{\bm \alpha}_2)}}}{\partial \hat{\bm \alpha}}\big|_{\hat{\bm \alpha} \!=\! \hat{\bm \alpha}_2} \Big)^{\!\top}, \Big( \frac{\partial {{{h}(\hat{\bm \alpha}_2, \check{\bm \alpha})}}}{\partial \check{\bm \alpha}}\big|_{\check{\bm \alpha} = \check{\bm \alpha}_2} \Big)^{\!\top} \bigg]^{\!\top}\,, \\
		\end{split}
		\end{equation*}
	\end{small}
	we have
	\begin{small}
		\begin{equation*}
		\begin{split}
		& \|\nabla_{ \!\tilde{\bm \alpha}_1}  {h}(\hat{\bm \alpha}_1,  \check{\bm \alpha}_1 \!)  -  \nabla_{ \tilde{\bm \alpha}_2}  {h}( \hat{\bm \alpha}_2, \! \check{\bm \alpha}_2 ) \|_2  \!\leq  \! 2L  \Big(  \| \hat{\bm \alpha}_2  -  \hat{\bm \alpha}_1 \|_2  +  \| \check{\bm \alpha}_2  -  \check{\bm \alpha}_1  \|_2  \! \Big)\,.
		\end{split}
		\end{equation*}
	\end{small}
	where the Lipschitz constant is $L=2\kappa \Big(n+\frac{ 9nC^2 \|\bm{K}\|_{\F}}{4\eta} \Big)$.
\end{theorem}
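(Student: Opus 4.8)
The plan is to imitate, with the obvious modifications, the argument that establishes gradient-Lipschitz continuity in the SVM case (Theorem~\ref{theor}), splitting the gradient difference into a ``variation of the dual variables'' piece and a ``variation of $\bm F$'' piece, and controlling the latter by Lemma~\ref{gammaboundsvr}. First I would invoke Proposition~\ref{theorgradsvr}, which already supplies the explicit partial derivatives of $h$ (so differentiability is not in question), together with the SVR analogue of Lemma~\ref{fbound}, i.e. the identity $h(\hat{\bm\alpha},\check{\bm\alpha})=\min_{\bm F\in\mathcal B}H(\hat{\bm\alpha},\check{\bm\alpha},\bm F)$ noted above, which confines the inner minimiser $\bm F(\hat{\bm\alpha},\check{\bm\alpha})=\mathcal J_{\tau/2}\big(\bm 1\bm 1^{\!\top}+\bm\Gamma(\hat{\bm\alpha},\check{\bm\alpha})\big)$ to the bounded set $\mathcal B$ for every feasible pair (obtained by copying the eigenvalue estimate~\eqref{eigenf} with $\bm\Gamma(\hat{\bm\alpha},\check{\bm\alpha})$ in place of $\bm\Gamma(\bm\alpha)$ and using $\|\hat{\bm\alpha}-\check{\bm\alpha}\|_2^2\le 4nC^2$). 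In the gradient difference $\nabla_{\tilde{\bm\alpha}_1}h(\hat{\bm\alpha}_1,\check{\bm\alpha}_1)-\nabla_{\tilde{\bm\alpha}_2}h(\hat{\bm\alpha}_2,\check{\bm\alpha}_2)$ the constant $\varepsilon$-term and the $\pm\bm y$ terms cancel in both blocks, so each of the two blocks equals the same vector $\bm G_2-\bm G_1$, where $\bm G_k:=\big(\bm F_k\odot\bm K\big)(\hat{\bm\alpha}_k-\check{\bm\alpha}_k)$ and $\bm F_k:=\bm F(\hat{\bm\alpha}_k,\check{\bm\alpha}_k)$. Hence $\|\nabla_{\tilde{\bm\alpha}_1}h-\nabla_{\tilde{\bm\alpha}_2}h\|_2=\sqrt 2\,\|\bm G_1-\bm G_2\|_2$, and everything reduces to bounding $\|\bm G_1-\bm G_2\|_2$.

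Next I would insert $\pm(\bm F_1\odot\bm K)(\hat{\bm\alpha}_2-\check{\bm\alpha}_2)$ to split
\[
\bm G_1-\bm G_2=(\bm F_1\odot\bm K)\big[(\hat{\bm\alpha}_1-\check{\bm\alpha}_1)-(\hat{\bm\alpha}_2-\check{\bm\alpha}_2)\big]+\big[(\bm F_1-\bm F_2)\odot\bm K\big](\hat{\bm\alpha}_2-\check{\bm\alpha}_2).
\]
For the first piece, $\|\bm F_1\odot\bm K\|_2$ is uniformly bounded by a quantity of order $n+\tfrac{nC^2}{\eta}\|\bm K\|$ because $\bm F_1\in\mathcal B$ (a Hadamard-product norm inequality combined with the spectral bound defining $\mathcal B$, exactly as for Theorem~\ref{theor}), and $\big\|(\hat{\bm\alpha}_1-\check{\bm\alpha}_1)-(\hat{\bm\alpha}_2-\check{\bm\alpha}_2)\big\|_2\le\|\hat{\bm\alpha}_1-\hat{\bm\alpha}_2\|_2+\|\check{\bm\alpha}_1-\check{\bm\alpha}_2\|_2$ by the triangle inequality. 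For the second piece, I would pull the Hadamard product through the same norm inequality, $\big\|(\bm F_1-\bm F_2)\odot\bm K\big\|_2\le\|\bm F_1-\bm F_2\|_{\F}\,\|\bm K\|_{\F}$, bound $\|\hat{\bm\alpha}_2-\check{\bm\alpha}_2\|_2\le 2\sqrt n\,C$ using the feasible set $\mathcal A$, and invoke Lemma~\ref{gammaboundsvr} to get $\|\bm F_1-\bm F_2\|_{\F}\le\frac{\|\bm K\|}{4\eta}\big\|\hat{\bm\alpha}_1-\check{\bm\alpha}_1+\hat{\bm\alpha}_2-\check{\bm\alpha}_2\big\|_2\,\big\|\hat{\bm\alpha}_1-\check{\bm\alpha}_1-\hat{\bm\alpha}_2+\check{\bm\alpha}_2\big\|_2$, in which the first factor is at most $4\sqrt n\,C$ on $\mathcal A$ and the second factor is again at most $\|\hat{\bm\alpha}_1-\hat{\bm\alpha}_2\|_2+\|\check{\bm\alpha}_1-\check{\bm\alpha}_2\|_2$.

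Collecting the two pieces, each is bounded by a constant depending only on $n,C,\|\bm K\|_{\F},\eta$ times $\big(\|\hat{\bm\alpha}_1-\hat{\bm\alpha}_2\|_2+\|\check{\bm\alpha}_1-\check{\bm\alpha}_2\|_2\big)$; re-inserting the factor $\sqrt 2$ from the two-block structure and simplifying should produce precisely the stated $L=2\big(n+\tfrac{9nC^2\|\bm K\|_{\F}^2}{4\eta}\big)$. Most of the work here is bookkeeping, but it must be done carefully: every place where a magnitude of the dual variable entered the SVM proof as $\|\bm\alpha\|\le\sqrt n\,C$ now enters as $\|\hat{\bm\alpha}-\check{\bm\alpha}\|\le 2\sqrt n\,C$ (and the mixed term $\|\hat{\bm\alpha}_1-\check{\bm\alpha}_1+\hat{\bm\alpha}_2-\check{\bm\alpha}_2\|\le 4\sqrt n\,C$), which is exactly what turns the ``$3$'' of Theorem~\ref{theor} into the ``$9$'' here, and one should pick the Hadamard-norm inequality so that $\|\bm K\|$ appears as $\|\bm K\|_{\F}$. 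The one point that is a genuine (if small) obstacle rather than routine algebra is the first one: one must verify the SVR version of Lemma~\ref{fbound} so that all the $\bm F_k$ really lie in one common bounded set $\mathcal B$, which is what makes $\|\bm F_k\odot\bm K\|_2$ admit a bound independent of $\hat{\bm\alpha},\check{\bm\alpha}$ and hence makes the first-piece estimate legitimate.
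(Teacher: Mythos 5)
Your proposal is correct and follows essentially the same route as the paper's proof: split the gradient difference into a term carrying $\bm F_1-\bm F_2$ (controlled by Lemma~\ref{gammaboundsvr}) and a term carrying the difference of the dual variables (controlled by the SVR analogue of the spectral bound in Lemma~\ref{fbound}), then use the Hadamard-norm inequality and $\|\hat{\bm\alpha}-\check{\bm\alpha}\|_2\le 2\sqrt{n}C$ to assemble the constant. The only cosmetic differences are which factor you attach the $\bm F$-difference to in the $\pm$ insertion and your (slightly sharper) observation that the two gradient blocks coincide, giving $\sqrt{2}\,\|\bm G_1-\bm G_2\|_2$ where the paper simply adds the two block norms.
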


%
\begin{proof}
	The proofs can be found in Appendix~\ref{prooftheorsvr}.
\end{proof}

Based on the gradient-Lipschitz continuity of ${h}(\hat{\bm \alpha}, \check{\bm \alpha})$ demonstrated by Theorem~\ref{theorsvr},
we are ready to present the first-order Nesterov's smooth optimization method for problem \eqref{mainsvr}.
The smooth optimization algorithm is summarized in Algorithm~\ref{ago3svr}.
\begin{algorithm}
	\caption{Projected gradient method with Nesterov's acceleration for problem \eqref{mainsvr}}
	\label{ago3svr}
	\KwIn{The kernel matrix $\bm{K}$, the label matrix $\bm{Y}$, and the Lipschitz constant $L$ derved in Theorem~\ref{theorsvr}}
	\KwOut{The optimal ${\bm \alpha}^*$}
	Set the stopping criteria $t_{\max}=2000$ and $\epsilon=10^{-4}$.\\
	Initialize $t = 0$ and ${\hat{\bm \alpha}^{(0)}}, {\check{\bm \alpha}^{(0)}} \in \mathcal{A} := \bm 0$.\\
	\SetKwRepeat{RepeatUntil}{Repeat}{Until}
	\RepeatUntil{$t \geq t_{\max}$ or $ \| \bm \alpha^{(t)} - \bm \alpha^{(t-1)} \|_2 \leq \epsilon$}
	{ Compute $\bm{F}({\hat{\bm \alpha}^{(t)}}, {\check{\bm \alpha}^{(t)}} )=\mathcal{J}_{\frac{\tau}{2}}\big(\bm 1 \bm{1}^{\!\top}+\bm \Gamma({\hat{\bm \alpha}^{(t)}}, {\check{\bm \alpha}^{(t)}} ) \big)$\;
		Compute  ${\partial {{h}}}/{\partial \hat{\bm \alpha}} $ and ${\partial {{h}}}/{\partial \check{\bm \alpha}}$ by Eq.~\eqref{gradfsvr}, and concentrate them as $\nabla h ({\hat{\bm \alpha}^{(t)}}, {\check{\bm \alpha}^{(t)}} ) = [({\partial {{h}}}/{\partial \hat{\bm \alpha}})^{\!\top}, ({\partial {{h}}}/{\partial \check{\bm \alpha}})^{\!\top}]^{\!\top}$ \;
		Compute $\bm \theta^{(t)} \!=\! \mathcal{P}_{\mathcal{A}}\Big([{\hat{\bm \alpha}^{(t)\!\top}}, {\check{\bm \alpha}^{(t)\!\top}}]^{\!\top} + \frac{1}{2L}\nabla h ({\hat{\bm \alpha}^{(t)}}, {\check{\bm \alpha}^{(t)}} ) \Big)$ \;
		Compute $\bm \beta^{(t)}\!=\!\mathcal{P}_{\!\mathcal{A}}\Big( \![\hat{\bm \alpha}^{(0)\!\top}, \check{\bm \alpha}^{(0)\!\top}]^{\!\top} \!-\! \frac{1}{4L}\!\sum_{i=0}^{t}(i+1) \nabla h ({\hat{\bm \alpha}^{(i)}}, {\check{\bm \alpha}^{(i)}} ) \Big)$\;
		Set $[{\hat{\bm \alpha}^{(t+1)\!\top}}, {\check{\bm \alpha}^{(t+1)\!\top}}]^{\!\top}  = \frac{t+1}{t+3}\bm \theta^{(t)} + \frac{2}{t+3}\bm \beta^{(t)}$\;
		Set $\bm \alpha^{(t+1)} = \hat{\bm \alpha}^{(t+1)} - \check{\bm \alpha}^{(t+1)}$ and $t := t + 1$\;}
\end{algorithm}

\section{Experimental Results}
\label{sec:experiment}
This section evaluates the performance of our DANK model in comparison with several representative kernel learning algorithms on classification and regression benchmark data sets.
All the experiments implemented in MATLAB are conducted on a Workstation with an Intel$^\circledR$ Xeon$^\circledR$ E5-2695 CPU (2.30 GHz) and 64GB RAM.
The source code of our DANK model in Algorithm~\ref{ago3} can be found in \url{http://www.lfhsgre.org}.

\subsection{Classification Tasks}

We conduct experiments on the UCI Machine
Learning Repository with small scale data sets,~\footnote{\url{https://archive.ics.uci.edu/ml/datasets.html}} and 
three large data sets including \emph{EEG},~\emph{ijcnn1} and \emph{covtype}.\footnote{All datasets are available at
\url{https://www.csie.ntu.edu.tw/~cjlin/libsvmtools/datasets/}}
Besides, we also compare these methods on the \emph{CIFAR-10} database for image classification.\footnote{\url{https://www.cs.toronto.edu/~kriz/cifar.html}}

\subsubsection{Classification Results on UCI database}
\label{sec:classUCI}
Ten small data sets from the UCI database are used to evaluate our DANK model embedded in SVM.
Here we describe experimental settings and the compared algorithms as follows.

\noindent{\bf Experimental Settings:} Table~\ref{UCIres} lists a brief description of these ten data sets including the number of training data $n$ and the feature dimension $d$.
After normalizing the data to $[0, 1]^d$ by a min-max scaler, we randomly pick half of the data for training and the rest for test except for \emph{monks1}, \emph{monks2}, and \emph{monks3}.
In these three data sets, both training and test data have been provided.
The Gaussian kernel $k(\bm x_i, \bm x_j) = \exp(-{\| \bm x_i - \bm x_j \|_2^2}/{2\sigma^2})$ is chosen as the initial kernel in our model.
The kernel width $\sigma$ and the balance parameter $C$ are tuned by 5-fold cross validation on a grid of points, i.e., $\sigma=[2^{-5}, 2^{-4},\dots,2^5]$ and $C = [2^{-5}, 2^{-4},\dots,2^5]$.
To avoid additional cross validation, we manually set the penalty parameter $\tau$ to 0.01.
The regularization parameter $\eta$ is fixed to $\| \bm \alpha \|_2^2$ obtained by SVM.
The experiments are conducted 10 times on these ten data sets.

\noindent{\bf Compared Methods:} We include the following kernel learning based algorithms:
\begin{itemize}
	\item BMKL \citep{Gonen2012Bayesian}: A multiple kernel learning algorithm uses Bayesian approach to ensemble the Gaussian kernels with ten different kernel widths and the polynomial kernels with three different degrees.
	\item LogDet \citep{Jain2012Metric}: A nonparametric kernel learning approach aims to learn a PSD matrix $\bm W$ in a learned kernel $\phi(\bm x)^{\!\top} \bm W  \phi(\bm x')$ with the LogDet divergence.
	\item RF \citep{AmanNIPS2016}: A kernel alignment based learning framework creates randomized features, and then solves a simple optimization problem to select a subset.
	Finally, the kernel is learned from the optimized features by target alignment.
	\item KNPL \citep{Liu2018AAAIme}: This nonparametric kernel learning framework is given by our conference version, which shares the initial ideas about learning in a data-adaptive scheme via a pair-wise way. But this work does not consider the bounded constraint and the low-rank structure on $\bm F$, and utilizes an alternating iterative algorithm to solve the corresponding semi-definite programming.
	\item MKL-uniform: It is a multiple kernel learning algorithm with uniform weights, and serves as a baseline. It uses 11 equal-weighted Gaussian kernels with the kernel width $\sigma=[2^{-5}, 2^{-4},\dots,2^5]$, respectively. This setting avoids tuning the kernel width $\sigma$ but the balance parameter $C$ is still tuned by 5-fold cross validation.
	\item SVM-CV: The SVM classifier with cross validation serves as a baseline.
\end{itemize}

\begin{table*}[t]
	\centering
	\scriptsize
	\caption{\small Comparison results with two baselines in terms of classification accuracy (mean$\pm$std. deviation \%) on ten UCI data sets. The best performance is highlighted in bold. The classification accuracy on the training data is presented by italic, and does not participate in ranking. Notation ``$\bullet$" indicates that our DANK method is significantly better than other baseline methods via paired t-test at the 5\% significance level.}
	\begin{small}
		\begin{tabular}{ccccccccccccccccccccc}
			\toprule[1.5pt]
			\multirow{2}{*}{Data set} &\multirow{2}{*}{($d$, $n$)} &\multicolumn{2}{c}{MKL-uniform}  &\multicolumn{2}{c}{SVM-CV} &\multicolumn{2}{c}{DANK}\cr
			\cmidrule(lr){3-4} \cmidrule(lr){5-6} \cmidrule(lr){7-8}
			{}&{}&Training&Test
			&Training&Test  &Training&Test\cr
			\midrule[1pt]
			diabetic &(19, 1151)	&	\emph{83.5$\pm$14.2}	&	61.9$\pm$8.5	&	\emph{80.7$\pm$3.9}	&	73.0$\pm$1.7 &	\emph{87.0$\pm$1.9}&	{81.2$\pm$1.4}$\bullet$	\\
			\hline
			heart &(13, 270)	&	\emph{95.8$\pm$4.3}	&	82.1$\pm$2.9 		&	\emph{88.9$\pm$3.0}	&	81.9$\pm$2.4 	&	\emph{94.3$\pm$1.7}	&	{\bf87.9}$\pm$2.9$\bullet$	\\
			\hline
			monks1 &(6, 124)	&	\emph{99.3$\pm$1.3}	&	79.0$\pm$2.8 	&	\emph{90.3$\pm$0.0}	&	81.4$\pm$0.0 	&	\emph{100.0$\pm$0.0}	&	{\bf 83.6}$\pm$1.5$\bullet$	\\
			\hline
			monks2 &(6, 169)	&	\emph{100.0$\pm$0.0}	&	84.4$\pm$0.1	&	\emph{100.0$\pm$0.0}	&	{85.8}$\pm$1.4&	\emph{100.0$\pm$0.0}	&{\bf 86.7}$\pm$0.9\\
			\hline
			monks3 &(6, 122)	&	\emph{99.6$\pm$1.3}	&	90.8$\pm$0.6 	&	\emph{96.2$\pm$1.5}	&	{\bf 93.0}$\pm$1.2		&\emph{97.2$\pm$1.8}		&{\bf 93.0}$\pm$0.9	\\
			\hline
			sonar &(60, 208)	& \emph{100.0$\pm$0.0}	&	80.8$\pm$3.7 	&	\emph{99.9$\pm$0.3}	&	85.3$\pm$3.1  &	\emph{100.0$\pm$0.0}	&	{\bf87.0}$\pm$2.7$\bullet$	\\ \hline
			spect &(21, 80)	&\emph{93.8$\pm$0.0}	&	{\bf 79.8}$\pm$0.3 	&	\emph{87.0$\pm$3.7}	&	73.1$\pm$3.2	 &	\emph{93.4$\pm$4.1}	&	{78.9$\pm$3.2}	\\
			\hline
			glass	 &(9, 214) 	&	\emph{93.4$\pm$6.0}	& 53.0$\pm$7.0 	&	\emph{77.1$\pm$6.9}	&	69.8$\pm$2.0	 &	\emph{89.7$\pm$6.1}	&	{\bf 74.5}$\pm$1.3$\bullet$	\\
			\hline
			fertility	 &(9, 100) 	&	\emph{99.0$\pm$1.4}	&85.6$\pm$3.0	&	\emph{94.4$\pm$5.3}	&	85.2$\pm$1.7	 &	\emph{97.3$\pm$3.3}	&	{\bf 87.6}$\pm$2.3$\bullet$	\\
			\hline
			wine	 &(13, 178) 	&	\emph{100.0$\pm$0.0}	&96.0$\pm$3.8	&	\emph{99.5$\pm$1.0}	&	94.7$\pm$1.5	 &	\emph{99.5$\pm$1.0}	&	{\bf96.4}$\pm$2.0	\\
			\bottomrule[1.5pt]
		\end{tabular}
	\end{small}
	\label{SOCMKL}
\end{table*}

\begin{table*}[t]
	\centering
	\scriptsize
	\caption{\small Comparison results of several representative kernel learning based algorithms in terms of test accuracy on ten UCI dataaset. The best performance is highlighted in bold. Notation ``$\bullet$" indicates that the data-adaptive based algorithm (KNPL or DANK) is significantly better than other representative kernel learning methods via paired t-test at the 5\% significance level.}
	\begin{small}
		\begin{tabular}{cccccccccccccccccccc}
			\toprule[2pt]
			{Data set}  &{LogDet} &{BMKL} &{RF} &{KNPL} &{DANK}\cr
			\midrule[1pt]
			diabetic	&	78.7$\pm$1.8	&	{74.9}$\pm$0.4	&	72.3$\pm$0.8		&	{\bf81.9}$\pm$1.7$\bullet$ &	{81.2$\pm$1.4}$\bullet$	\\
			\hline
			heart 	&	80.6$\pm$3.5	&	85.6$\pm$0.8	&	79.1$\pm$2.4 	 &	87.4$\pm$3.9$\bullet$	&	{\bf87.9}$\pm$2.9$\bullet$	\\
			\hline
			monks1	&	{\bf 86.6}$\pm$0.2	&	78.9$\pm$2.5	&	{84.4}$\pm$0.9 &		83.3$\pm$3.3		&	83.6$\pm$1.5	\\
			\hline
			monks2	&	85.9$\pm$1.2	&	82.1$\pm$1.3	&	{73.6$\pm$1.1}		&	83.3$\pm$1.6	&{\bf 86.7}$\pm$0.9\\
			\hline
			monks3 	&	{\bf 94.0}$\pm$1.3	&	{\bf94.0}$\pm$1.0	&	93.7$\pm$0.6 	&	88.7$\pm$1.2 &	93.0$\pm$0.9	\\
			\hline
			sonar 	&	84.1$\pm$2.2	&	84.8$\pm$0.6	&	80.5$\pm$3.1 	 &	85.8$\pm$2.8 &	{\bf87.0}$\pm$2.7$\bullet$	\\ \hline
			spect	&	79.6$\pm$3.7	&	{78.8}$\pm$0.8	&	76.0$\pm$2.7 		&	{\bf 79.7}$\pm$4.8 	&	{78.9$\pm$3.2}	\\
			\hline
			glass	&	71.2$\pm$1.0	&	68.2$\pm$4.6	& { 68.2}$\pm$2.2 		&	72.4$\pm$2.3 &	{\bf 74.5}$\pm$1.3$\bullet$	\\
			\hline
			fertility &	86.2$\pm$1.1	&	84.4$\pm$1.6	&84.4$\pm$4.3	&	85.6$\pm$3.8 	&	{\bf 87.6}$\pm$2.3$\bullet$	\\
			\hline
			wine	 &	96.1$\pm$1.8	&	95.0$\pm$2.8	&95.1$\pm$1.1	&	 96.1$\pm$2.0 &	{\bf96.4}$\pm$2.0	\\
			\bottomrule[2pt]
		\end{tabular}
	\end{small}
	\label{UCIres}
\end{table*}

{\bf Experimental Results:}
	We first evaluate the proposed DANK model with two baselines: MKL-uniform and SVM-CV in terms of classification accuracy on the training and test data in Table~\ref{SOCMKL}, and then compare DANK with other representative kernel learning based algorithms in Table~\ref{UCIres}.
	
	In Table~\ref{SOCMKL}, MKL-uniform sometimes performs the best on the training data, but fails to generalize on the test data in most cases. In general, it is inferior to SVM-CV and our DANK model in terms of the test accuracy. Directly enlarging the solving space without any constraint would increase model flexibility but is easy to be over-fitting.
	Compared with the baseline SVM-CV, the proposed DANK model achieves a good trade-off between model flexibility and complexity. Training accuracy on \emph{diabetic}, \emph{heart}, \emph{monks1}, \emph{spect}, and \emph{glass} indicates the effectiveness of our data adaptive scheme on increasing the model flexibility.
	Accordingly, this strategy is helpful for our model to achieve noticeable improvements on the test data.
	On the \emph{monks2}, \emph{sonar}, and \emph{wine} data sets, SVM-CV has already obtained nearly 100\% accuracy on the training data, which indicates that the model flexibility is sufficient.
	In this case, it is difficult for our DANK method to achieve a huge improvement on these data sets, and accordingly the performance margins are about 0\%$\sim$2\%.
Table~\ref{UCIres} reports the test accuracy of typical kernel learning based algorithms.
We also apply the paired t-test at the 5\% significance level to investigate whether the data-adaptive approaches (KNPL and DANK) are significantly better than other methods.
It can be found that, compared with representative kernel learning based algorithms including LogDet, BMKL, and RF, the proposed DANK model yields favorable performance.

In general, the improvements on the classification accuracy demonstrate the effectiveness of our data adaptive scheme, and accordingly our model has good adaptivity to the training and test data.

\subsubsection{Results on large-scale data sets}

\begin{table}
	\centering
	\small
	\caption{Large-sample data set statistics and parameter settings.}
	\label{tablarge}
	\begin{threeparttable}
		\begin{tabular}{cccccccccccccc}
			\toprule[1.5pt]
			data sets &d & \#training & \#test & $C$ & $1/2\sigma^2$ &\#clusters\\
			\midrule[1pt]
			\emph{EEG}  &14 &7,490 &7,490 &32 & 100 &5
			\\
			\hline
			\emph{ijcnn1}  &22 &49,990 &91,701 &32 & 2 &50
			\\
			\hline
			\emph{covtype}  &54 &464,810 &116,202 &32 & 32 &200
			\\
			\bottomrule[1.5pt]
		\end{tabular}
	\end{threeparttable}
	\vspace{-0.35cm}
\end{table}

\begin{table*}[t]
	\centering
	\fontsize{9}{8}\selectfont
	\begin{threeparttable}
		\caption{\small Comparison of test accuracy and training time of all the compared algorithms on several large data sets. The rank of $\bm F$ in our DANK model is also given in bold.}
		\label{tablarges}
		\begin{tabular}{cccccccccccccccccccc}
			\toprule[1.5pt]
			\multicolumn{2}{c}{Method} &SVM-SMO  &
			\multicolumn{2}{c}{LogDet} &\multicolumn{2}{c}{BMKL} &RF &\multicolumn{2}{c}{DANK}\cr
			\cmidrule(lr){1-2} \cmidrule(lr){3-3} \cmidrule(lr){4-5} \cmidrule(lr){6-7} \cmidrule(lr){8-8} \cmidrule(lr){9-10}
			Data set&Setting &exact &exact&scalable &exact&scalable &exact &exact(rank.) &scalable(rank.) \cr
			\midrule[1pt]
			\multirow{3}{0.5cm}{\emph{EEG}} &acc.(\%) &95.9 &95.9 &95.2  &94.5 &94.1 &80.1 &96.7({\bf 36}) &96.3({\bf 142}) \\
			\cmidrule(lr){2-2} \cmidrule(lr){3-3} \cmidrule(lr){4-5} \cmidrule(lr){6-7} \cmidrule(lr){8-8} \cmidrule(lr){9-10}
			&time(sec.)  &8.6 &211.6 &24.6 &1426.3 &124.5 &2.4 &473.6 &39.8 \cr
			\midrule[1pt]
			\multirow{3}{0.5cm}{\emph{ijcnn1}} &acc.(\%) &96.5 &97.9 &97.4  &98.5 &97.7 &93.0 &98.9({\bf 342}) &98.4({\bf 1788}) \\
			\cmidrule(lr){2-2} \cmidrule(lr){3-3} \cmidrule(lr){4-5} \cmidrule(lr){6-7} \cmidrule(lr){8-8} \cmidrule(lr){9-10}
			&time(sec.)  &112.4 &8472.3 &78.1 &109967 &1916.9 &25.0 &28548 &571.7 \cr
			\midrule[1pt]
			\multirow{3}{1cm}{\emph{covtype}} &acc.(\%) &96.1 &$\times$\tnote{1} &96.4 &$\times$ &91.2 &79.1 &$\times$ &97.1  \\
			\cmidrule(lr){2-2} \cmidrule(lr){3-3} \cmidrule(lr){4-5} \cmidrule(lr){6-7} \cmidrule(lr){8-8} \cmidrule(lr){9-10}
			&time(sec.) &3972.5  &$\times$ &5021.4 &$\times$ &94632 &364.5  &$\times$ &7534.2  \cr
			\bottomrule[1.5pt]
		\end{tabular}
		\begin{tablenotes}
			\footnotesize
			\item[1] These methods attempt to directly solve the optimization problem on \emph{covtype} but fail due to the memory limit.
		\end{tablenotes}
	\end{threeparttable}
\end{table*}

To validate our decomposition scheme on large scale situations, we consider three large data sets including \emph{EEG}, \emph{ijcnn1}, and \emph{covtype} for comparisons.
Table~\ref{tablarge} reports the data set statistics (i.e., the feature dimension $d$, the number of training samples, and the number of test data) and parameter settings including the balance parameter $C$, the kernel width $\sigma$, and the number of clusters.
Table~\ref{tablarges} presents the test accuracy and training time of various compared algorithms including  LogDet, BMKL, our DANK method, SVM-SMO \citep{Platt1999Fast} (the cache is set to 5000) and RF conducted in the following two settings.

In the first setting (``exact"), we attempt to directly test these algorithms over the entire training data.
Experimental results indicate that, without the decomposition-based scalable approach, our DANK method achieves the best test accuracy with 96.7\% sand 98.9\% on \emph{EEG} and \emph{ijcnn1}, respectively.
However, under this setting, LogDet, BMKL, and our method fail to deal with an extreme large data set \emph{covtype} due to the memory limit except SVM-SMO and RF.

In the second setting (``scalable"), we incorporate our kernel approximation scheme into LogDet, BMKL, and DANK evaluated on the three large data sets.
Experimental results show that, by such decomposition-based scalable approach, we can speed up the above three kernel learning methods.
For example, when compared with the direct solution of the optimization problem in the ``exact" setting, LogDet, BMKL, and DANK equipped with kernel approximation speed up about 100x, 50x, and 50x on \emph{ijcnn1}, respectively.
More importantly, on these three data sets, our DANK method using the approximation scheme still performs better than SVM-SMO on the test accuracy, which demonstrates the effectiveness of the proposed non-parametric kernel learning framework.

Results in above two settings show that, our DANK method achieves promising test accuracy no matter whether the kernel approximation scheme is incorporated or not.
Moreover, such approximation scheme makes BMKL, LogDet, and our DANK method feasible to large data sets with huge speedup in terms of computational efficiency.

\subsubsection{Results on CIFAR-10 data set}

\begin{figure}
	\centering
	\pgfplotstableread[row sep=\\,col sep=&]{
		interval & carT  \\
		DANK     & 92.68  \\
		KNPL   & 91.72  \\
		VGG16   & 90.87   \\
		SVM-CV     & 90.35  \\
		RF  & 87.56 \\
		BMKL & 90.55 \\
		LogDet & 91.47 \\
	}\mydata
	
	\begin{tikzpicture}[scale=0.65]
	\begin{axis}[
	ybar,
	ymajorgrids=true,
	bar width=.5cm,
	width=.9\textwidth,
	height=.5\textwidth,
	legend style={at={(0.5,1)},
		anchor=north,legend columns=-1},
	symbolic x coords={DANK, KNPL, VGG16, SVM-CV, RF, BMKL, LogDet},
	xtick=data,
	nodes near coords,
	nodes near coords align={vertical},
	ymin=80,ymax=95,
	ylabel={\Large Accuracy(\%)},
	]	
	\addplot+[error bars/.cd,
	y dir=both,y explicit]
	coordinates {
		(DANK,92.68) +- (0.0, 0.41)
		(KNPL,91.72) +- (0.0, 0.52)
		(VGG16,90.87) +- (0.0, 0.54)
		(SVM-CV,90.35) +- (0.0, 0.18)
		(RF,87.56) +- (0.0, 0.92)
		(BMKL,90.55) +- (0.0, 0.58)
		(LogDet,91.47) +- (0.0, 0.44)};
	\legend{\Large{classification accuracy}}
	\end{axis}
	\end{tikzpicture}
	\caption{Performance of the compared algorithms on \emph{CIFAR-10} data set.}\label{cifar10}
	\vspace{-0.1cm}
\end{figure}

In this section, we test our model on a representative data set \emph{CIFAR-10} \citep{Krizhevsky2009Learning} for natural image classification task.
This data set contains 60,000 color images with the size of $32 \times 32 \times 3$  in 10 categories, of which 50,000 images are used for training and the rest are for testing.
In our experiment, each color image is represented by the feature extracted from a convolutional neural network, i.e., VGG16 with batch normalization \citep{ioffe2015batch} pre-trained on ImageNet \citep{Deng2009ImageNet}. Then we fine-tune this pre-trained VGG16 model on the CIFAR10 data set with 240 epochs and a min-batch size of 64.
The learning rate starts from 0.1 and then is divided by 10 at the 120-th, 160-th, and 200-th epoch.
After that, for each image, a 4096 dimensional feature vector is obtained according to the output of the first fully-connected layer in this fine-tuned neural network.

Figure~\ref{cifar10} shows the test accuracy (mean$\pm$std. deviation \%) of the compared algorithms averaged in ten trials.
The original VGG16 model with the softmax classifier achieves 90.87$\pm$0.54\% on the test accuracy.
Using the extracted 4096-D feature vectors, kernel learning based algorithms equipped with the initial Gaussian kernel are tuned by 5-fold cross validation on a grid of points, i.e., $\sigma=[0.001,0.01,0.1,1]$ and $C = [1,10,20,30,40,50,80,100]$.
In terms of classification performance, SVM-CV, BMKL, LogDet, and KNPL obtain 90.35$\pm$0.18\%, 90.55$\pm$0.58\%, 91.47$\pm$0.44\%, and 91.72$\pm$0.52\% accuracy on the test data, respectively.
Comparably, our DANK model achieves promising classification accuracy with 92.68$\pm$0.41\%.
More importantly, it outperforms SVM-CV with an accuracy margin of 2.33\%, and is \emph{statistically significant} better than the other methods via paired t-test at the 5\% significance level.
The improvement over SVM-CV on the test accuracy demonstrates that our DANK method equipped with the introduced kernel adjustment strategy is able to enhance the model flexibility, and thus achieves good performance.

\subsection{Analysis and Validation for Theoretical Results} 
In this subsection, we experimentally validate the effectiveness of the used Nesterov's smooth optimization method, the rationality of dropping out the low-rank regularizer in the large sample case, the robustness of $\tau$ via parameter sensitivity analysis, and the tight bound of our theoretical results.
\subsubsection{Convergence experiments}
\begin{figure}
	\begin{center}
		\subfigure[$H(\bm \alpha, \bm F)$]{\label{subobjf}
			\includegraphics[width=0.35\textwidth]{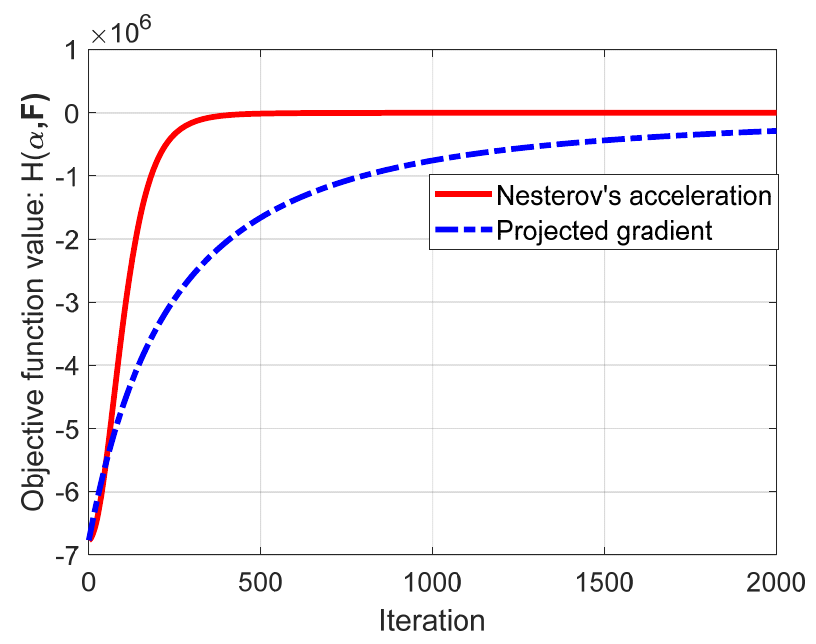}}
		\hspace{1cm}
		\subfigure[$\| \bm \alpha^{(t)} - \bm \alpha^{(t-1)} \|_2$]{\label{subala}
			\includegraphics[width=0.36\textwidth]{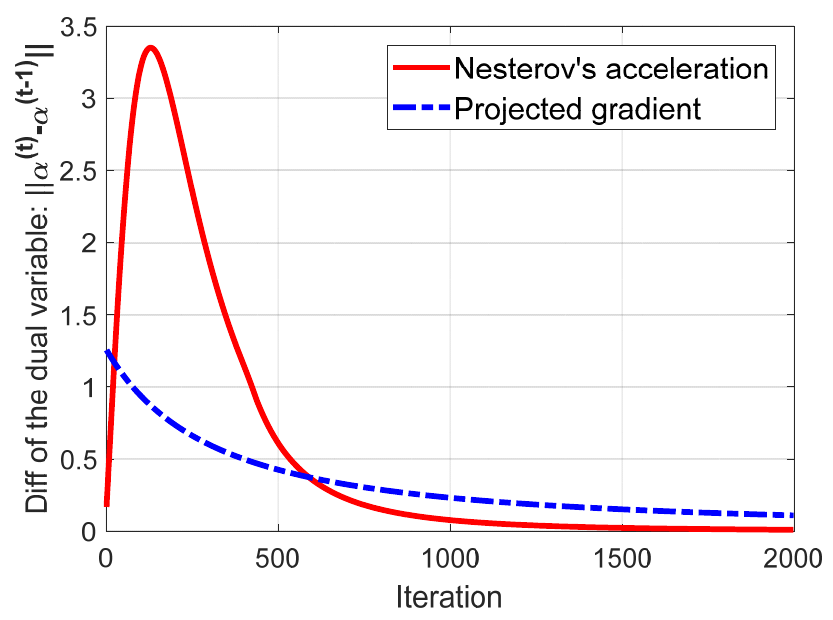}}
		\caption{Comparison between projected gradient method and our Nesterov's acceleration on the \emph{heart} data set.}
		\label{conv}
	\end{center}\vspace{-0.8cm}
\end{figure}

To investigate the effectiveness of the used Nesterov's smooth optimization method, we conduct a convergence experiment on the \emph{heart} data set.

In Figure~\ref{subobjf}, we plot the objective function value $H(\bm \alpha, \bm F)$ versus iteration by the standard projected gradient method (in blue dashed line) and its Nesterov's acceleration (in red solid line), respectively.
One can see that the developed first-order Nesterov's smooth optimization method converges faster than the projected gradient method, so the feasibility of employing Nesterov's acceleration for solving problem~\eqref{mainrank} is verified.
Besides, to further illustrate the convergence of $\{ \bm \alpha^{(t)} \}_{t=0}^{\infty}$, we plot $\| \bm \alpha^{(t)} - \bm \alpha^{(t-1)} \|_2$ versus iteration in Figure~\ref{subala}. We find that the sequence $\{ \bm \alpha^{(t)} \}_{t=0}^{\infty}$ yielded by the Nesterov's acceleration algorithm significantly decays in the first 500 iterations, which leads to a fast convergence to the optimal solution.
Hence, compared with projected gradient method, the Nesterov's smooth optimization method is able to efficiently solve the targeted convex optimization problem in this paper.

\subsubsection{Validation for the low-rank constraint}
\label{sec:explowrank}
\begin{table*}[t]
	\centering
	\scriptsize
	\caption{\small Influence of the low-rank constraint with different values of $\tau$ on test classification accuracy. Notation ``$\bullet$" indicates that test accuracy by this setting $\tau$ is statistical different from that of the current setting $\tau=0.01$ via paired t-test at the 5\% significance level. The rank of $\bm F$ in our DANK model is also given in bold.}
	\begin{small}
		\begin{tabular}{cccccccccccccccccccc}
			\toprule[2pt]
			{$\tau$}  &{0} &{0.001} &{0.01} &{0.1} &{1}\cr
			\midrule[1pt]
			diabetic	&	78.1$\pm$1.2 ({\bf 340})$\bullet$	&	81.1$\pm$1.2 ({\bf 10})	&	{81.2$\pm$1.4} ({\bf 5})		&	{81.2}$\pm$1.2 ({\bf 2}) &	{81.1}$\pm$1.2 ({\bf 2})	\\
			\hline
			heart 	&	84.9$\pm$2.2 ({\bf 109})$\bullet$	&	87.3$\pm$2.2 ({\bf 5})	&	87.9$\pm$2.9 ({\bf 3})	 &	86.5$\pm$2.5 ({\bf 2})$\bullet$	&	85.4$\pm$2.8 ({\bf 1})$\bullet$	\\
			\hline
			monks1	&	{82.1}$\pm$1.3 ({\bf 68})$\bullet$	&	82.5$\pm$1.4 ({\bf 15})$\bullet$	&	83.6$\pm$1.5 ({\bf 6}) &		83.2$\pm$1.4 ({\bf 2})		&	81.7$\pm$1.3 ({\bf 1})$\bullet$	\\
			\hline
			monks2	&	85.5$\pm$0.8 ({\bf 68})$\bullet$	&	86.2$\pm$0.7 ({\bf 13})$\bullet$	&	{86.7}$\pm$0.9 ({\bf 4})	&	86.4$\pm$0.8 ({\bf 2})	&{86.0}$\pm$0.9 ({\bf 2})$\bullet$ \\
			\hline
			monks3 	&	90.2$\pm$1.3 ({\bf 61})$\bullet$	&	92.8$\pm$1.0 ({\bf 8})	&	93.0$\pm$0.9 ({\bf 4})	&	93.3$\pm$1.1 ({\bf 2}) &	92.8$\pm$1.1 ({\bf 1})	\\
			\hline
			sonar 	&	84.0$\pm$2.4 ({\bf 81})$\bullet$	&	85.2$\pm$2.4 ({\bf 32})$\bullet$	&	87.0$\pm$2.7 ({\bf 17})	 &	87.4$\pm$2.6 ({\bf 7}) &	86.8$\pm$2.3 ({\bf 3})$\bullet$	\\ \hline
			spect	&	78.2$\pm$2.6 ({\bf 15}) 	&	{78.4}$\pm$2.8 ({\bf 8}) 	&	78.9$\pm$3.2 ({\bf 6}) 		&	{78.4}$\pm$2.8 ({\bf 3})  	&	{76.4}$\pm$2.5 ({\bf 2})$\bullet$ 	\\
			\hline
			glass	&	72.3$\pm$1.1 ({\bf 34})$\bullet$	&	73.7$\pm$1.4 ({\bf 14})$\bullet$	& 74.5$\pm$1.3 ({\bf 8})		&	73.3$\pm$1.2 ({\bf 3})$\bullet$ &	 71.4$\pm$1.5 ({\bf 2})$\bullet$	\\
			\hline
			fertility &	86.8$\pm$2.1 ({\bf 29})$\bullet$	&	87.1$\pm$2.2 ({\bf 11})	&87.6$\pm$2.3 ({\bf 5})	&	87.3$\pm$2.1 ({\bf 3}) 	& 87.2$\pm$2.2 ({\bf 2})	\\
			\hline
			wine	 &	96.1$\pm$2.0 ({\bf 5})	&	96.2$\pm$1.9 ({\bf 3})	&96.4$\pm$2.0 ({\bf 2})	&	 96.3$\pm$2.0 ({\bf 2}) &	96.3$\pm$2.0 ({\bf 2})	\\
			\bottomrule[2pt]
		\end{tabular}
	\end{small}
	\label{UCIrank}
\end{table*}

Due to the inseparable property of the low-rank constraint, we omit the low-rank regularizer on $\bm F$ for efficient optimization in the large sample case.
Here we experimentally specialize in the influence of $\| \bm F \|_*$ on the test classification accuracy in both small and large data sets.

Regarding to the small sample case, Table~\ref{UCIrank} reports the classification accuracy and the rank of $\bm F$ under different values of $\tau$ on these ten data sets appeared in Section \ref{sec:classUCI}.
Compared to our model with $\tau = 0.01$, the setting without the low-rank regularizer (i.e., $\tau = 0$) loses about 1\%$\sim$3\% accuracy on most data sets with statistical difference except for the \emph{spect} and \emph{wine} data sets.
On these two data sets, we find that, although the low-rank regularizer is not considered, the learned $\bm F$ still exhibits the low-rank structure, which effectively controls the model flexibility and complexity. Accordingly, our model without the low-rank constraint performs well on the two data sets.
However, in the remaining data sets, the rank of $\bm F$ significantly increases if we drop out the low-rank regularizer.
In this case, the solving space of our model could be extreme large, which would lead to over-fitting.
Besides, the learned $\bm F$ might be sophisticated, therefore, it is not easily extended to $\bm F'$ for test data by the simple nearest neighbor scheme.
Hence, results in Table~\ref{UCIrank} indicates that the low-rank constraint is important in small-sample data sets.

In terms of large sample case, according to Table~\ref{tablarges}, our DANK model with the ``exact" solution achieves 96.7\% and 98.9\% accuracy on the \emph{EEG} and \emph{ijcnn1} data set, respectively. 
In contrast, after omitting the low-rank regularizer $\| \bm F \|_*$, the test accuracy of our model in the ``scalable" setting decreases to 96.3\% and 98.4\%, respectively.
More importantly, we find that, without the low-rank constraint, the rank of $\bm F$ increases from 36 (in the ``exact" setting) to 142 (in the ``scalable" setting) on the \emph{EEG} data set with 7,490 training samples.
This tendency also exhibits on the \emph{ijcnn1} data set with 49,900 training samples.
The rank of $\bm F$ on this data set increases from $\text{rank}(\bm F) = 342$ to $\text{rank}(\bm F) = 1788$.
So the above results indicate that dropping the low-rank constraint leads to a slight decrease on the test accuracy; while the rank of $\bm F$ is indeed raising but is still much smaller than $n$.
In the next, we explain the reason why our model still obtains a relative low-rank matrix $\bm F$ in the large sample case.
On one hand, the regularization parameter $\eta$ is chosen as $\eta:= \| \bm \alpha \|_2^2 \in \mathcal{O}(n)$ in our experiment.
This would be an extreme large value in large sample data sets, resulting in a strong regularization term $\eta \| \bm F - \bm 1 \bm 1^{\!\top} \|_{\F} $.
Therefore, $\bm F$ varies around the all-one matrix in a small bounded region and thus shows a low-rank effect to some extent.
On the other hand, the used Gaussian kernel inherits the rapid decaying spectra \citep{smola2000sparse}, e.g., the exponential decay $\lambda_i \propto ne^{-ai}$ with $a > 0$ as illustrated by \cite{bach2013sharp}. 
As discussed in Section~\ref{sec:ana}, when the Gaussian kernel is chosen as the initial one, the used centering regularizer is helpful to obtain a relative low-rank kernel matrix $\widetilde{\bm K}$.
Based on the above analysis, in large sample case, using the centering regularizer $\eta \| \bm F - \bm 1 \bm 1^{\!\top} \|_{\F} $ could be an alternative choice for the low-rank property if we have to drop $\| \bm F \|_*$.

From above observations and analyses, we conclude that the low-rank constraint in our model is important for small-sample cases.
In large-sample data sets, due to the separability requirement, our DANK model has to drop the low-rank constraint, and thus achieving a slight decrease on the final classification accuracy. 
Nevertheless, the employed centering regularizer restricts $\bm F$ to a small bounded region around the all-one matrix, which would be an alternative choice to seek for a low-rank matrix $\bar{\bm F}$.

\subsubsection{Parameter sensitivity analysis}
\label{sec:paramana}
	The above subsection demonstrates the importance of the low-rank regularizer in our DNAK model on small sample case, so here we need to investigate the parameter sensitivity of $\tau$ to the test accuracy on these data sets.
	
	Table~\ref{UCIrank} reports the classification accuracy with $\tau$ in $\{ 0,0.001,0.01,0.1,1 \}$.
	We find that, when $\tau$ is changed from $0$ to $0.01$, the rank of $\bm F$ rapidly decreases, which implies that the model flexibility is effectively controlled. The test accuracy yielded by our DANK model is accordingly improved under this range.
	When $\tau$ further increases to $\tau=0.1$, there is no \emph{statistical significance} found between $\tau=0.01$ and $\tau=0.1$ except on the \emph{heart} and \emph{glass} data sets in terms of classification accuracy.
	Furthermore, if $\tau$ increases to $1$, $\bm F$ exhibits an extremely low-ranking structure. In this case, our model with $\tau=1$ is not flexible enough to fit the data, and thus is inferior to the setting of $\tau=0.01$ on most data sets.
	
	Based on the above observations, our DANK model is robust to the variations of $\tau$ ranging from $0.01$ to $0.1$, so it can be easily tuned and we suggest $\tau=0.01$ for practical use.

\subsubsection{Validation for our derived bounds}
\label{sec:bound}
\begin{figure}
	\centering
		\subfigure[\emph{ijcnn1}]{
		\includegraphics[width=0.31\textwidth]{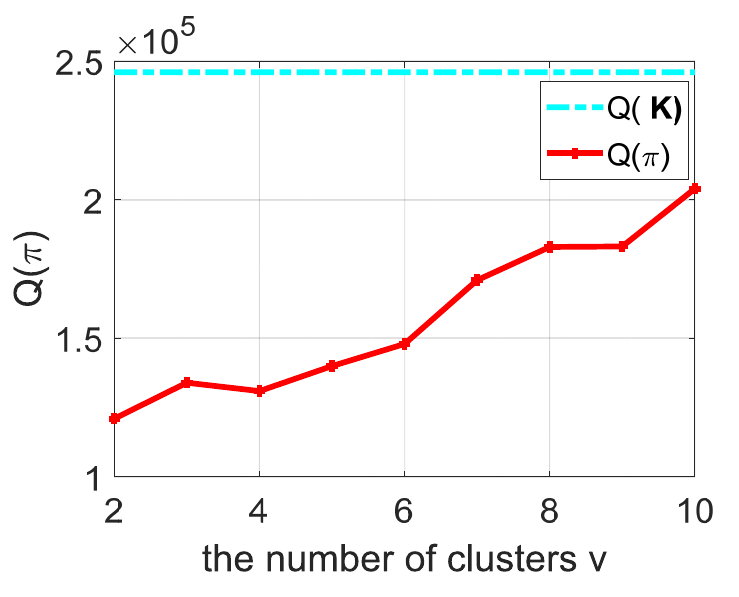}}
	\subfigure[\emph{covtype}]{
		\includegraphics[width=0.31\textwidth]{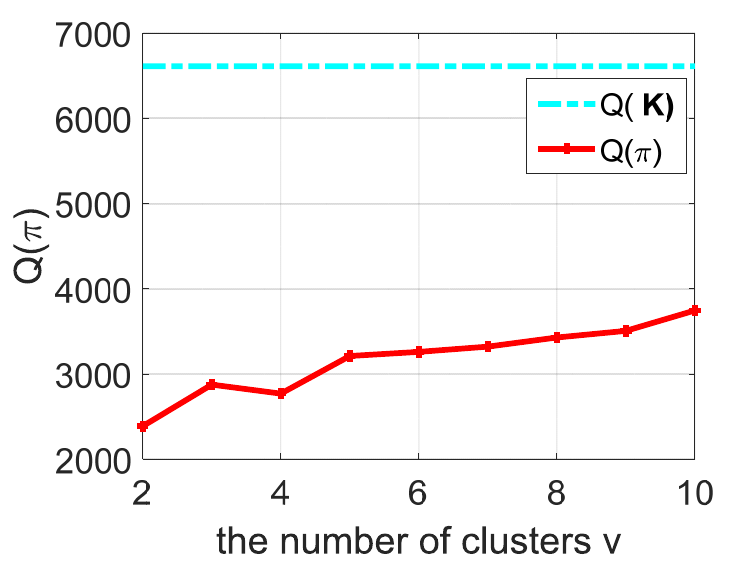}}
	\subfigure[non-support vectors]{\label{gradbound}
		\includegraphics[width=0.305\textwidth]{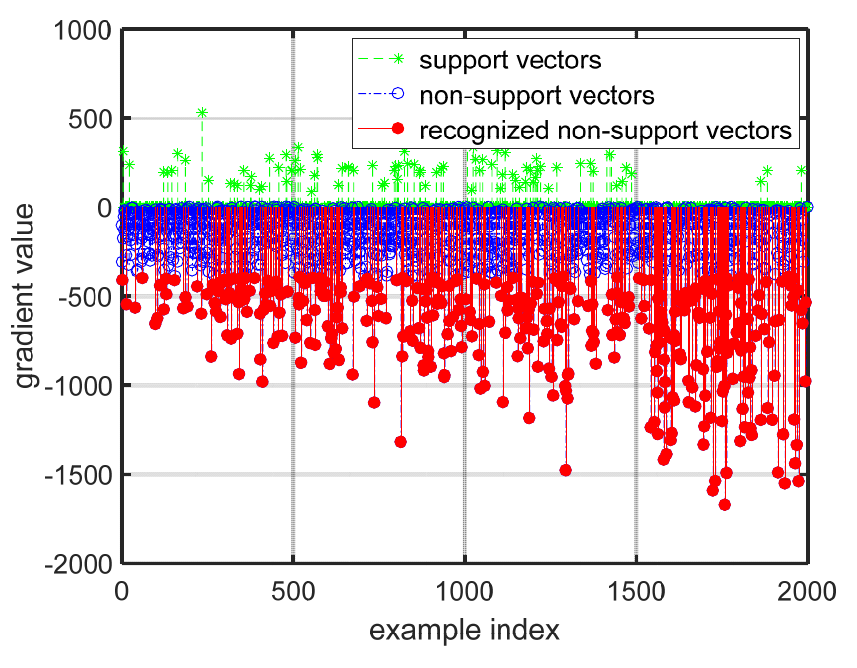}}
	\caption{Experimental validation of the derived bounds in Theorem~\ref{theorappf} and~\ref{theorsv}. }\label{thm4bound}
\end{figure}
Here we show that the derived bounds in Theorem~\ref{theorappf} and~\ref{theorsv} are tight in practice.

We firstly study the relationship between $Q(\pi)$ and $v$ in our model as shown in Figure~\ref{thm4bound}.
In our experiments, we randomly select 2,000 samples from the \emph{ijcnn1} and \emph{covtype} data set, and then group them into $v=2,3,\cdots,10$ clusters, respectively.
It can be observed that $Q(\pi)$ almost increases linearly with $v$ on these two data sets, which shows consistency with theoretical results in \citep{giraud2019partial} to some extent.
Further, if we consider more clusters, such as $v=100,200,\cdots$ (in practice, this situation is rare in clustering algorithms with only 2,000 samples but over 100 clusters), $Q(\pi)$ would increase slowly to approach to the upper bound $ Q(\bm K) := \sum_{i,j=1}^n |K_{ij}|$.

In Figure~\ref{gradbound}, we validate that the derived condition in Theorem~\ref{theorsv} is useful for screening non-support vectors.
We follow with the above experimental setting with $v=10$, and obtain the approximation solution $(\bar{\bm \alpha}, \bar{\bm F})$. 
Then we compute the gradient $\Big(\! \nabla_{\bm \alpha} \bar{H}(\bar{\bm \alpha}, \bar{\bm F}) \!\Big)_{\!i}$ associated with each sample and plot the support vectors (in green) and non-support vectors (in blue) as shown in Figure~\ref{gradbound}.
Under the condition of the condition~\eqref{gradsvc} in Theorem~\ref{theorsv}, the non-support vectors that satisfy $\Big(\! \nabla_{\bm \alpha} \bar{H}(\bar{\bm \alpha}, \bar{\bm F}) \!\Big)_{\!i} \leq (B \!+\! B_2)C \left(\| \bar{\bm K}_i \|_1 \!+\! \kappa \right) = 442.3$ can be directly recognized as non-support vectors (marked in red) without solving the optimization problem.
We find that in the total of \#SVs=214 (the number of support vectors) and \#non-SVs=1786 (the number of non-support vectors), there are 723 non-support vectors recognized by our Theorem~\ref{theorsv}.
That means, over 40\% non-support vectors can be picked out, which demonstrates the effectiveness of our derived bound/condition in Theorem~\ref{theorsv}.

\subsection{Regression}
This section focuses on the proposed DANK model embedded in SVR for regression tasks.
We firstly conduct the experiments on several synthetic data sets, to examine the performance of our method on recovering 1-D and 2-D test functions.
Then we evaluate our model with representative regression algorithms on real-world UCI data sets.\footnote{The data sets are available at \url{http://www.csie.ntu.edu.tw/~cjlin/libsvmtools/datasets/binary.html} }
The used evaluation metric here is relative mean square error (RMSE) between the learned regression function $\hat{g}(\bm x)$ and the target label $\bm y$ over $n$ data points
\begin{equation*}
\mbox{RMSE} = \frac{\sum_{i=1}^n\big(\hat{g}(\bm x_i) - y_i \big)^2}{\sum_{i=1}^{n} \big( y_i - \mathbb{E}(\bm y) \big)^2} \,.
\end{equation*}

\subsubsection{Synthetic Data}
Here we test the approximation performance of our method on 1-D and 2-D test functions compared with a baseline, the SVR with Gaussian kernel.
The representative 1-D step function is defined by
\begin{equation*}
g(s,w,a,x) = \bigg( \frac{\tanh \big( \frac{ax}{w} -a \lfloor \frac{x}{w}\rfloor - \frac{a}{2} \big) }{2 \tanh \big( \frac{a}{2} \big)} + \frac{1}{2} + \left\lfloor \frac{x}{w} \right\rfloor \bigg)s\,,
\end{equation*}
where $s$ is the step hight, $w$ is the period, and $a$ controls the smoothness of the function $g$.
In our experiment, $s,w$ and $a$ are set to 3, 2 and 0.05, respectively.
We plot the step function on $[-5,5]$ as shown in Figure~\ref{figstep}.
One can see that the approximation function generated by SVR-CV (blue dashed line) yields a larger deviation than that of our DANK model (red solid line).
To be specific, the RMSE of SVR is 0.013, while our DANK model achieves a promising approximation error with a value of 0.004.

\begin{figure*}[!htb]
	\centering
	\subfigure[recovering a step function]{\label{figstep}
		\includegraphics[width=0.32\textwidth]{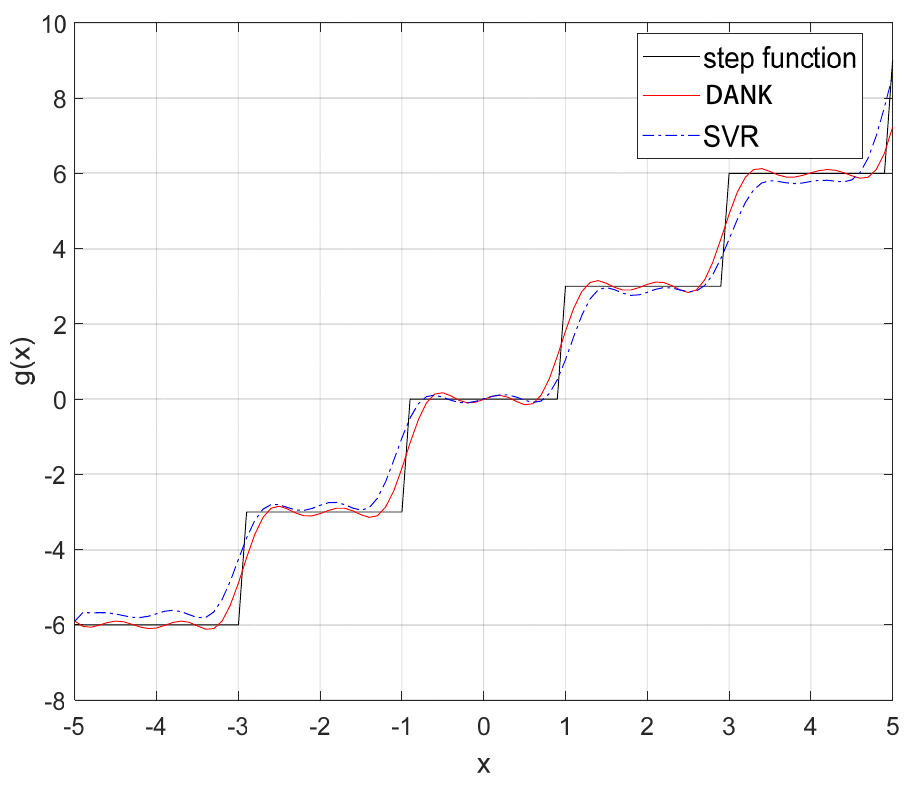}}
	\hspace{0.5cm}
	\subfigure[a 2-D function]{\label{fig2d}
		\includegraphics[width=0.32\textwidth]{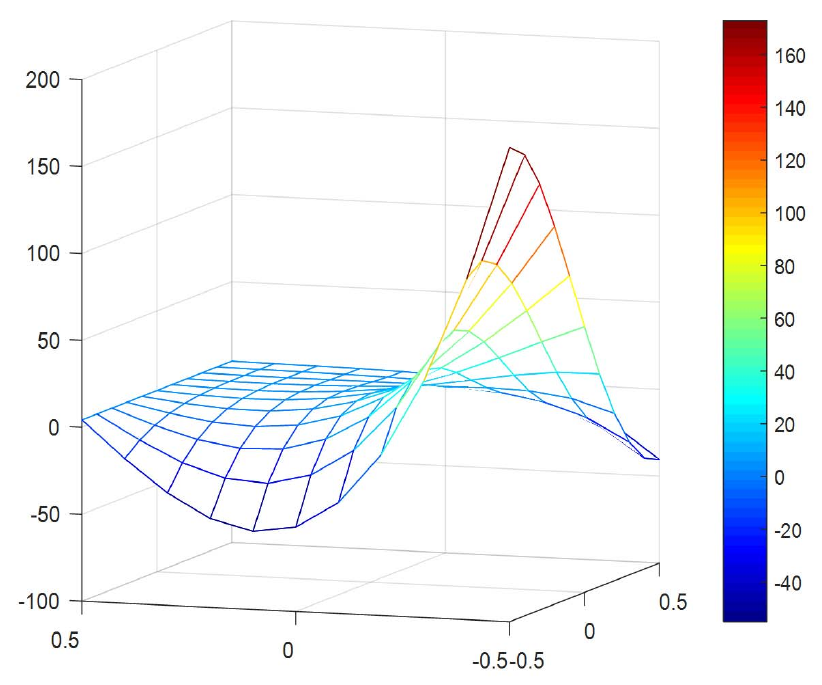}}
	\subfigure[recovered by SVR-CV]{\label{fig2dsvr}
		\includegraphics[width=0.32\textwidth]{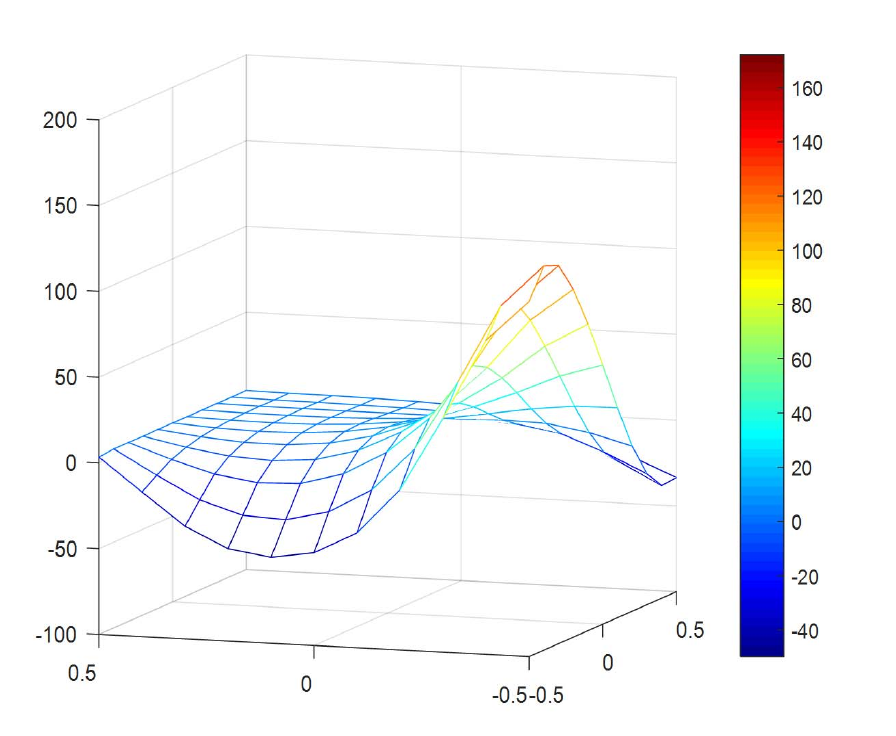}}
	\hspace{0.5cm}
	\subfigure[recovered by DANK]{\label{fig2dknpl}
		\includegraphics[width=0.32\textwidth]{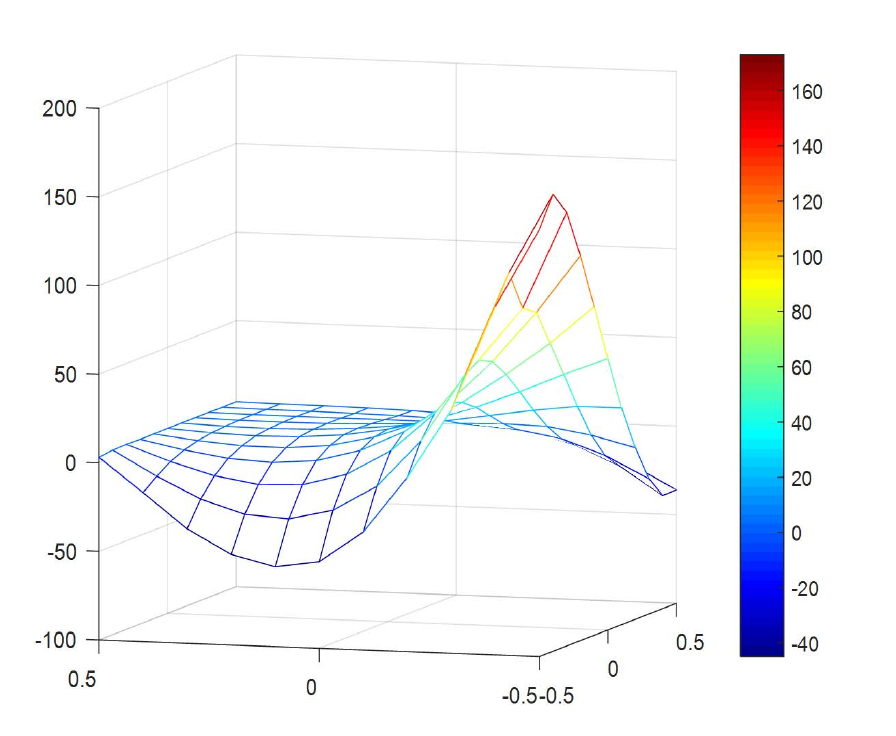}}
	\caption{Approximation for 1-D and 2-D functions by SVR-CV and our DANK model. }\label{figtoy}
\end{figure*}

Apart from the 1-D function, we use a 2-D test function to test SVR-CV and the proposed DANK model.
The 2-D test function \citep{cherkassky1996comparison} $g(u,v) \in [-0.5,0.5] \times [-0.5,0.5]$ is established as
\begin{equation*}
g(u,v) = 42.659\Big(0.1+(u - 0.5)\big( g_1(u, v) + 0.05\big) \Big)\,,
\end{equation*}
where $g_1(u, v)$ is defined by
\begin{equation*}
g_1(u, v) \!=\! (u - 0.5)^4 \!- 10(u \!- 0.5)^2 (v \!- 0.5)^2 \!+ 5 (v \!- 0.5)^4\,.
\end{equation*}
We uniformly sample 400 data points by $g(u,v)$ as shown in Figure~\ref{fig2d}, and then use SVR-CV and DANK to learn a regression function from the sampled data.
The regression results by SVR-CV and our DANK model are shown in Figure~\ref{fig2dsvr} and Figure~\ref{fig2dknpl}, respectively.
Intuitively, when we focus on the upwarp of the original function, our method is more similar to the test function than SVR-CV.
In terms of RMSE, the error of our method for regressing the test function is 0.007, which is lower than that of SVR-CV with 0.042. 

\begin{table*}
	\centering
	\scriptsize
	\caption{Comparison results of various methods on UCI regression data sets in terms of RMSE (mean$\pm$std. deviation). The best performance is highlighted in bold. The RMSE on the training data is presented by italic, and does not participate in ranking.  Notation ``$\bullet$" indicates that the method is significantly better than other methods via paired t-test at the 5\% significance level.}
	\begin{tabular}{cccccccccc}
		\toprule[1.5pt]
		\multirow{2}{*}{Data set} &\multirow{2}{*}{($d$, $n$)} &\multicolumn{1}{c}{BMKL } &\multicolumn{1}{c}{NW} &\multicolumn{2}{c}{SVR-CV} &\multicolumn{2}{c}{DANK}\cr
		\cmidrule(lr){3-4} \cmidrule(lr){5-6} \cmidrule(lr){7-8}
		{}&{}&Test&Test
		&Training&Test&Training&Test\cr
		\midrule[1pt]
		bodyfat &(14, 252)		&	0.101$\pm$0.065	&	0.097$\pm$0.010	&	\emph{0.007$\pm$0.000}	&	0.101$\pm$0.002	&	\emph{0.008$\pm$0.000}	&	{\bf 0.091}$\pm$0.013	\\
		\hline
		pyrim &(27, 74)	&	0.512$\pm$0.115	&	0.604$\pm$0.115	&	\emph{0.013$\pm$0.010}	&	0.678$\pm$0.255	&	\emph{0.007$\pm$0.004}	&	{\bf 0.457}$\pm$0.143$\bullet$	\\
		\hline
		space &(6, 3107)	&	0.248$\pm$0.107	 &	0.308$\pm$0.004	&	\emph{0.226$\pm$0.144}	&	0.261$\pm$0.112 &	\emph{	0.106$\pm$0.067}	&	{\bf 0.202}$\pm$0.114$\bullet$	\\
		\hline
		triazines &(60, 186)	&	{\bf 0.725}$\pm$0.124	&	0.885$\pm$0.039	&	\emph{0.113$\pm$0.084}	&	0.815$\pm$0.206	&	\emph{0.009$\pm$0.012}	&	{0.734$\pm$0.128}	\\
		\hline
		cpusmall	 &(12, 8912) 	&	0.133$\pm$0.004	& {\bf 0.037}$\pm$0.009	&	\emph{0.037$\pm$0.004}	&	0.104$\pm$0.002	&	\emph{0.001$\pm$0.000}	&	{0.114}$\pm$0.003	\\
		\hline
		housing &(13, 506)		&	0.286$\pm$0.027	&	{\bf 0.177}$\pm$0.015$\bullet$	&	\emph{0.085$\pm$0.049}	&	0.267$\pm$0.023	&	\emph{0.069$\pm$0.013}	&	0.218$\pm$0.013	\\
		\hline
		mg &(6, 1385)		&	0.297$\pm$0.013	&	{\bf 0.294}$\pm$0.010	&	\emph{0.163$\pm$0.076}	&	{0.407}$\pm$0.118	&	\emph{0.134$\pm$0.055}	&{0.303}$\pm$0.058\\
		\hline
		mpg &(7, 392)		&	0.187$\pm$0.014	&	0.182$\pm$0.012	&	\emph{0.095$\pm$0.087}	&	0.193$\pm$0.006	&	\emph{0.061$\pm$0.010 }&	{\bf 0.173}$\pm$0.008$\bullet$	\\
		\bottomrule[1.5pt]
	\end{tabular}
	\label{UCIreg}
\end{table*}

\subsubsection{Regression Results on UCI data sets}
We compare the proposed DANK model with other representative regression algorithms on eight data sets from the UCI database.
\cite{Gonen2012Bayesian} extend BMKL to regression tasks, and thus we include it for comparisons.
Apart from SVR-CV and BMKL, the Nadaraya-Watson (NW) estimator with metric learning \citep{noh2017generative} is also taken into comparison.
The remaining experimental settings follow with the classification tasks on the UCI database illustrated in Section~\ref{sec:classUCI}.

Table~\ref{UCIreg} lists a brief statistics of these eight data sets, and reports the average prediction accuracy and standard deviation of every compared algorithm.
Moreover, we present the regression performance of our DANK model and SVR-CV on the training data to show their
respective model flexibilities.
We find that, the proposed DANK model exhibits more encouraging performance than SVR-CV in terms of the RMSE on the training and test data.
From the results on four data sets including \emph{bodyfat}, \emph{pyrim}, \emph{space}, and \emph{mpg}, we observe that our method statistically achieves the best prediction performance.
On the remaining data sets, our DANK model achieves comparable performance with BMKL and NW. 

\section{Conclusion}
\label{sec:conclusion}
In this work, we propose an effective data-adaptive strategy to enhance the model flexibility for nonparametric kernel learning based algorithms.
In our DANK model, a multiplicative scale factor for each entry of the Gram matrix can be learned from the data, leading to an improved data-adaptive kernel.
As a result of such data-driven scheme, the model flexibility of our DANK model embedded in SVM (for classification) and SVR (for regression) is demonstrated by the experiments on synthetic and real data sets.
Besides, by introducing the low-rank constraint/regularizer and the bounded constraint/regularizer into our model, the learned kernel (matrix) exhibits a fast eigenvalue decay, which would be helpful to obtain good generalization properties. Accordingly, the used constraints/regularizers are able to provide a controllable trade-off between model flexibility and complexity.
Furthermore, our DANK model can be learned in an one-stage process with $\mathcal{O}(1/t^2)$ convergence rate due to the studied gradient-Lipschitz continuous property, where $t$ is the iteration number.
That means, we can simultaneously train the SVM or SVR along with optimizing the data-adaptive matrix by a projected gradient method with Nesterov's acceleration.
In addition, we develop a decomposed-based scalable approach to make our DANK model feasible to large data sets, of which the effectiveness has been verified by both experimental results and theoretical demonstration.

Extending our work to the case of deep kernel framework is an exciting avenue for future research, not limited to the ``shallow" kernel. When extending to a multi-layer framework, we could optimize the parameters in a layer-by-layer way during the training process.

\acks{
	This work was partly supported by National Key Research Development Project
	(No.2018AAA0100702), the National Natural Science Foundation of China (NSFC, No.61876107, U1803261, 61977046, 61973162), Committee of Science and Technology, Shanghai, China (No. 19510711200), the Fundamental Research Funds for the Central Universities (No: 30920032202), CCF-Tencent Open Fund (No: RAGR20200101), and the ``Young Elite Scientists Sponsorship Program" by CAST (No: 2018QNRC001).\\
	We thank Prof. Johan A.K. Suykens for some helpful suggestions on this paper.
	Jie Yang and Xiaolin Huang are corresponding authors.}

\appendices

\section{Proofs for Gradient-Lipschitz Continuity}
\subsection{Proofs of Lemma~\ref{gammabound}}
\label{proofgammabound}
\begin{proof}
	Since $\mathcal{J}_{\frac{\tau}{2}}$ is non-expansive \citep{Ma2011Fixed}, i.e., for any ${\bm \Omega}_1$ and ${\bm \Omega}_2$
	\begin{equation*}
	\| \mathcal{J}_{\frac{\tau}{2}}({\bm \Omega}_1) - \mathcal{J}_{\frac{\tau}{2}}({\bm \Omega}_2) \|_{\F} \leq \| {\bm \Omega}_1 - {\bm \Omega}_2 \|_{\F}\,,
	\end{equation*}
	with $\bm \Omega_1 := {\bm 1}{\bm 1}^{\top}+{\bm \Gamma}(\bm \alpha_1)$ and $\bm \Omega_2 := {\bm 1}{\bm 1}^{\top}+{\bm \Gamma}(\bm \alpha_2)$.
	Thereby, we have
	\begin{small}
		\begin{equation*}\label{boundgam}
		\begin{split}
		& \big\| \bm F(\bm \alpha_1) - \bm F(\bm \alpha_2)\big\|_{\F} \leq  \big\| {\bm \Gamma}(\bm \alpha_1) - {\bm \Gamma}(\bm \alpha_2) \big\|_{{\F}} \\
		&\!=\! \frac{1}{4\eta} \Big\|\! \diag\!\big({\bm \alpha_1}^{\!\!\!\top}\!\bm{Y}\big) \bm{K} \! \diag\!\big({\bm \alpha_1}^{\!\!\!\top}\!\bm{Y}\big) \!-\!\diag\!\big({\bm \alpha_2}^{\!\!\!\top}\!\bm{Y}\big) \bm{K} \! \diag\!\big({\bm \alpha_2}^{\!\!\!\top}\!\bm{Y}\big) \Big\|_{\F} \\
		&\!=\! \frac{1}{4\eta} \Big\| \diag\big({\bm \alpha_1}^{\!\top} \bm{Y} + {\bm \alpha_2}^{\!\top} \bm{Y}\big) \bm{K}\diag\big({\bm \alpha_1}^{\!\top}\bm{Y} - {\bm \alpha_2}^{\!\top} \bm{Y}\big)  \Big\|_{\F} \\
		& \!\leq\! \frac{1}{4\eta} \| \bm{K}\|_{\F} \Big\| \diag\!\big({\bm \alpha_1}^{\!\!\top} \!\bm{Y} \!+\! {\bm \alpha_2}^{\!\!\top}\! \bm{Y}\big) \Big\|_{\F} \Big\| \diag\big({\bm \alpha_1}^{\!\top} \bm{Y} \!-\! {\bm \alpha_2}^{\!\top} \bm{Y}\big) \Big\|_{\F} \\
		& \!\leq\! \frac{\|\bm{K}\|_\text{F}}{4\eta} \big\| {\bm \alpha_1} + {\bm \alpha_2} \big\|_2 \big\| {\bm \alpha_1} - {\bm \alpha_2} \big\|_2\,,
		\end{split}
		\end{equation*}
	\end{small}
	which yields the desired result.
\end{proof}

\subsection{Proofs of Theorem~\ref{theor}}
\label{prooftheor}
For notational simplicity, denote the shortcut $\bm F_1$ for $\bm F(\bm \alpha_1)$ and $\bm F_2$ for $\bm F(\bm \alpha_2)$. We will use them in the subsequent proof.
\begin{proof}
	The gradient of $h ({\bm \alpha})$ in Eq.~\eqref{falpha} is computed as
	\begin{equation}\label{gradf}
	\nabla h ({\bm \alpha}) = \bm{1} - \bm{Y}\big(\bm{F}({\bm \alpha})\odot\bm{K}\big)\bm{Y} {\bm \alpha}\,.
	\end{equation}
	It is obvious that $\bm{F}({\bm \alpha})$ is unique over the compact set $\mathcal{F}$.
	Hence, according to the differentiable property of the optimal value function \citep{penot2004differentiability}, for any $\bm \alpha_1$, $\bm \alpha_2$ $\in \mathcal{A}$, from the representation of $\nabla h(\bm \alpha)$ in Eq.~\eqref{gradf}, the function $h ({\bm \alpha})$ is proven to be gradient-Lipschitz continuous
	\begin{equation}\label{proof2}
	\begin{split}
	\| \nabla h(\bm \alpha_1) \!-\! \nabla h(\bm \alpha_2) \|_2 &=\big\| \bm{Y}\big(\bm{F}_1\odot\bm{K}\big)\bm{Y} {\bm \alpha_1} - \bm{Y}\big(\bm{F}_2\odot\bm{K}\big)\bm{Y} {\bm \alpha_2} \big\|_2 \\
	& = \big\| \bm{Y}\big(\bm{F}_1\! -\! \bm{F}_2 \big)\odot\bm{K}\bm{Y} {\bm \alpha_1} \!- \!\bm{Y}\big(\bm{F}_2\odot\bm{K}\big)\bm{Y} \big( {\bm \alpha_2}\!-\! {\bm \alpha_1} \big) \big\|_2 \\
	& \!\leq\! \big\| \bm{Y}\big(\bm{F}_1 - \bm{F}_2 \big)\odot\bm{K}\bm{Y} {\bm \alpha_1} \big\|_2 +  \big\| \bm{Y}\big(\bm{F}_2\odot\bm{K}\big)\bm{Y} \big( {\bm \alpha_2}- {\bm \alpha_1} \big) \big\|_2 \\
	&\! \leq\! \big\| \big(\bm{F}_1 - \bm{F}_2 \big)\odot\bm{K} \big\|_{2} \|  {\bm \alpha_1}\|_2 + \big\| \bm{F}_2\odot\bm{K} \big\|_{2} \| {\bm \alpha_1}- {\bm \alpha_2} \|_2~~~\\
	& \!\leq\! \kappa \big\| \bm{F}_1 - \bm{F}_2 \big\|_{\F}  \|  {\bm \alpha_1}\|_2 + \kappa \| \bm{F}_2 \|_{2}  \| {\bm \alpha_1}- {\bm \alpha_2} \|_2 \\
	&\! \leq \frac{ \kappa \|\bm{K}\|_{\F}}{4\eta} \|  {\bm \alpha_1} \|_2 \big(  \| {\bm \alpha_1} \|_2 + \| {\bm \alpha_2} \|_2 \!\big) \| {\bm \alpha_1}- {\bm \alpha_2} \|_2 +\kappa \lambda_{\max}(\bm F_2) \| {\bm \alpha_1}- {\bm \alpha_2} \|_2 \\
	& \!\leq\! \kappa \Big( n + \frac{ 3nC^2 \|\bm{K}\|_{\F}}{4\eta} \Big) \| {\bm \alpha_1}- {\bm \alpha_2} \|_2\,,
	\end{split}
	\end{equation}
	where the third inequality holds by $\| \bm{A} \odot \bm{B}\|_{2} \leq \| \bm A \|_2 \max_{ij} |B_{ij}| $ when $\bm B$ is PSD \citep{johnson1990matrix}.
	The fourth equality admits due to Lemma \ref{gammabound} and $\bm F_2 \in \mathcal{F}$.
	The last equality is achieved by $0 \leq {\bm \alpha} \leq C,~\|{\bm \alpha}\|^2_2\leq nC^2$ and  Lemma~\ref{fbound}.
	Hence, we conclude the proof.
\end{proof}

\subsection{Proofs of Theorem~\ref{theorsvr}}
\label{prooftheorsvr}
\begin{proof}
	For any $\hat{\bm \alpha}_1, \check{\bm \alpha}_1, \hat{\bm \alpha}_2, \check{\bm \alpha}_2 \in \mathcal{A}$, from the representation of $\nabla_{\hat{\bm \alpha}} h(\hat{\bm \alpha}, \check{\bm \alpha})$ in Proposition \ref{theorgradsvr}, we have
	\begin{equation}\label{proof2svr}
	\begin{split}
	&\Big\| \frac{\partial {{{h}(\hat{\bm \alpha}, \check{\bm \alpha}_1)}}}{\partial \hat{\bm \alpha}}\!\big|_{\hat{\bm \alpha} = \hat{\bm \alpha}_1} \!\!- \frac{\partial {{{h}(\hat{\bm \alpha}, \check{\bm \alpha}_2)}}}{\partial \hat{\bm \alpha}}\!\big|_{\hat{\bm \alpha} \!=\! \hat{\bm \alpha}_2} \Big \|_2 \!=\! \big\| \big({\bm{F}}_1 \!-\! {\bm{F}}_2 \big)\odot\bm{K} \big({\hat{\bm \alpha}_1 \!-\! \check{\bm \alpha}_1}\big) \!-\! \big({\bm{F}}_2\odot\bm{K}\big) \big( \hat{\bm \alpha}_2 \!-\! \check{\bm \alpha}_2 \!-\! \hat{\bm \alpha}_1 \!+\! \check{\bm \alpha}_1 \big) \big\|_2 \\
	&~~~~ \leq \kappa \big\| {\bm{F}}_1 - {\bm{F}}_2 \big\|_{\F} \| {\hat{\bm \alpha}_1- \check{\bm \alpha}_1}\|_2 + \kappa \| \bm{F}_2 \|_{2} \| \hat{\bm \alpha}_2- \check{\bm \alpha}_2 - \hat{\bm \alpha}_1+ \check{\bm \alpha}_1 \|_2 \\
	&~~~~ \leq \kappa \bigg( \frac{ \|\bm{K}\|_{\F}}{4\eta} \|  {\hat{\bm \alpha}_1- \check{\bm \alpha}_1} \|_2 \| \hat{\bm \alpha}_2- \check{\bm \alpha}_2 + \hat{\bm \alpha}_1- \check{\bm \alpha}_1 \|_2 +\lambda_{\max}(\bm F_2) \bigg) \| \hat{\bm \alpha}_2- \check{\bm \alpha}_2 - \hat{\bm \alpha}_1+ \check{\bm \alpha}_1  \|_2 \\
	&~~~~ \leq \kappa \bigg( \frac{ 8nC^2\|\bm{K}\|_{\F}}{4\eta} +n+\frac{nC^2}{4\eta}\lambda_{\max}(\bm K) \bigg) \bigg( \| \hat{\bm \alpha}_2 - \hat{\bm \alpha}_1 \|_2 +\| \check{\bm \alpha}_2 -\check{\bm \alpha}_1  \|_2 \bigg) \\
	&~~~~ \leq \kappa \Big( n + \frac{ 9nC^2 \|\bm{K}\|_{\F}}{4\eta} \Big)   \bigg( \| \hat{\bm \alpha}_2 - \hat{\bm \alpha}_1 \|_2 +\| \check{\bm \alpha}_2 - \check{\bm \alpha}_1  \|_2 \bigg)\,.
	\end{split}
	\end{equation}
	Similarly, $\Big\| \frac{\partial {{{h}(\hat{\bm \alpha}_1, \check{\bm \alpha})}}}{\partial \check{\bm \alpha}}\!\big|_{\check{\bm \alpha} = \check{\bm \alpha}_1} \!\!- \frac{\partial {{{h}(\hat{\bm \alpha}_2, \check{\bm \alpha})}}}{\partial \check{\bm \alpha}}\!\big|_{\check{\bm \alpha} = \check{\bm \alpha}_2} \Big \|_2 $ can also be bounded.
	Combining these two inequalities, we have
	\begin{equation}\label{gradsvrsub}
	\begin{split}
	\|\nabla_{ \!\tilde{\bm \alpha}_1} \! {h}(\hat{\bm \alpha}_1, \! \check{\bm \alpha}_1 \!) \! - \! \nabla_{ \!\tilde{\bm \alpha}_2}  \! {h}( \hat{\bm \alpha}_2, \! \check{\bm \alpha}_2 ) \|_2
	& \leq \Big\| \frac{\partial {{{h}(\hat{\bm \alpha}, \check{\bm \alpha}_1)}}}{\partial \hat{\bm \alpha}}\!\big|_{\hat{\bm \alpha} = \hat{\bm \alpha}_1} \!\!- \frac{\partial {{{h}(\hat{\bm \alpha}, \check{\bm \alpha}_2)}}}{\partial \hat{\bm \alpha}}\!\big|_{\hat{\bm \alpha} = \hat{\bm \alpha}_2} \Big \|_2 \\
	& + \Big\| \frac{\partial {{{h}(\hat{\bm \alpha}_1, \check{\bm \alpha})}}}{\partial \check{\bm \alpha}}\!\big|_{\check{\bm \alpha} = \check{\bm \alpha}_1} \!\!- \frac{\partial {{{h}(\hat{\bm \alpha}_2, \check{\bm \alpha})}}}{\partial \check{\bm \alpha}}\!\big|_{\check{\bm \alpha} = \check{\bm \alpha}_2} \Big \|_2 \\
	& \leq 2L  \Big(  \! \| \hat{\bm \alpha}_2  \!- \! \hat{\bm \alpha}_1 \|_2  \!+ \! \| \check{\bm \alpha}_2  \!- \! \check{\bm \alpha}_1  \|_2  \! \Big)\,,
	\end{split}
	\end{equation}
	which completes the proof.
\end{proof}

\section{Proofs in Large Scale Situations}
\subsection{Proofs of Lemma~\ref{lemmalink}}
\label{prooflemmalink}
\begin{proof}
	The quadratic term with respect to $\bm \alpha$ in Eq.~\eqref{mainls} can be decomposed into
	\begin{equation*}
	\bm \alpha^{\!\!\top}\!\bm Y\! \big(\bm{F}\odot \bar{\bm{K}} \big)\bm{Y}\! {\bm \alpha} \!=\! \sum_{c=1}^{v}  {\bm \alpha^{(c)}}^{\!\!\top}\bm{Y}^{(c,c)}\big(\bm{F}^{(c,c)}\odot\bm{K}^{(c,c)}\big) \bm{Y}^{(c,c)}\! {\bm \alpha}^{(c)}\,,
	\end{equation*}
	and $\| \bm{F} - \bm{1} \bm{1}^{\!\top} \|_{{\F}}^2$ can be expressed as
	\begin{equation*}
	\| \bm{F} - \bm{1} \bm{1}^{\!\top} \|_{{\F}}^2 = \sum_{i,j=1}^{n} (F_{ij}-1)^2 = \sum_{c=1}^{v} \| \bm F^{(c,c)} - \bm{1} \bm{1}^{\top} \|_{{\F}}^2 + \mbox{Const}\,,
	\end{equation*}
	where the constant is the sum of non-block-diagonal elements of $\bar{\bm F}$, and it does not affect the solution of Eq.~\eqref{mainlslemma}.
	Specifically, the positive semi-definiteness on $\bar{\bm F}^{(c,c)}$ with $c=1,2,\dots,v$ still guarantees that the whole matrix $\bar{\bm F}$ is PSD.
	Besides, the constraint in Eq.~\eqref{mainlslemma} is also decomposable, so the subproblems are separable and independent.
	As a result, the concatenation of their optimal solutions yields the optimal solution of Eq.~\eqref{mainlslemma}.
\end{proof}

\subsection{Proofs of Theorem~\ref{theorappf}}
\label{prooftheorappf}
\begin{proof}
	Denote the objective function in Eq.~\eqref{mainlslemma} with the Gram matrix $\bar{\bm K}$ as $\bar{H}(\bm \alpha, \bm F)$, its optimal solution $(\bar{\bm \alpha},\bar{\bm{F}})$ is indeed a saddle point due to the max-min problem in Eq.~\eqref{mainlslemma}.
	It is easy to check $\bar{H}({\bm \alpha},\bar{\bm{F}}) \leq \bar{H}(\bar{{\bm \alpha}},\bar{\bm{F}}) \leq \bar{H}(\bar{{\bm \alpha}},\bm{F})$ for any feasible $\bm \alpha$ and $\bm F$.
	
	Defining $H(\bm \alpha^*, \bm F^*)$ and $\bar{H}(\bm \alpha^*, \bm F^*)$, we can easily obtain
	\begin{equation}\label{hbarstar}
	\bar{H}(\bm \alpha^*, \bm F^*) \!-\! H(\bm \alpha^*, \bm F^*) \!= \! \frac{1}{2} \sum_{i,j:\pi(\bm x_i) \neq \pi(\bm x_j) }^{n} \!  \alpha_i^*
	\alpha_j^* y_i y_j  F^*_{ij}  K_{ij}\,,
	\end{equation}
	with $ F^*_{ij} := \bm F^*(\bm x_i, \bm x_j)$.
	Similarly, we have
	\begin{equation*}\label{hbarbara}
	\bar{H}(\bar{\bm \alpha}, \bm F^*)\! - \!H(\bar{\bm \alpha}, \bm F^*) \!=\! \frac{1}{2} \sum_{i,j:\pi(\bm x_i) \neq \pi(\bm x_j) }^{n} \bar{\alpha}_i
	\bar{\alpha}_j y_i y_j F^*_{ij} K_{ij}\,.
	\end{equation*}
	Therefore, combining above equations, the upper bound of $H(\bm \alpha^*, \bm F^*) - {H}(\bar{\bm \alpha}, \bar{\bm F})$ can be derived by
	\begin{equation*}\label{process1}
	\begin{split}
	H(\bm \alpha^*, \bm F^*)& \leq \bar{H}(\bar{\bm \alpha}, \bm F^*) - \frac{1}{2} \sum_{i,j:\pi(\bm x_i) \neq \pi(\bm x_j) }^{n} \alpha_i^*
	\alpha_j^* y_i y_j F^*_{ij} K_{ij} \\
	&\! = \!H(\bar{\bm \alpha}, \bm F^*) \!-\! \frac{1}{2} \sum_{i,j:\pi(\bm x_i) \neq \pi(\bm x_j) }^{n} \Big( \alpha_i^*
	\alpha_j^* - \bar{\alpha}_i
	\bar{\alpha}_j \Big) y_i y_j F^*_{ij} K_{ij}\\
	& \!\leq\! {H}(\bar{\bm \alpha}, \bar{\bm F}) -  \frac{1}{2} \sum_{i,j:\pi(\bm x_i) \neq \pi(\bm x_j) }^{n} \Big(\alpha_i^*
	\alpha_j^* - \bar{\alpha}_i
	\bar{\alpha}_j \Big) y_i y_j F^*_{ij} K_{ij}\\
	&\! \leq \! {H}(\bar{\bm \alpha}, \bar{\bm F}) + \frac{1}{2}BC^2 Q(\pi)\,,
	\end{split}
	\end{equation*}
	where the first inequality holds by $\bar{H}(\bm \alpha^*, \bm F^*) \leq \bar{H}(\bar{\bm \alpha}, \bm F^*)$ and Eq.~\eqref{hbarstar}.
	The second inequality admits by $H(\bar{\bm \alpha}, \bm F^*) \leq {H}(\bar{\bm \alpha}, \bar{\bm F})$.
	Similarly, $H(\bm \alpha^*, \bm F^*) - {H}(\bar{\bm \alpha}, \bar{\bm F})$ is lower bounded by
	\begin{equation*}
	\begin{split}
	H(\bm \alpha^*, \bm F^*) &= \bar{H}({\bm \alpha}^*, \bm F^*) \!-\! \frac{1}{2} \sum_{i,j:\pi(\bm x_i) \neq \pi(\bm x_j) }^{n} \alpha_i^*
	\alpha_j^* y_i y_j F^*_{ij} K_{ij} \\
	& \geq \bar{H}({\bm \alpha}^*, \bar{\bm F}) - \frac{1}{2} \sum_{i,j:\pi(\bm x_i) \neq \pi(\bm x_j) }^{n} \alpha_i^*
	\alpha_j^* y_i y_j F^*_{ij} K_{ij} \\
	& = {H}({\bm \alpha}^*, \bar{\bm F}) + \frac{1}{2} \sum_{i,j:\pi(\bm x_i) \neq \pi(\bm x_j) }^{n} \alpha_i^*
	\alpha_j^* y_i y_j (\bar{ F}_{ij} - F^*_{ij} ) K_{ij} \\
	& \geq {H}(\bar{\bm \alpha}, \bar{\bm F}) + \frac{1}{2} \sum_{i,j:\pi(\bm x_i) \neq \pi(\bm x_j) }^{n} \alpha_i^*
	\alpha_j^* y_i y_j (\bar{F}_{ij} - F^*_{ij} ) K_{ij} \\
	& \geq {H}(\bar{\bm \alpha}, \bar{\bm F}) - \frac{1}{2}BC^2 Q(\pi)\,,
	\end{split}
	\end{equation*}
	which concludes the proof that $\left| H(\bm \alpha^*, \bm F^*) - H(\bar{\bm \alpha},\!\bar{\bm{F}}) \right| \leq \frac{1}{2}BC^2Q(\pi)$.
\end{proof}

\subsection{Proofs of Theorem~\ref{theorsv}}
\label{prooftheorsv}
\begin{proof}
	We decompose  $\nabla_{\bm \alpha} {H}({\bm \alpha}^*, \bm F^*)$ into
	\begin{equation*}
	\begin{split}
	\Big(\! \nabla_{\bm \alpha} {H}({\bm \alpha}^*, \bm F^*) \!\Big)_i  &= \Big(\! \nabla_{\bm \alpha} \bar{H}(\bar{\bm \alpha}, \bar{\bm F}) \! \Big)_{\!i}
	+ \Big(\!\bm Y(\bm F^* \odot \bar{\bm K} )\bm Y {\Delta \bm \alpha} \!\Big)_i
	+ \Big(\bm Y(\Delta \bm F \odot \bar{\bm K} )\bm Y \bar{\bm \alpha} \Big)_i \\
	& \quad - \sum_{j:\pi(\bm x_i) \neq \pi(\bm x_j) }^{n} \alpha^*_j
	y_i y_j F^*_{ij} K_{ij} \\
	& \leq \Big( \nabla_{\bm \alpha} \bar{H}(\bar{\bm \alpha}, \bar{\bm F}) \Big)_i +B_2 \kappa C \!+\!(B_2\!-\!B_1) \kappa C + B_2 C \| \bar{\bm K}_i \|_1  \\
	& \leq \Big( \nabla_{\bm \alpha} \bar{H}(\bar{\bm \alpha}, \bar{\bm F}) \Big)_i \!+ B_2C \| \bar{\bm K}_i \|_1 + (B\!+\!B_2) \kappa C \! \\
	& \leq \Big( \nabla_{\bm \alpha} \bar{H}(\bar{\bm \alpha}, \bar{\bm F}) \Big)_i + (B + B_2)C (\| \bm K \|_1 + \kappa) \,,
	\end{split}
	\end{equation*}
	where $\bm K_i$ denotes the $i$-th column of the kernel matrix $\bm K$.
	We require $\big( \nabla_{\bm \alpha} {H}({\bm \alpha}^*, \bm F^*) \big)_i  \leq 0$ when $\bar{\bm \alpha}_i =0$.
	To this end, we need the right-hand of the above inequality is smaller than zero.
	As a result, we can conclude that ${\bm \alpha}_i =0$ from the optimality condition of problem~\eqref{mainls}.
\end{proof}


\newpage

\vskip 0.2in

\end{document}